\definecolor{darkgreen}{rgb}{0.0, 0.6, 0.0}
\renewcommand*{\backref}[1]{}
\renewcommand*{\backrefalt}[4]{%
\ifcase #1 %
  No citations.%
\or
  (p. #2.)%
\else
  (pp. #2.)%
\fi}%
\title{
Rethinking Langevin Thompson Sampling from A Stochastic Approximation Perspective 
}
\author{
    Weixin Wang\thanks{ 
    Duke University; email: {\tt
    weixin.wang@duke.edu}}~\footnotemark[3] 
    ~~
    Haoyang Zheng\thanks{ 
    Purdue University; email: {\tt
    zheng528@purdue.edu}}~\thanks{Equal contribution} 
    ~~
    Guang Lin\thanks{
    Purdue University; email: {\tt
    guanglin@purdue.edu}}
    ~~
    Wei Deng\thanks{
    Morgan Stanley; email: {\tt
    wei.deng@morganstanley.com}}
    ~~
    Pan Xu\thanks{
    Duke University; email: {\tt
    pan.xu@duke.edu}}\\
}
\begin{document}

\maketitle

\begin{abstract}

Most existing approximate Thompson Sampling (TS) algorithms for multi-armed bandits use Stochastic Gradient Langevin Dynamics (SGLD) or its variants in each round to sample from the posterior, relaxing the need for conjugacy assumptions between priors and reward distributions in vanilla TS. However, they often require approximating a different posterior distribution in different round of the bandit problem. This requires tricky, round-specific tuning of hyperparameters such as dynamic learning rates, causing challenges in both theoretical analysis and practical implementation. To alleviate this non-stationarity, we introduce TS-SA, which incorporates stochastic approximation (SA) within the TS framework. In each round, TS-SA constructs a posterior approximation only using the most recent reward(s), performs a Langevin Monte Carlo (LMC) update, and applies an SA step to average noisy proposals over time. This can be interpreted as approximating a stationary posterior target throughout the entire algorithm, which further yields a fixed step-size, a unified convergence analysis framework, and improved posterior estimates through temporal averaging. We establish near-optimal regret bounds for TS-SA, with a simplified and more intuitive theoretical analysis enabled by interpreting the entire algorithm as a simulation of a stationary SGLD process. Our empirical results demonstrate that even a single-step Langevin update with certain warm-up outperforms existing methods substantially on bandit tasks.

\end{abstract}

\section{Introduction}
\label{sec:intro}

The multi-armed bandit (MAB) problem \citep{lai1987adaptive,slivkins2019introduction, lattimore2020bandit} formalizes the fundamental exploration-exploitation trade-off in sequential decision-making, where an agent repeatedly selects actions under uncertainty to maximize cumulative rewards. One prominent approach to address this trade-off is Thompson Sampling (TS) \citep{thompson1933likelihood}, which maintains a posterior over reward models and choose actions by sampling from that posterior. However, exact posterior sampling is often intractable without closed‐form expressions. To address this, researchers resort to Laplace approximations around the MAP estimate under Gaussian priors and likelihoods \citep{agrawal2017near, jin2021mots, clavier2023vits}, and some methods extend to exponential family models \citep{korda2013thompson, garivier2016explore, jin2022finite, jin2023thompson}. However, they are not applicable to general non-conjugate settings. Such restrictive assumptions significantly limit the practicality of TS when faced with general non‐conjugate or highly non‐linear reward distributions.

To overcome these limitations, recent studies have explored approximate posterior sampling techniques. Notably, \citet{mazumdar2020thompson} proposed Thompson Sampling with Stochastic Gradient Langevin Dynamics (TS-SGLD), an approximate sampling extension of TS for general distribution assumptions. By employing Stochastic Gradient Langevin Dynamics (SGLD) within each subround, \citet{mazumdar2020thompson} obtain approximate samples through Langevin Monte Carlo (LMC) update and establish rigorous theoretical guarantees. However, TS-SGLD has several algorithmic drawbacks: (1) it maintains a dynamically evolving posterior, updating each round from the growing reward history—unlike fixed-dataset Bayesian inference, this continually changes the sampling target; (2) the dynamic posterior forces careful per-round tuning of the SGLD step-size; (3) each round initializes from the previous round’s final sample, but because the target posterior changes, the SGLD trajectory is round-specific and mutually decoupled, so analyzing overall convergence requires an additional inductive argument and substantially increases theoretical complexity; (4) it uses only the last sample per round for decisions, inducing high variance.

The main goal of this work is to avoid the dynamically evolving target posterior in TS-SGLD, which poses significant theoretical challenges and empirically requires careful step-size tuning. Our proposed algorithm, TS-SA, introduces a sampling framework based on stochastic approximation (SA), drawing on the approximate sampling scheme in \citet{mazumdar2020thompson} and the stability benefits of averaging in \citet{agrawal2017near}. Like TS-SGLD, TS-SA is an approximate posterior sampler, but it tackles a different (and simpler) sampling problem: instead of chasing a posterior that changes after every pull and becomes more concentrated as data accumulate, TS-SA replaces the evolving posterior with a single fixed target distribution. This stationary view lets all rounds share one sampling problem and supports a constant global step size $h$ without per-round retuning. In each round we take a Langevin step informed by the most recent rewards and time-average the parameter iterates, which reduces decision variance, improves early-stage robustness, and enables a unified convergence argument. In short, TS‑SA focuses on producing an accurate estimate of the reward‑model parameter $\btheta^*$ for regret minimization rather than tracking an dynamically evolving posterior. 
\textbf{Our contributions} are summarized as follows:

\begin{itemize}[leftmargin=10pt, nosep]
    \item \textbf{Algorithm design:} We introduce Thompson Sampling with Stochastic Approximation (TS-SA), a new approximate Thompson sampling algorithm that time-averages its SGLD samples to simulate draws from a stationary target posterior.

    \item \textbf{Theoretical guarantees:} We provide rigorous theoretical analysis to establish near-optimal regret bounds $\widetilde{\mathcal{O}}(\sqrt{KT})$ for TS-SA, where $K, T$ are the number of arms and rounds, respectively. Our theoretical analysis is simplified and more intuitive by interpreting the entire algorithm as a simulation of a stationary SGLD process than existing approximate TS methods.

    \item \textbf{Empirical validation:} Empirical evaluations across diverse simulation settings further validate that even simplified implementations, such as a single-step LMC update with reasonable warm-up, can outperform existing MAB algorithms substantially. 
\end{itemize}

\section{Related Work}

\paragraph{Randomized Exploration.} In MABs, TS \citep{agrawal2017near} is a pivotal randomized exploration strategy, achieving near-optimal regret bounds of $\mathcal{O}(\sqrt{K T \ln T})$ for Beta priors and $\mathcal{O}(\sqrt{K T \ln K})$ for Gaussian priors, closely matching the theoretical lower bound of $\Omega(\sqrt{K T})$, which can be achieved by variants of TS such as the MOTS \citep{jin2021mots} and $\epsilon$-TS \citep{jin2023thompson} algorithms. Unlike Upper Confidence Bound (UCB) algorithms, which rely on deterministic confidence intervals \citep{lattimore2020bandit}, TS samples from posterior distributions, enabling flexible prior integration and robust exploration \citep{thompson1933likelihood,chapelle2011empirical}. However, exact posterior sampling in TS is computationally intensive, particularly for non-conjugate priors, and its theoretical guarantees may falter in complex environments \citep{russo2019worst}. To address this, approximate sampling methods such as LMC \citep{xu2022langevin,karbasi2023langevin}, Metropolis-Hastings \citep{huix2023tight}, variational inference \citep{clavier2023vits}, and SGLD variants \citep{mazumdar2020thompson,zheng2024accelerating} have been developed to preserve competitive regret bounds
for problem-dependent settings, which are further extended to reinforcement learning problems \citep{ishfaq2023provable,ishfaq2024more,hsu2024randomized}. Perturb History Exploration (PHE) further enhances efficiency by perturbing historical data to approximate posterior sampling, offering applicability to complex reward distributions \citep{kveton2019randomized}. Ensemble sampling maintains a small set of independently perturbed model replicas and selects each action according to one randomly chosen replica \citep{lu2017ensemble,qin2022analysis, janz2024ensemble, lee2024improved}. These approximate methods, however, risk sampling biases that may cause under- or over-exploration, potentially degrading performance \citep{phan2019thompson}. Our work propose a TS variant employing the SA technique that blends approximate sampling from \citet{mazumdar2020thompson} and averaging structure from \citet{agrawal2017near}, achieving near-optimal regret and good empirical performance.

\paragraph{Stochastic Approximation.} Stochastic approximation provides a standard framework for analyzing adaptive sampling algorithms that iteratively alternate between sampling and averaging \citep{robbins1951stochastic}. Following this, \citet{benveniste2012adaptive} systematically developed adaptive algorithms to average out the Markovian noise; \citet{borkar2000ode} offered rigorous convergence analysis from the perspective of ordinary differential equations; \citet{borkar2008stochastic} presented intuitive stability criteria based on dynamical systems theory and Lyapunov functions. These theoretical advances have led to SA’s broad adoption across numerous domains, including MCMC~\citep{liang2007stochastic,levin2017markov}, dynamic programming~\citep{bertsekas1996neuro,haddad2008nonlinear}, reinforcement learning~\citep{tsitsiklis1994asynchronous,srikant2019finite,borkar2000ode,durmus2024finite}, optimization~\citep{harold1997stochastic,spall2005introduction,kushner2012stochastic,lan2020first}. To stabilize posterior updates in the presence of noisy reward feedback, our approach leverages SA to explicitly average out reward noise over recent observations. This contrasts with single-sample updates in existing LMC-based methods \citep{mazumdar2020thompson}, which are more sensitive to stochastic variability.

\section{Preliminaries}
\label{sec:prelim}

\paragraph{Multi-armed Bandit.} We consider the MAB problem with $K$ arms, indexed by the set $\mathcal{A} = \{1, 2, \ldots, K\}$. Each arm $a \in \mathcal{A}$ is associated with an unknown reward distribution $p_a(X) = p_a(X; \btheta^*_a)$, with the corresponding expected reward $\bar{X}_a$. Here $p_a(X)$ is parameterized by a latent variable $\btheta^*_a \in \mathbb{R}^{d}$, which is unknown to the agent. We further assume that for each arm $a$, there exists known bandit features $\bphi_a \in \mathbb{R}^{d}$ with $\|\bphi_a\| = B_a$, such that: $\bar{X}_a = \mathbb{E}_{X \sim p_a(X; \btheta^*_a)}[X] = \langle\bphi_a, \btheta^*_a\rangle$. At each round $t = 1, 2, \ldots, T$, the agent selects an arm $A_t$ and receives a reward ${X}_{A_t, t} \sim p_{A_t}(X)$. The goal is to minimize the cumulative regret $\mathcal{R}(T) = T \bar{X}_1 - \sum_{t=1}^T \mathbb{E}[{X}_{A_t, t}]$,
where we assume arm $1$ to be the optimal arm with largest expected reward without loss of generality.

\begin{wrapfigure}{r}{0.52\textwidth}
\vspace{-22pt}
\begin{minipage}{\linewidth}
\begin{algorithm}[H]
\caption{Thompson Sampling in MAB \label{alg:thompson_main}}
\textbf{Input:} Bandit features $\bphi_{a}$

\begin{algorithmic}[1]
    \FOR{{$t=1,2,\cdots,T$}}

        \STATE {Sample ${\btheta}_{a, t} \sim \mu_a[\tau_a]$ for $\forall a \in \mathcal{A}$}

        \STATE {Choose arm $A_t = \text{argmax}_{a\in\mathcal A} \langle \bphi_a, {\btheta}_{a, t}\rangle$}
        
        \STATE {Play arm $A_t$ and receive reward ${X}_{A_t, t}$}
        
        \STATE {Update posterior of $A_t$: $\mu_{A_t}[\tau_{A_t}]$}  

    \ENDFOR
\end{algorithmic}
\end{algorithm}
\end{minipage}
\vspace{-10pt}
\end{wrapfigure} 
\paragraph{Thompson Sampling.} To balance the exploration and exploitation, TS maintains an estimated posterior over the reward-generating parameter $\btheta^*_a$ of each arm $a$.
A general framework of TS is shown in \Cref{alg:thompson_main}. At each round $t$, the algorithm samples a candidate parameter $\btheta_{a, t}$ from the posterior $\mu_a[\tau_a]$ for each arm $a$. The agent then selects the arm $A_t = \text{argmax}_{a} \langle\bphi_a, \btheta_{a, t}\rangle$ and receives its corresponding reward ${X}_{A_t, t}$. The posterior of the selected arm is then updated based on the newly observed reward. Take TS with Gaussian priors for example, the posterior update can be formulated as $\mu_{A_t}[\tau_{A_t}] \leftarrow \mathcal{N}\big(\hat{X}_{A_t, t}, \tfrac{1}{\tau_{A_t}(n_{A_t}+1)}\big)$, where $\hat{X}_{A_t, t} = \sum_{s=1}^t \ind\{A_s = A_t\} X_{A_t,t}/ n_{A_t}$ is estimated mean reward, $n_{A_t} = \sum_{s=1}^t \ind\{A_s = A_t\}$ is pulling times of $A_t$, $\tau_{A_t}$ is posterior scaling coefficient. 

\paragraph{Thompson Sampling with SGLD.} To extend TS to general cases without conjugate distribution assumptions, recent works \citep{mazumdar2020thompson,xu2022langevin,ishfaq2023provable} have proposed to consider Langevin-based methods to approximate posterior while maintaining favorable regret guarantees. These approaches exploit the structure of log-concave posteriors and use iterative updates that combine gradients of the log-likelihood with injected noise. Notably, \citet{mazumdar2020thompson} proposed TS-SGLD, an approximate sampling extension of TS for general distribution assumptions. To better understand its mechanism, next we introduce the LMC update rule used in TS-SGLD.

Let $\mu$ denote the reward distribution for a given arm, with $\btheta^* \in \mathbb{R}^d$ as the ground-truth parameter. Given $n$ rewards $\{X_i\}_{i=1}^n \overset{i.i.d}{\sim} p(X|\btheta^*)$, we aim to approximate the following scaled posterior,  $\mu[\tau] \propto \exp(\tau n F_n(\btheta))$, where $F_n(\btheta) = \frac{1}{n}\sum_{i=1}^n \log p(X_i | \btheta)$ and
$\tau > 0$ is an inverse temperature parameter. The corresponding Langevin diffusion is given by
\begin{align}
    d \btheta_t = \nabla F_n(\btheta_t)dt + \sqrt{2 / (n \tau)}\ dB_t,
    \label{eq:langevin_sde}
\end{align}
where $B_t$ is a standard Brownian motion. Under appropriate conditions, the trajectory of $\btheta_t$ converges to $\mu[\tau]$ as $t \to \infty$. In practice, we approximate the continuous-time process \eqref{eq:langevin_sde} using the Euler-Maruyama discretization. For a fixed step size $h \in (0, 1)$, the update at iteration $j$ becomes:
\begin{align}
\label{eq:euler_update}
    \btheta^{(j+1)} = \btheta^{(j)} + h \nabla F_n(\btheta^{(j)}) + 1/\sqrt{n \tau}\mathcal{N}\big(\zero, 2h \Ib\big).
\end{align}
For the purpose of posterior sampling in TS, we employ \eqref{eq:euler_update} to construct posterior samples based on the empirical log-likelihood $F_n(\btheta)$ computed from the available rewards. The number of update iterations $N$ and the step-size $h$ are hyperparameters that control the accuracy of the approximation and computational efficiency. For practical consideration, TS-SGLD uniformly samples a minibatch of rewards to estimate the full gradient $\nabla F_n(\btheta^{(j)})$ in \eqref{eq:euler_update}.

\section{Thompson Sampling with Stochastic Approximation}
\label{sec:algorithm}

\subsection{Algorithm Design}

\paragraph{Design Intuition.}
We adopt the classical posterior sampling view of TS \citep{agrawal2017near}, where simple averaging provides stability and analytical tractability. Building on the approximate-sampling scheme of \citet{mazumdar2020thompson}, we design an SA-based sampler (\Cref{alg:TS_SA}) that fixes a single target distribution so that all rounds share one stationary sampling problem. In each round, we take one Langevin step informed by the most recent reward(s)\footnote{In the theoretical analysis we approximate the full gradient with only one most recent reward sample when deriving the regret bound, a simplification that keeps the proof concise. In the experiments (\Cref{sec:experiment}), however, we use the most recent $\cB$ reward samples. This aggregation is to reduce gradient variance, whereas a single sample suffers high variance and falters on “hard” tasks where arm means are close and difficult to distinguish. This strategy therefore adapts well across task difficulty levels.
} and maintain a running average of the parameter iterates across rounds. Time-averaging reduces decision variance and allows a constant step-size $h$ without per-round retuning.

In summary, TS-SA is an approximate Thompson sampling algorithm that time-averages its Langevin samples to emulate draws from a stationary target posterior, enabling a unified convergence argument and yielding the near-optimal regret guarantees in \Cref{sec:theortical_analysis}.

\begin{algorithm}
\caption{Thompson Sampling with Stochastic Approximation (TS-SA) \label{alg:TS_SA}}
\begin{algorithmic}[1]
  \REQUIRE bandit features $\bphi_a \in \mathbb{R}^d$, priors $\pi_a$ for $a \in \mathcal{A}$.

  \STATE For all $a \in \mathcal{A}$, choose arm $a$ to receive $X_a(1)$, set $\mathcal T_a(1)=1$ and $\btheta_a(1) \sim \pi_a$ 
  
  \FOR{$t= 1,2, \cdots, T$}

    \STATE Set $n_a = \mathcal{T}_a(t)$ for all $a \in \mathcal{A}$ \label{line:arm_choose_begin}
    
    \STATE Sample $\btheta_{a, t} \sim \mathcal{N}\big(\btheta_a(n_a), \frac{1}{\tau_a n_a} \Ib \big)$ for all $a \in \mathcal{A}$ \label{line:scaled_sample}
  
    \STATE Choose arm $A_t =\argmax_{a\in \mathcal{A}} \bphi_a^\top \btheta_{a, t}$ and receive reward $X_{A_t, t}$ \label{line:arm_choose_end}

    \STATE $\btheta^{(0)} \leftarrow \btheta_{A_t}(n_{A_t})$ \label{line:parameter_update_begin}

    \FOR{$j=0,\dots, N^{(n_{A_t})}-1$}

        \STATE $\bomega^{(j)} = \btheta^{(j)} + h \nabla_{\btheta} \log p_{A_t}(X_{A_t}(n_{A_t})|\btheta^{(j)}) + \mathcal{N}\big(\zero, 2h \Ib\big)$ \label{line:one_step_SGLD}

        \STATE $\btheta^{(j+1)}=(1-\gamma_{n_{A_t}})\btheta^{(j)}+\gamma_{n_{A_t}}\bomega^{(j)}$ \label{line:SA}
    \ENDFOR

    \STATE $\btheta_{A_t}(n_{A_t}+1) \leftarrow \btheta^{\big(N^{(n_{A_t})}\big)}$, $\quad X_{A_t}(n_{A_t}+1) = X_{A_t, t}$, $\quad\mathcal{T}_{a}(t+1) = \bigg\{ \begin{aligned}
      & n_a + 1 \quad a = A_t\\
      & n_a \qquad a \in \mathcal{A} \setminus A_t
    \end{aligned}$ \label{line:parameter_update_end}

  \ENDFOR
\end{algorithmic}
\end{algorithm}

\paragraph{Algorithm Interpretation.} Each step $t$ in \Cref{alg:TS_SA} contains two stages. The first stage (Lines \ref{line:arm_choose_begin}-\ref{line:arm_choose_end}) is arm selection based on the latest parameter and the second stage (Lines \ref{line:parameter_update_begin}-\ref{line:parameter_update_end}) is parameter update for selected arm via iterative LMC update with SA. Refer to \Cref{sec:notations} for notation clarification.

The first stage (Lines \ref{line:arm_choose_begin}-\ref{line:arm_choose_end}) mainly follows the classical TS framework. We first initialize pulling times $n_a$ for each arm $a$ (Line \ref{line:arm_choose_begin}) and sample parameters $\btheta_{a,t}$ for all arms (Line~\ref{line:scaled_sample}) for all arms that are used in the following arm selection. 
Next, we select the arm with the highest estimated reward, given by $A_t =\argmax_{a\in \mathcal{A}} \bphi_a^\top \btheta_{a, t}$, and observe the corresponding reward.

The second stage (Lines \ref{line:parameter_update_begin}-\ref{line:parameter_update_end}) is only applied for selected arm $A_t$ at step $t$ to update its estimated parameter through iterative LMC updates with SA. After initializing with the latest updated parameter, the update will be iteratively applied by $N^{(n_{A_t})}$ times. Within each iteration, we first apply one-step SGLD update in Line \ref{line:one_step_SGLD}
where the gradient update is only based on the most recent reward $X_{A_t}(n_{A_t})$. The proposal $\bomega^{(j)}$ is then filtered by the SA update (Line~\ref{line:SA}), which acts as a recursive running average of the LMC proposals with SA step-size $\gamma_n$.
This stabilizes the trajectory, reduces sensitivity to gradient noise under poorly concentrated posteriors, and permits a fixed step size $h$ without per-round retuning. We formalize the stationary-target view in \Cref{sec:stationary_target} and compare with TS-SGLD in \Cref{sec:comparison_with_ts-sgld}.

\subsection{Stationary Target Posterior in TS-SA}
\label{sec:stationary_target}

Combining Lines \ref{line:one_step_SGLD}-\ref{line:SA} in \Cref{alg:TS_SA}, we have the following joint one-step update,
\begin{align}
\label{equ:joint_update}
  \btheta^{(j+1)} = \btheta^{(j)} + h \gamma_{n_{A_t}} \nabla_{\btheta} \log p_{A_t}(X_{A_t}(n_{A_t})|\btheta^{(j)}) + \gamma_{n_{A_t}} \mathcal{N}\big(\zero, 2h \Ib\big).
\end{align}
When we view this update from the perspective of the whole algorithm, we observe that \eqref{equ:joint_update} resembles a one-step SGLD of the whole algorithm with batch size $1$. In comparison, for arm $a$, consider the full-gradient update over the whole algorithm based on full rewards $\{{X}_i\}_{i=1}^{T} \overset{i.i.d}{\sim} p_a(X|\btheta_a^*)$\footnote{
The i.i.d. samples $\{{X}_i\}_{i=1}^{T}$ are hypothetical and used only to derive full-gradient estimates in analysis; they are not sampled during algorithm execution.}, which is given by 
\begin{align}
\label{equ:full_gradient_update}
    \textstyle \btheta^{(\ell+1)}=\btheta^{(\ell)} + h \gamma_{n_a} \nabla_{\btheta} \big( 1/T\sum_{i=1}^{T} \log p_a(X_i|\btheta^{(\ell)})\big) + \gamma_{n_a} \mathcal{N}(\zero,2 h \Ib),
\end{align}
where $\sum_{k=1}^{n_a-1} N^{(k)} < \ell \leq \sum_{k=1}^{n_a} N^{(k)}$, which denotes the iteration number of update in round $n_a$ across the whole algorithm. By defining $F_{T,a}(\btheta) = \frac{1}{T} \sum_{i=1}^{T} \log p_a(X_i|\btheta)$, the update can be equivalently written in the form of discrete-time Langevin dynamics
\begin{align}
\label{equ:langevin_dynamics}
    \btheta^{(\ell+1)}=\btheta^{(\ell)} + h \gamma_{n_a} \nabla_{\btheta} F_{T,a}(\btheta^{(\ell)}) + \sqrt{\frac{2h \gamma_{n_a}}{1/\gamma_{n_a}}} \mathcal{N}(\zero,\Ib). 
\end{align}
To understand the limiting behavior as the step size $h\to 0$, we interpret the update as an Euler–Maruyama discretization of an SDE. Specifically, by letting $t=\gamma_{n_a} h$, $\gamma_{n_a} = 1/T$, we approximate the dynamics by $d\btheta_t = \nabla_{\btheta} F_{T,a}(\btheta_t) dt + \sqrt{2/T} dB_t$, where $B_t$ denotes standard Brownian motion. Based on the classic result, as $t \rightarrow \infty$ the distribution $P_t$ of $\btheta_t$ becomes $\lim_{t \rightarrow \infty} P_t (\btheta | {X}_1,...,{X}_T) \propto \exp (T F_{T,a}(\btheta)) = \exp \big(\sum_{i=1}^{T} \log p_a({X}_i|\btheta) \big)$. Note that $T$ denotes the total number of rounds in the entire algorithm and is a fixed constant, from the perspective of the whole algorithm, our target posterior is stationary.

\begin{remark}
While defining the stationary target posterior for arm $a$ in terms of $\mathcal{T}_a(T)$ (total pulling times of arm $a$) might seem more intuitive than $T$ (total number of rounds in the entire algorithm), it would introduce challenging dependence issues in theoretical analysis because $\mathcal{T}_a(T)$ is an arm-specific random variable dependent on the trajectory. Therefore, we deliberately and conservatively use the fixed $T$ when constructing the stationary target posterior for each arm.
\end{remark}

\subsection{Comparison with TS-SGLD}
\label{sec:comparison_with_ts-sgld}

\paragraph{Target Posterior and Step-size Schedule.}
In LMC-based TS methods, SGLD is used to approximate a fixed target distribution by running a Markov chain. In TS-SGLD, however, the target posterior itself changes every round: at round $n$ the target depends on $\{X_1,\dots,X_n\}$, and at $n+1$ it depends on $\{X_1,\dots,X_n,X_{n+1}\}$. Although successive posteriors may be close when $n$ is large, they are still different (the normalizer and concentration change), so TS-SGLD effectively runs a separate chain per round and must schedule its step-size round-by-round. This round-specific tuning requirement is a known drawback of existing SGLD or Langevin Monte Carlo (LMC) based Thompson sampling algorithms \citep{welling2011bayesian,vollmer2016exploration,mazumdar2020thompson}. By contrast, TS-SA fixes a single stationary target across rounds. Concretely, the one-step update in \eqref{equ:joint_update} is a stochastic Euler–Maruyama step toward a time-invariant potential $F_{T,a}$ (see \eqref{equ:langevin_dynamics}), so the iterates across rounds form a single Markov chain rather than a sequence of reinitialized chains. This allows a constant Langevin step-size $h$ and a single SA step-size schedule $\gamma_n$ without per-round retuning. This stationary view largely simplifies the theoretical analysis in \Cref{sec:theortical_analysis}.

\paragraph{Online Gradient Estimates.}
In TS-SA, we use the most recent reward(s) to estimate the gradient. One motivation is to remove the temporal bias introduced by uniform sampling from the entire, ever-growing reward history in TS-SGLD. In TS-SGLD the algorithm retains the full reward history and, at round $n$ for any arm, forms gradient estimates by drawing mini-batches uniformly from $\{X_1, \ldots, X_n\}$. Because early observations such as $X_1$ appear in every subsequent sampling pool, they are selected far more often than later reward samples. Therefore, different reward samples for arm $a$ enter the batch with unequal probability, introducing temporal bias. When those early reward samples deviate from their true means, the resulting arm estimates become inaccurate. Our method instead builds gradient estimates from only the most recent reward(s), so each retained observation is sampled with nearly equal probability across time, which eliminates temporal bias and stabilizes learning. Another motivation is constant memory: using a fixed recent window avoids linearly increasing memory and naturally follows from studying a more stationary posterior, whereas rapid changes in a non-stationary posterior are notoriously difficult to analyze theoretically.

We provide \Cref{tab:ts_lmc_vs_sa} to summarize the key differences between TS-SGLD and TS-SA across (i) the target posterior over rounds, (ii) the gradient estimate and reward memory, (iii) the SGLD chain structure, (iv) the step-size schedule, and (v) the per-round decision parameter. The table highlights that TS-SA fixes a stationary target, uses recent-window gradients with constant memory, maintains a single continuing chain with SA averaging, and employs a fixed global Langevin step-size—simplifying posterior updates, reducing memory, and removing time-varying schedules, which makes the algorithm easier to implement and more amenable to theoretical analysis.

\begin{table}[!htbp]
\centering
\caption{Key differences between TS-SGLD \citep{mazumdar2020thompson} and TS-SA (ours).}
\small
\begin{tabular}{@{}m{3.4cm}m{5.9cm}m{5.5cm}@{}}
\toprule
\textbf{Aspect} & \textbf{TS-SGLD \citep{mazumdar2020thompson}} & \textbf{TS-SA (Ours)} \\ 
\midrule
Target posterior across rounds 
& Changes every round as new rewards arrive (increasing concentration). 
& Fixed stationary target (via analysis potential $F_{T,a}$). \\ \hline

Gradient estimate 
& Uniform minibatch from full reward history (growing memory). 
& Most recent rewards, window $\cB$ (constant memory). \\ \hline

Reward memory 
& Must retain all past rewards. 
& Keep only last $\cB$ rewards. \\ \hline

SGLD chain structure 
& Separate Markov chain per round (reinitialized to track new target). 
& Single continuing chain across rounds (SA averaging of iterates). \\ \hline

Step-size (theory) 
& Requires decay to track posterior contraction. 
& Fixed global $h$; no time-varying schedule. \\ \hline

Parameter for decision per round 
& Last sample from the round’s inner chain. 
& SA-averaged parameter iterate (time average). \textsuperscript{\dag} \\
\bottomrule
\end{tabular}
\label{tab:ts_lmc_vs_sa}
\vspace{0.25em}

(\textsuperscript{\dag}): Averaging here refers to averaging parameter iterates in TS-SA; this is distinct from the reward averaging used inside TS-SGLD’s likelihood (which aggregates observations to form its evolving posterior).
\end{table}

\section{Theoretical Analysis}
\label{sec:theortical_analysis}
\subsection{Posterior Concentration Analysis}
\label{sec:posterior_concentration_main}

A key feature of the TS framework is its use of posterior sampling to enable randomized exploration, in sharp contrast to the deterministic exploration in UCB‑based methods \citep{lattimore2020bandit}. Therefore, analyzing the evolution and concentration of the posterior is fundamental to theoretical guarantees. To facilitate this analysis, we assume that the reward distribution for each arm is parameterized by a fixed vector $\btheta_a^* \in \mathbb{R}^d$, such that $p_a(X) = p_a(X|\btheta_a^*)$. We first introduce a set of standard assumptions on arm-specific parametric families $p_a(X|\btheta_a)$ to support our analysis.

\begin{assumption}[Strongly log-concavity for $\btheta_a$]
\label{assum:strongly_concave_theta}
For any $a\in\mathcal A$, $X \in \mathbb{R}$, and $\btheta_a, \btheta_a^\prime \in \mathbb{R}^d$, $p_a$ is $m_a$-strongly log-concave over $\btheta_a$: $\langle \nabla_{\btheta} \log p_a(X|\btheta_a) - \nabla_{\btheta} \log p_a(X|\btheta_a^\prime), \btheta_a - \btheta_a^\prime\rangle \leq - m_a \|\btheta_a - \btheta_a^\prime\|_2^2$.
\end{assumption}

\begin{assumption}[Strongly log-concavity for $X$]
\label{assum:strongly_concave_X}
For any $a\in\mathcal A$, $X, X^\prime \in \mathbb{R}$, and $\btheta_a \in \mathbb{R}^d$, $p_a$ is $\nu_a$-strongly log-concave over $X$: $\big(\nabla_{X} \log p_a(X|\btheta_a) - \nabla_{X} \log p_a(X^\prime|\btheta_a)\big) (X - X^\prime) \leq - \nu_a |X - X^\prime|^2$.
\end{assumption}

\begin{assumption}[Joint Lipschitz for $\nabla_{\btheta} \log p_a(X|\btheta_a)$]
\label{assum:joint_lipschitz}
For any $a\in\mathcal A$, $X, X^\prime \in \mathbb{R}$, and $\btheta_a, \btheta_a^\prime \in \mathbb{R}^d$, we have $\big\|\nabla_{\btheta} \log p_a(X|\btheta_a) - \nabla_{\btheta} \log p_a(X^\prime|\btheta_a^\prime)\big\| \leq L_a \big(|X-X^\prime| + \|\btheta_a - \btheta_a^\prime\|\big)$.
\end{assumption}

We note that these assumptions are identical to those used in the analysis of TS‑SGLD \citep{mazumdar2020thompson}. Although \citet{mazumdar2020thompson} phrased our \Cref{assum:joint_lipschitz} slightly different, which is only for ground-truth $\btheta_a^{*}$, their Lemma 5 proof relies on $\nu_a$‑strongly log‑concavity for every $\btheta_a$ to ensure sub‑Gaussianity. Hence, our assumptions are no stronger and are standard for studying general distribution settings. 

The posterior concentration analysis of TS-SA proceeds in three parts: 1) concentration of the target posterior; 2) convergence of TS-SA; 3) concentration of the TS‑SA approximate posterior. In the following, we provide some intuitive discussion on each of these terms. Detailed proofs of results presented here can be found in \Cref{appendix:sec:posterior:concentration}. 

\paragraph{Concentration of Target Posterior.}
Recall from the analysis of stationary target posterior in \Cref{sec:algorithm}, we focus on target posterior $\mu_a^{\text{SA}}
\propto \exp \big(\sum_{i=1}^{T} \log p_a({X}_i|\btheta) \big)$.
To analyze the concentration of $\mu_a^{\text{SA}}$ from the ground-truth parameter $\btheta_a^*$, we consider the following SDE
\begin{align}
\label{eq:sde}
    d \btheta_t = \frac{1}{2}\nabla_{\btheta} F_T(\btheta_t) d t + \frac{1}{\sqrt{T}} d B_t.
\end{align}
Following proof of Theorem 1 in \citet{mou2022diffusionprocessperspectiveposterior} and Theorem B.1 in \citet{mazumdar2020thompson}, we define a potential function $V(t,\btheta) = \frac{1}{2} e^{\beta t}\|\btheta - \btheta_a^*\|^2$ with $\beta>0$, and study its evolution along the trajectory of the SDE \eqref{eq:sde}. With It${\rm{\hat o}}$’s lemma, we decompose the dynamics of $V(t,\btheta_t)$ into components associated with drift and diffusion contributions. We then bound each component to control the $L^p$-norm of $\sup_{t \in [0, N]} V(t, \btheta_t)$, which allows us to establish moment bounds on $\|{\btheta}-\btheta_a^*\|$ where $\btheta \sim \mu^{\text{SA}}_a$. These yield the posterior concentration over the stationary target distribution $\mu_a^{\text{SA}}$.
\begin{lemma}[Concentration of target posterior]
\label{lem:posterior_concentration_main}
Suppose \Crefrange{assum:strongly_concave_theta}{assum:joint_lipschitz} hold. For any arm $a$, $\mu^{\text{SA}}_a \propto \exp \big(\sum_{i=1}^T \log p_a\big({X}_i|\btheta\big) \big)$ is the target posterior distribution, then for any $p \geq 1$,
\begin{align*}
  \mathbb{E}_{{\btheta}\sim \mu_a^{\text{SA}}}\big[\big\|{\btheta}-\btheta_a^*\big\|^p\big]^{\frac{1}{p}} \leq \sqrt{\frac{2}{m_a T} \big( 4d_a + (\kappa_a^2 d_a + 32)p \big) },
\end{align*}
where $\kappa_a = \max\big\{ \frac{L_a}{m_a}, \frac{L_a}{\nu_a} \big\}$. This further implies that $\big\|{\btheta}-\btheta_a^*\big\|_{{\btheta} \sim \mu_a^{\text{SA}}}$ has sub-Gaussian tails.
\end{lemma}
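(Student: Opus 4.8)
The plan is to analyze the stationary SDE \eqref{eq:sde} via the exponentially-reweighted Lyapunov function $V(t,\btheta) = \tfrac12 e^{\beta t}\|\btheta - \btheta_a^*\|^2$ and extract moment bounds on $\|\btheta - \btheta_a^*\|$ under the invariant measure $\mu_a^{\text{SA}}$, following \citet{mou2022diffusionprocessperspectiveposterior} and Theorem B.1 of \citet{mazumdar2020thompson}. First I would apply It\^o's lemma to $V(t,\btheta_t)$ along the SDE, producing three contributions: the time-derivative term $\tfrac{\beta}{2}e^{\beta t}\|\btheta_t-\btheta_a^*\|^2$, the drift term $\tfrac12 e^{\beta t}\langle \btheta_t - \btheta_a^*, \nabla_\btheta F_T(\btheta_t)\rangle$, and the It\^o correction $\tfrac{d_a}{2T}e^{\beta t}$ from the $\tfrac1{\sqrt T}dB_t$ noise, plus a martingale (stochastic integral) term. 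The key structural input is that $F_T(\btheta) = \sum_{i=1}^T \log p_a(X_i|\btheta)$ is $m_a T$-strongly concave by \Cref{assum:strongly_concave_theta} summed over the $T$ samples, but its gradient at $\btheta_a^*$ is \emph{not} zero — it equals $\sum_{i=1}^T \nabla_\btheta \log p_a(X_i|\btheta_a^*)$, a centered random vector. So the drift splits as $\langle \btheta_t-\btheta_a^*, \nabla F_T(\btheta_t) - \nabla F_T(\btheta_a^*)\rangle + \langle \btheta_t - \btheta_a^*, \nabla F_T(\btheta_a^*)\rangle \le -m_a T\|\btheta_t-\btheta_a^*\|^2 + \|\btheta_t-\btheta_a^*\|\cdot\|\nabla F_T(\btheta_a^*)\|$, and after a Young's inequality split the first (negative, order-$T$) term dominates, which is exactly what lets us choose $\beta \asymp m_a T$ so the $\tfrac{\beta}{2}\|\cdot\|^2$ term is absorbed.

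Next I would take expectations (the martingale term vanishes) or, to get the $\sup_{t\in[0,N]}$ control needed for the $L^p$ bound, apply the Burkholder–Davis–Gundy inequality to the stochastic integral and a Gr\"onwall-type argument, yielding $\mathbb E[\sup_{t\le N} V(t,\btheta_t)]^{1/p} \lesssim$ (initial term) $+ \tfrac{d_a}{T}$-type contributions $+$ a term controlling the fluctuation $\|\nabla F_T(\btheta_a^*)\|$. That last object needs its own tail bound: by \Cref{assum:strongly_concave_X} and \Cref{assum:joint_lipschitz}, each summand $\nabla_\btheta \log p_a(X_i|\btheta_a^*)$ is sub-Gaussian (this is the role of $\nu_a$-strong log-concavity in $X$ — it makes $X_i$ sub-Gaussian, and the $L_a$-Lipschitz property transfers this to the gradient), so $\|\nabla F_T(\btheta_a^*)\|^2$ concentrates around its mean of order $T \cdot (L_a^2 d_a/\nu_a \cdot \text{const})$, contributing the $\kappa_a^2 d_a$ inside the bracket. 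Dividing through by $e^{\beta N}$ and letting $N\to\infty$ (so the invariant measure is reached and the initial-condition term dies) collapses everything to $\mathbb E_{\btheta\sim\mu_a^{\text{SA}}}[\|\btheta-\btheta_a^*\|^p]^{1/p} \le \sqrt{\tfrac{2}{m_a T}(4d_a + (\kappa_a^2 d_a + 32)p)}$; the linear-in-$p$ growth of the squared bound under the $p$-th root is precisely the sub-Gaussian tail characterization claimed at the end.

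The main obstacle I expect is bookkeeping the constants so that the bracket comes out as the clean $4d_a + (\kappa_a^2 d_a + 32)p$: one must carefully balance the Young's inequality coefficient used to split $\|\btheta_t-\btheta_a^*\|\cdot\|\nabla F_T(\btheta_a^*)\|$ against the strong-concavity constant $m_a T$, track the BDG constant through the $L^p$ norm for general $p\ge 1$ (not just $p=2$), and combine the sub-Gaussian moment bound on $\|\nabla F_T(\btheta_a^*)\|$ (itself requiring that the sum of $T$ independent sub-Gaussian gradients has the right variance proxy, scaling like $T$ not $T^2$). A secondary subtlety is making the $N\to\infty$ limit rigorous — arguing that $\btheta_t$ started from any fixed point (or from $\mu_a^{\text{SA}}$ itself) gives a uniformly integrable family so that $e^{-\beta N}\mathbb E[\sup_{t\le N}V(t,\btheta_t)]$ has the stated limit and the bound genuinely transfers to the stationary distribution; here one can alternatively initialize at stationarity and use the $e^{\beta t}$ weighting purely to kill transients in the drift-vs-time-derivative competition, which is the cleaner route and likely the one the paper takes.
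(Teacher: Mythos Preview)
Your proposal is correct and follows the paper's proof essentially step for step: same Lyapunov function, same It\^o decomposition, same drift split via strong concavity plus the residual $\nabla F_T(\btheta_a^*)$ (bounded sub-Gaussian through the log-concavity-in-$X$ and joint-Lipschitz assumptions), same BDG-on-the-martingale plus Young/Minkowski bookkeeping, and the same $N\to\infty$ limit (the paper invokes Fatou). One minor normalization slip to fix when you execute: in the paper $F_T(\btheta) = \tfrac{1}{T}\sum_i \log p_a(X_i\mid\btheta)$ is the \emph{average}, hence only $m_a$-strongly concave, so the choice is $\beta = m_a/2$ rather than $\beta \asymp m_a T$; the $1/T$ in the final bound enters instead through the $\tfrac{1}{\sqrt T}$ diffusion coefficient and the $L_a\sqrt{d_a/(\nu_a T)}$ sub-Gaussian parameter of the averaged residual.
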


Compared to \citet{mazumdar2020thompson}, which builds dynamic approximate posteriors step-by-step at each round, our method only focus on one stationary target posterior and avoid the complication of tracking and analyzing an increasingly concentrated posterior over time.

\paragraph{Convergence of TS-SA.}
Unlike TS-SGLD, which treats each round as an independent sampling process from a changing posterior, our method maintains a stationary SGLD-like trajectory across all rounds. This enables a unified convergence analysis and allows the step-size $h$ to remain fixed, eliminating the need for time-dependent tuning.

Recall from \Cref{sec:algorithm}, the joint one-step update \eqref{equ:joint_update} can be viewed as a one-step SGLD of the whole algorithm with batch size $1$, with the full gradient update stated in \eqref{equ:full_gradient_update}. By viewing the update throughout the algorithm as a whole SGLD with batch size $1$ and denote index $\ell$ as the update iteration, then we interpolate a continuous time stochastic process $\btheta_t$ between $\btheta_{\ell{h^\prime}}$ and $\btheta_{(\ell+1){h^\prime}}$, $t \in [\ell{h^\prime}, (\ell+1){h^\prime}]$, where $h^\prime = h \gamma_{n_a}$, $\btheta_{\ell{h^\prime}} = \btheta^{(\ell)}$ is defined to connect the notations from discrete algorithm and continuous stochastic process and $\sum_{i=1}^{n_a-1} N^{(i)} < \ell \leq \sum_{i=1}^{n_a} N^{(i)}$, which denotes the iteration number of update in round $n_a$. We construct the SDE as follows:
\begin{align*}
  d \btheta_t= \nabla_{\btheta} \log p_a \big(X_a(n_a) | \btheta_{\ell{h^\prime}}\big) d t + \sqrt{2/T} d B_t,
\end{align*}
where $X_a(n_a)$ is the received reward at $n_a$-th pulling for arm $a$, which is sampled from distribution $p_a(X|\btheta_a^*)$. We also define the following auxiliary stochastic process:
\begin{align}
\label{equ:auxiliary_sde_main}
  d \widetilde{\btheta}_t= \nabla_{\btheta} F_T\big(\widetilde{\btheta}_t\big) d t + \sqrt{2/T} d B_t,
\end{align}
where $\widetilde{\btheta}_t$ is initialized from the stationary target posterior $\mu_a^{\text{SA}}$, thus $\widetilde{\btheta}_t$ will always follow $\mu_a^{\text{SA}}$ by evolving in \eqref{equ:auxiliary_sde_main}. Then we can analyze the discretization error based on the continuous time stochastic process $\btheta_t$ and $\widetilde{\btheta}_t$ following the standard analysis in \citet{dalalyan2017theoretical}. The following lemma provides the formal description of the convergence of TS-SA.

\begin{lemma}[Convergence of TS-SA]
\label{lemma:discrete:sa}
Suppose \Crefrange{assum:strongly_concave_theta}{assum:joint_lipschitz} hold. For any arm $a$, with SGLD step size $h = \mathcal O(1/m_a)$, SA step size $\gamma_{n_a}=1/T$, and number of iterations $N^{(n_a)} = \mathcal{O}(T/n_a)$, \Cref{alg:TS_SA} achieves a discrete error bound $\mathcal{W}_p(\widehat{\mu}_a^{\text{SA}}(n_a), \mu_a^{\text{SA}}) \leq \widetilde{\mathcal{O}}(1/\sqrt{n_a})$, where $\mathcal{W}_p(\cdot,\cdot)$ is the Wasserstein-$p$ distance, $\widehat{\mu}_a^{\text{SA}}(n_a)$ denotes the distribution of $\btheta_a(n_a)$ at $n_a$-th pull from \Cref{alg:TS_SA}. 
\end{lemma}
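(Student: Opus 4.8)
The plan is to bound $\mathcal W_p(\widehat\mu_a^{\text{SA}}(n_a),\mu_a^{\text{SA}})$ by a triangle-inequality decomposition into (i) the discretization error of the single-sample SGLD chain relative to its continuous interpolation, and (ii) the contraction of the auxiliary stationary SDE \eqref{equ:auxiliary_sde_main} toward $\mu_a^{\text{SA}}$. First I would set up the synchronous coupling between the interpolated process $\btheta_t$ (driven by the one-sample gradient $\nabla_{\btheta}\log p_a(X_a(n_a)\mid\btheta_{\ell h'})$, frozen over each step) and the auxiliary process $\widetilde\btheta_t$ (driven by the full gradient $\nabla_{\btheta}F_{T}$ and started from $\mu_a^{\text{SA}}$), sharing the same Brownian motion $B_t$. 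Differentiating $\|\btheta_t-\widetilde\btheta_t\|^2$ and using the $m_a$-strong log-concavity from \Cref{assum:strongly_concave_theta} for the full-gradient drift gives an exponential contraction term $-2m_a\|\btheta_t-\widetilde\btheta_t\|^2\,dt$, against which I would play off two perturbation sources: the \emph{stale-gradient} error $\nabla_{\btheta}\log p_a(X_a(n_a)\mid\btheta_{\ell h'})-\nabla_{\btheta}\log p_a(X_a(n_a)\mid\btheta_t)$, controlled by the Lipschitz constant $L_a$ from \Cref{assum:joint_lipschitz} together with a one-step movement bound (of order $h'$ in drift plus $\sqrt{h'/T}$ in the Gaussian increment), and the \emph{one-sample vs.\ full-gradient} bias $\nabla_{\btheta}\log p_a(X_a(n_a)\mid\cdot)-\nabla_{\btheta}F_{T}(\cdot)$. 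For the latter, I would condition on $\btheta_t$ and note that $X_a(n_a)\sim p_a(\cdot\mid\btheta_a^*)$ is independent of the other hypothetical samples defining $F_T$, so in expectation this is an unbiased estimate of $\nabla_{\btheta}\mathbb E_{X\sim p_a(\cdot\mid\btheta_a^*)}[\log p_a(X\mid\cdot)]$; its variance is bounded using sub-Gaussianity of $X_a(n_a)$ (which follows from \Cref{assum:strongly_concave_X}, exactly as invoked after \Cref{lem:posterior_concentration_main}) and the Lipschitz bound, yielding a per-step variance term of order $L_a^2/\nu_a$ (times $d_a$). Standard Grönwall/ODE-comparison on $\mathbb E\|\btheta_t-\widetilde\btheta_t\|^2$ then produces, after $\ell$ steps of size $h'=h/T$, a bound of the shape $e^{-2m_a \ell h'}\,\mathbb E\|\btheta_0-\widetilde\btheta_0\|^2 + \frac{C}{m_a}\big(h + \tfrac{1}{T}\big)\,\mathrm{poly}(d_a,\kappa_a)$; to convert the $L^2$ bound to $\mathcal W_p$ I would either redo the computation on $\|\cdot\|^p$ directly (using strong log-concavity gives dimension-free $p$-contraction up to the usual $p$-dependent factors) or cite the moment-transfer argument of \citet{dalalyan2017theoretical}.

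Next I would count iterations. The cumulative number of inner steps up to the $n_a$-th pull is $\sum_{k=1}^{n_a} N^{(k)}$; with $N^{(k)}=\mathcal O(T/k)$ this is $\Theta(T\log n_a)$, so the total ``time'' elapsed in the chain is $\ell h' = \Theta((T\log n_a)\cdot h/T)=\Theta(h\log n_a)$, and the contraction factor $e^{-2m_a\ell h'}$ is $n_a^{-\Theta(m_a h)}$; choosing $h=\Theta(1/m_a)$ makes this $n_a^{-\Theta(1)}$, which is why the warm-up schedule $N^{(k)}\propto T/k$ is exactly what is needed to drive the initialization error down at a polynomial-in-$n_a$ rate. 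Crucially, I would \emph{not} re-initialize: since $\btheta^{(0)}$ at pull $n_a+1$ equals the final iterate from pull $n_a$, the chain is a single continuing trajectory and the contraction accumulates across rounds (this is the ``stationary SGLD'' interpretation from \Cref{sec:stationary_target}). Combining the contracted initialization error $\widetilde{\mathcal O}(n_a^{-\Theta(1)})$ with the accumulated step-size/variance floor $\widetilde{\mathcal O}(\sqrt{h+1/T})=\widetilde{\mathcal O}(1/\sqrt{T})$ — and noting $1/\sqrt T \le 1/\sqrt{n_a}$ since $n_a\le T$ — and the auxiliary process's stationarity (so (ii) contributes nothing beyond what is already in the coupling), gives $\mathcal W_p(\widehat\mu_a^{\text{SA}}(n_a),\mu_a^{\text{SA}})\le\widetilde{\mathcal O}(1/\sqrt{n_a})$ after absorbing $\mathrm{poly}(d_a,\kappa_a,\log T)$ factors into $\widetilde{\mathcal O}$.

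The main obstacle I anticipate is the handling of the one-sample gradient noise across a \emph{continuing, non-reinitialized} chain: unlike a textbook SGLD analysis where the stochastic gradients at different steps are i.i.d.\ and independent of the state, here the reward $X_a(n_a)$ is fixed for all $N^{(n_a)}$ inner iterations of round $n_a$ (so the noise is strongly correlated within a round) and, across rounds, the sequence $\{X_a(k)\}_k$ is generated adaptively by the bandit interaction, so it is not literally independent of the chain's past. I would address the within-round correlation by treating the frozen-gradient drift of round $n_a$ as a perturbed deterministic drift and absorbing the deviation of $X_a(n_a)$ from its mean into the Lipschitz/sub-Gaussian bookkeeping (the $L_a^2/\nu_a$ variance term above already reflects this, and because $X_a(n_a)\sim p_a(\cdot\mid\btheta_a^*)$ marginally, the relevant conditional expectation is still the population gradient); the cross-round adaptivity is benign here because, conditioned on the arm being pulled, $X_a(n_a)$ has the correct marginal law $p_a(\cdot\mid\btheta_a^*)$ regardless of history, so the per-step drift bias/variance bounds hold uniformly and the Grönwall argument goes through pathwise. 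A secondary technical point is the non-uniform step ``time'' $h'=h/T$ combined with a growing iteration budget — but since $h'$ is constant the interpolation is standard, and the only real effect is the $\log n_a$ factor already accounted for.
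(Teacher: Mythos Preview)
Your proposal is correct and follows essentially the same route as the paper: synchronous coupling of the interpolated one-sample chain with the stationary auxiliary SDE, strong-concavity contraction, a split of the drift mismatch into a stale-gradient (discretization) piece and a one-sample-vs-full-gradient (stochastic) piece bounded via \Cref{assum:joint_lipschitz} and the sub-Gaussianity implied by \Cref{assum:strongly_concave_X}, and the harmonic iteration budget $N^{(k)}\propto T/k$ so that the accumulated contraction time is $\Theta(h\log n_a)$ and the initialization term decays as $n_a^{-\Theta(m_ah)}$. The paper differs only cosmetically---it differentiates $\|\btheta_t-\widetilde\btheta_t\|^p$ directly rather than starting from $p=2$, and performs the drift split at the frozen point $\btheta_{\ell h'}$ rather than the running point---and note that your floor term should involve the \emph{effective} step size $h'=h/T$, not $h$ itself (this is what makes the discretization floor $\widetilde{\mathcal O}(1/\sqrt{T})$ despite $h=\Theta(1/m_a)$).
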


\begin{remark}
An important result is that our Langevin step-size $h$ is fixed and requires no tuning over time. Although \Cref{lem:TS-SA_convergence} suggests that discrete errors may decay faster than the conservative bound of $\widetilde{\mathcal{O}}(1/\sqrt{n_a})$, we report this rate to ensure the posterior concentration result necessary for deriving near-optimal regret. Note that in theoretical analysis we set $\gamma_{n_a}$ as $1/T$ in order to obtain a time-homogeneous diffusion term that is easier to analyze, yielding a faster error-decay rate at the cost of a larger $N^{(n_a)}$.

Although $N^{(n_a)}$ could be reduced with a more refined choice of $\gamma_{n_a}=1 / n_a^\alpha$ and a correspondingly more intricate analysis, this overhead can also be alleviated in practice by a brief warm-up phase that makes $n_a$ large. Proving results for $\gamma_{n_a}=1 / n_a^\alpha$ is challenging because it introduces a state-dependent noise and can be effectively handled via Poisson equation \citep{deng2022adaptively}. However, this is not required in practice since SGLD itself has discretization errors, implying a bias-free algorithm is not needed. As long as the bias is smaller than the reward gap between the optimal and sub-optimal arms, the proof still goes through.

\end{remark}

\paragraph{Concentration of TS‑SA Approximate Posterior.}
Based on the concentration of target posterior result in \Cref{lem:posterior_concentration_main} and the convergence analysis of TS-SA in \Cref{lemma:discrete:sa}, we can further analyze the concentration of approximate posterior from \Cref{alg:TS_SA} based on the following decomposition:
\begin{align*}
    \mathcal{W}_p\big(\widehat{\mu}_a^{\text{SA}}(n_a)[\tau_a], \delta_{\btheta_a^*} \big) \leq \underbrace{\mathcal{W}_p\big(\widehat{\mu}_a^{\text{SA}}(n_a)[\tau_a], \widehat{\mu}_a^{\text{SA}}(n_a) \big)}_{\text{Posterior Scaling}} + \underbrace{\mathcal{W}_p\big(\widehat{\mu}_a^{\text{SA}}(n_a), \mu_a^{\text{SA}} \big)}_{\text{Convergence of TS-SA}} + \underbrace{\mathcal{W}_p\big(\mu_a^{\text{SA}}, \delta_{\btheta_a^*} \big)}_{\text{Target Posterior Concentration}}
\end{align*}
where $\widehat{\mu}_a^{\text{SA}}(n_a)[\tau_a]$ denotes the distribution of $\btheta_{a, t}$ at $n_a$-th pull from \Cref{alg:TS_SA}. We now present the following theorem based on the preceding analysis.

\begin{theorem}[Concentration of TS-SA approximate posterior]
\label{theorem:posterior_concentration_main}
Suppose \Crefrange{assum:strongly_concave_theta}{assum:joint_lipschitz} hold. For any arm $a$, with the hyper-parameters and
runtime as described in \Cref{lemma:discrete:sa}, by following \Cref{alg:TS_SA}, the posterior concentration satisfies
\begin{align*}
  \mathbb{P}_{{\btheta} \sim \widehat{\mu}_a^{\text{SA}}(n_a)[\tau_a]} \bigg(\|\btheta - \btheta_a^*\|_2 \geq \sqrt{\frac{C_1}{n_a} \log(1/\delta)} \bigg)\leq \delta.
\end{align*}
where $n_a = \cT_a(t)$ is the pulling times of arm $a$ before step $t$, $\delta\in(0,1)$, and $C_1$ is a constant depending on problem-specific parameters.
\end{theorem}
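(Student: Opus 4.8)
The plan is to combine the three-term decomposition stated just above the theorem, bound each term, and then convert the resulting Wasserstein-$p$ bound into the high-probability statement via a moment/Markov argument. First I would handle the \emph{Posterior Scaling} term $\mathcal{W}_p\big(\widehat{\mu}_a^{\text{SA}}(n_a)[\tau_a], \widehat{\mu}_a^{\text{SA}}(n_a)\big)$. Recall from Line~\ref{line:scaled_sample} that $\btheta_{a,t}$ is obtained from $\btheta_a(n_a)$ by adding independent Gaussian noise $\mathcal{N}\big(\zero, \tfrac{1}{\tau_a n_a}\Ib\big)$. Coupling $\btheta_{a,t}$ with $\btheta_a(n_a)$ through this exact construction gives $\mathcal{W}_p\big(\widehat{\mu}_a^{\text{SA}}(n_a)[\tau_a], \widehat{\mu}_a^{\text{SA}}(n_a)\big)^p \le \mathbb{E}\big\|\tfrac{1}{\sqrt{\tau_a n_a}}Z\big\|^p$ with $Z\sim\mathcal{N}(\zero,\Ib_{d_a})$, which is $\widetilde{\mathcal O}(\sqrt{d_a p /(\tau_a n_a)})$ by standard Gaussian moment bounds; in particular it is $\widetilde{\mathcal O}(1/\sqrt{n_a})$ for fixed $\tau_a$. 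The \emph{Convergence of TS-SA} term is exactly $\mathcal{W}_p(\widehat{\mu}_a^{\text{SA}}(n_a), \mu_a^{\text{SA}}) \le \widetilde{\mathcal O}(1/\sqrt{n_a})$ by \Cref{lemma:discrete:sa} with the stated hyperparameters. The \emph{Target Posterior Concentration} term $\mathcal{W}_p(\mu_a^{\text{SA}}, \delta_{\btheta_a^*})$ equals $\mathbb{E}_{\btheta\sim\mu_a^{\text{SA}}}\big[\|\btheta-\btheta_a^*\|^p\big]^{1/p}$, which by \Cref{lem:posterior_concentration_main} is at most $\sqrt{\tfrac{2}{m_a T}\big(4d_a + (\kappa_a^2 d_a + 32)p\big)}$; since $n_a \le T$, this is also $\widetilde{\mathcal O}(\sqrt{p}/\sqrt{n_a})$ (indeed $\le \widetilde{\mathcal O}(\sqrt p/\sqrt T)$, a stronger bound).

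Summing the three pieces, I obtain $\mathcal{W}_p\big(\widehat{\mu}_a^{\text{SA}}(n_a)[\tau_a], \delta_{\btheta_a^*}\big) \le C\sqrt{p/n_a}$ for a constant $C$ depending only on $d_a, m_a, \nu_a, L_a, \tau_a$ (absorbing logarithmic factors), uniformly in $p\ge 1$. Because the target measure is a point mass $\delta_{\btheta_a^*}$, the Wasserstein-$p$ distance coincides with the $L^p$-norm of $\|\btheta-\btheta_a^*\|$ under $\btheta\sim\widehat{\mu}_a^{\text{SA}}(n_a)[\tau_a]$, i.e.\ $\mathbb{E}\big[\|\btheta-\btheta_a^*\|^p\big]^{1/p} \le C\sqrt{p/n_a}$ for all $p\ge 1$. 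A bound on all $L^p$ moments growing like $\sqrt p$ is the defining property of a sub-Gaussian random variable, so $\|\btheta-\btheta_a^*\|$ is sub-Gaussian with proxy variance $\mathcal O(C^2/n_a)$. Applying the standard sub-Gaussian tail inequality (optimize the Markov bound $\mathbb{P}(\|\btheta-\btheta_a^*\|\ge r)\le r^{-p}\mathbb{E}[\|\btheta-\btheta_a^*\|^p]\le (C\sqrt{p/n_a}/r)^p$ over $p$, taking $p = r^2 n_a/(C^2 e)$ or similar) yields $\mathbb{P}\big(\|\btheta-\btheta_a^*\|\ge r\big) \le \exp(-c n_a r^2/C^2)$, and setting the right-hand side to $\delta$ gives $r = \sqrt{C_1 \log(1/\delta)/n_a}$ with $C_1 = C^2/c$, which is exactly the claimed bound.

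The main obstacle I anticipate is \emph{bookkeeping the constants and logarithmic factors} rather than any conceptual difficulty: \Cref{lemma:discrete:sa} is stated only up to $\widetilde{\mathcal O}(\cdot)$, so I need to track how the hidden problem-dependent constants and $\log$-factors (e.g.\ from the discretization error, the choice $N^{(n_a)} = \mathcal O(T/n_a)$, and the condition number $\kappa_a$) combine, and verify that the $p$-dependence coming out of \Cref{lem:posterior_concentration_main} (linear in $p$ inside the square root, hence $\sqrt p$ overall) is preserved through the triangle inequality and is not degraded by the convergence term. A secondary technical point is confirming that the convergence bound of \Cref{lemma:discrete:sa} holds for \emph{all} $p\ge 1$ (or at least for $p$ growing with $n_a$ as needed in the moment-to-tail conversion) with the $p$-dependence under control; if \Cref{lemma:discrete:sa} only controls a fixed $p$, I would instead invoke it for $p=2$, combine with the sub-Gaussianity of the target from \Cref{lem:posterior_concentration_main} and the explicitly Gaussian scaling noise, and argue sub-Gaussianity of the sum directly via convolution/mixture bounds. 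Either route is routine; the substance of the theorem is entirely carried by \Cref{lem:posterior_concentration_main} and \Cref{lemma:discrete:sa}.
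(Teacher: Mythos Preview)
Your proposal is correct and follows essentially the same route as the paper: the three-term Wasserstein decomposition, the Gaussian-moment bound on the scaling term, invocation of \Cref{lemma:discrete:sa} and \Cref{lem:posterior_concentration_main} for the other two, and the moment-to-tail conversion via sub-Gaussianity are exactly what the paper does. Your concern about the $p$-dependence in \Cref{lemma:discrete:sa} is well-placed but resolved in the paper's detailed version of that lemma, which makes the $\sqrt{p}$ scaling explicit (the bound reads $\tfrac{W}{n_a}+\tfrac{32L_a}{m_a}\sqrt{8d_a p/(m_a T)}+\tfrac{8L_a}{m_a}\sqrt{d_a p/(\nu_a T)}$), so no fallback to the $p=2$ argument is needed.
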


\begin{remark}
Note that \Cref{theorem:posterior_concentration_main} provides a high probability concentration tail bound in the order of $\widetilde{\mathcal{O}}(1/\sqrt{n_a})$. This result is utilized to construct good event for regret analysis and the order $\widetilde{\mathcal{O}}(1/\sqrt{n_a})$ guarantees that we can derive the near-optimal regret upper bound. Compared with \citet{mazumdar2020thompson}, our proof is simpler and more intuitive: our stationary target posterior eliminates the inductive argument their dynamic posterior needs to establish overall convergence. 
\end{remark}

\subsection{Regret Analysis}

We now focus on regret analysis for \Cref{alg:TS_SA} under assumptions and posterior concentration results from \Cref{sec:posterior_concentration_main}. We denote $\mathcal F_t = \sigma(A_1,X_{A_1,1},\dots,A_t,X_{A_t,t})$ as the filtration generated up to round $t$ and assume that arm $1$ to be the optimal arm with largest expected reward without the loss of generality. Our regret analysis follows a similar structure to \citet{agrawal2017near}.

We start with regret decomposition $\mathbb{E}[\mathcal R(T)] = \sum_{a>1} \Delta_a \mathbb{E}[\mathcal T_a(T)]$, where $\Delta_a = \bar{X}_1 - \bar{X}_a$. To bound $\mathbb{E}[\mathcal T_a(T)]$, we define the good event based on the result of \Cref{theorem:posterior_concentration_main} as $Z_a(T)=\cap_{t=1}^{T-1}Z_{a,t}$ where $Z_{a,t}=\big\{\|\boldsymbol{\btheta}_{a,t} - \btheta_a^*\|_2 \leq \sqrt{{C_1} \log (1/\delta)/{\cT_a(t)}} \big\}$.
Next, we define $\mathcal E_a(t) = \{\hat{X}_{a,t} \geq \bar{X}_1 - \epsilon \}$ that measure the closeness of estimated reward with optimal reward and its conditional probability as $\mathcal G_{a,s}= \PP (\mathcal E_a(t)|\mathcal F_{t-1})$ satisfying $A_t = a$ and $s=\mathcal{T}_a(t)$. Then we can further bound $\mathbb{E}[\mathcal T_a(T)]$ by
\begin{align*}
    \mathbb{E}\big[\mathcal T_a(T)| Z_a(T) \cap Z_1(T)\big] 
    = 1 + \mathbb{E}\Bigg[\sum_{s=0}^{T-1}\mathbb{I}\bigg(\mathcal G_{a,s}>\frac{1}{T}\bigg) \bigg| Z_a(T) \Bigg] 
    + \mathbb{E}\Bigg[\sum_{s=0}^{T-1}\bigg(\frac{1}{\mathcal G_{1,s}}-1\bigg) \bigg| Z_1(T)\Bigg],
\end{align*}
where ${1}/{\mathcal{G}_{1,s}}$ quantifies residual uncertainty in the optimal arm’s posterior.
For all rounds $t \in [T]$, by choosing a proper temperature $\tau_1$, we establish an upper bound on the anti-concentration probability for the first arm: $\mathbb{E}[{1}/{\mathcal{G}_{1, s}}] \leq 30$, which guarantees a constant anti-concentration level. 
To bound $\mathbb{E}[1/\mathcal{G}_{1,s}]$, we apply exponential moment bounds on the approximate error $\|\btheta_1(n_1) - \btheta_1^*\|$. 
Integrating the established anti-concentration bound with the result in \Cref{theorem:posterior_concentration_main}, we can separately upper bound the RHS in above equations.

Finally, we derive the following regret bound for TS-SA.
\begin{theorem}[Regret bound]
\label{theorem:regret_main}
Suppose \Crefrange{assum:strongly_concave_theta}{assum:joint_lipschitz} hold. With the hyperparameters and
runtime as described in \Cref{lemma:discrete:sa}, \Cref{alg:TS_SA} satisfies the following bounds on expected cumulative regret after $T$ steps:
\begin{align*}
    \EE[\mathcal R(T)] \leq \sum_{a>1} \bigg( \frac{4B_a^2 C_1 C_2}{\Delta_a} \log \frac{8B_a^2 C_1}{\Delta_a^2} + \frac{4B_a^2 C_1}{\Delta_a}\log T + (C_2 + 4) \Delta_a \bigg),
\end{align*}
where $C_1, C_2$ are problem-dependent constants and the anti-concentration level. Additionally, we can further derive the near-optimal instance-independent bound $\mathbb{E}[\mathcal R(T)] \leq \widetilde{\mathcal{O}}(\sqrt{KT})$.
\end{theorem}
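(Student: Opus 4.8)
\textbf{Proof proposal for Theorem \ref{theorem:regret_main}.}
The plan is to follow the classical Thompson-sampling regret template of \citet{agrawal2017near}, but feeding it the posterior concentration guarantee of \Cref{theorem:posterior_concentration_main} in place of the exact Beta/Gaussian posterior tail bounds. First I would write the standard instance-dependent decomposition $\EE[\mathcal R(T)] = \sum_{a>1} \Delta_a\,\EE[\mathcal T_a(T)]$, so that the entire task reduces to bounding $\EE[\mathcal T_a(T)]$ for each sub-optimal arm $a$. Then I would introduce the good event $Z_a(T)=\cap_{t=1}^{T-1}Z_{a,t}$ with $Z_{a,t}=\{\|\btheta_{a,t}-\btheta_a^*\|_2\le \sqrt{C_1\log(1/\delta)/\mathcal T_a(t)}\}$ as in the excerpt, and show that the complement contributes only $O(\Delta_a)$ (a constant per arm) to the regret: this uses the tail bound of \Cref{theorem:posterior_concentration_main} with $\delta$ chosen on the order of $1/T^2$ (so that $\log(1/\delta)=\Theta(\log T)$) together with a union bound over the at most $T$ pulls of arm $a$ and over arms, giving a total failure contribution $O(K)$.

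Next, conditioning on $Z_a(T)\cap Z_1(T)$, I would carry out the counting argument: decompose $\mathcal T_a(T)$ according to whether, at the round arm $a$ is pulled, the conditional probability $\mathcal G_{a,s}=\PP(\mathcal E_a(t)\mid\mathcal F_{t-1})$ that arm $a$'s sampled parameter looks "as good as optimal" (the event $\mathcal E_a(t)=\{\hat X_{a,t}\ge \bar X_1-\epsilon\}$, with the threshold slack $\epsilon$ set to a small constant fraction of $\Delta_a$) exceeds $1/T$ or not. This yields the displayed bound
\begin{align*}
\EE[\mathcal T_a(T)\mid Z_a(T)\cap Z_1(T)] = 1 + \EE\Big[\sum_{s=0}^{T-1}\mathbb I(\mathcal G_{a,s}>1/T)\ \Big|\ Z_a(T)\Big] + \EE\Big[\sum_{s=0}^{T-1}\big(1/\mathcal G_{1,s}-1\big)\ \Big|\ Z_1(T)\Big].
\end{align*}
The first sum is controlled by noting that once $s=\mathcal T_a(t)$ is large enough that the concentration radius $\sqrt{C_1\log T/s}$ is below (a constant times) $\Delta_a$, the sampled $\btheta_{a,t}$ cannot make $\bphi_a^\top\btheta_{a,t}$ reach within $\epsilon$ of $\bar X_1$ except on an exponentially small Gaussian-tail event (coming from the scaling noise $\mathcal N(\btheta_a(n_a),\frac{1}{\tau_a n_a}I)$ in Line \ref{line:scaled_sample}); summing the threshold index and the exponential tails gives the $\frac{4B_a^2C_1C_2}{\Delta_a}\log\frac{8B_a^2C_1}{\Delta_a^2}+\frac{4B_a^2C_1}{\Delta_a}\log T$ terms, where the $B_a^2$ factors enter through $|\bphi_a^\top(\btheta_{a,t}-\btheta_a^*)|\le B_a\|\btheta_{a,t}-\btheta_a^*\|$. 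The second sum is bounded using the already-stated anti-concentration estimate $\EE[1/\mathcal G_{1,s}]\le 30$ (obtained via exponential moment bounds on $\|\btheta_1(n_1)-\btheta_1^*\|$ and a suitable choice of $\tau_1$), so that $\sum_{s}(1/\mathcal G_{1,s}-1)\le 29T$ in expectation — but crucially this $O(T)$ term is multiplied by $\Delta_a$ only through the arms where it is charged, and in the refined argument it should instead appear as a bounded constant $C_2$ per arm by restricting to $s$ below the relevant threshold; I would reconcile this by charging the $1/\mathcal G_{1,s}$ mass only while arm $1$ is under-pulled. Collecting the three pieces and adding back the $O(\Delta_a)$ from the bad event yields the instance-dependent bound with the stated constants.

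Finally, for the instance-independent $\widetilde{\mathcal O}(\sqrt{KT})$ bound I would split the arms into those with $\Delta_a\le \Delta_0$ and those with $\Delta_a>\Delta_0$ for a threshold $\Delta_0$ to be optimized: small-gap arms contribute at most $T\Delta_0$ in total, while large-gap arms each contribute $\widetilde O(1/\Delta_a)\le \widetilde O(1/\Delta_0)$ by the first bound, for a total of $\widetilde O(K/\Delta_0)$; choosing $\Delta_0=\sqrt{K/T}$ balances the two and gives $\widetilde{\mathcal O}(\sqrt{KT})$. The main obstacle I anticipate is the second step — carefully handling the $\sum_s(1/\mathcal G_{1,s}-1)$ term so that it does not leak an unbounded $O(T)$ factor into the per-arm bound: this requires translating the Wasserstein/tail guarantee of \Cref{theorem:posterior_concentration_main} for arm $1$ into a lower bound on the probability that $\bphi_1^\top\btheta_{1,t}$ exceeds $\bar X_1-\epsilon$ that is bounded below by a constant for all but a controlled number of rounds, which is delicate because the approximate posterior is only an $\widetilde O(1/\sqrt{n_1})$-accurate surrogate (not exact) and its sub-Gaussian lower tail must be established uniformly in $n_1$. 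A secondary technical nuisance is bookkeeping the dependence between $\mathcal T_a(t)$ and $\btheta_{a,t}$ when conditioning on $Z_a(T)$, which I would handle exactly as in \citet{agrawal2017near} by working with the sorted sequence of per-arm pulls rather than the calendar time $t$.
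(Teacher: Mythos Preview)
Your proposal follows essentially the same route as the paper: regret decomposition, good-event conditioning with $\delta=1/T^2$, the two-sum split via $\mathcal E_a(t)$, threshold-based control of each sum, and the $\sqrt{K/T}$ cutoff for the instance-independent bound. Two bookkeeping corrections are worth noting. First, you attribute both the $\frac{4B_a^2C_1C_2}{\Delta_a}\log\frac{8B_a^2C_1}{\Delta_a^2}$ and the $\frac{4B_a^2C_1}{\Delta_a}\log T$ terms to the indicator sum $\sum_s\mathbb I(\mathcal G_{a,s}>1/T)$; in the paper only the $\log T$ term comes from that sum (the indicator vanishes once $s\ge \frac{4B_a^2C_1}{\Delta_a^2}\log T$ by concentration), while the $C_2$-containing term comes entirely from $\sum_s(1/\mathcal G_{1,s}-1)$. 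Second, the latter sum is not reduced to a ``bounded constant $C_2$ per arm'' but to $\ell\, C_2+O(1)$ with $\ell=\big\lceil\frac{4B_a^2C_1}{\Delta_a^2}\log\frac{8B_a^2C_1}{\Delta_a^2}\big\rceil$: for $s\le\ell$ one applies the anti-concentration bound $\EE[1/\mathcal G_{1,s}]\le C_2$, and for $s>\ell$ one uses \emph{concentration} (not anti-concentration) to get $\mathcal G_{1,s}\ge 1-\exp\big(-s\Delta_a^2/(4B_a^2C_1)\big)$, making the tail $\sum_{s>\ell}\big(1/\mathcal G_{1,s}-1\big)$ summable to $O(1)$. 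This is precisely the mechanism you sketch in your ``main obstacle'' paragraph, so the underlying idea is right---just reassign which sum produces which final term.
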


\begin{remark}
Our regret bound in \Cref{theorem:regret_main} establishes an instance-dependent guarantee of $\mathcal{O}(\log T/\Delta)$ for any horizon $T$, matching the optimal problem-dependent rate and yielding a near-optimal instance-independent bound for MABs. This result aligns with the performance of TS-SGLD \citep{mazumdar2020thompson} under similar distributional assumptions. However, our approach enables the use of fixed step size and facilitates posterior concentration analysis within a simplified framework.
\end{remark}

\section{Experiments}\label{sec:experiment}

This section presents the experiments conducted to verify that the proposed TS-SA algorithm can achieve performance equivalent to or better than that of the most advanced MAB algorithms available. We discuss experimental setups and analyze the results subsequently.

\paragraph{Baselines.}

We compare our proposed algorithm with several well-known MAB algorithms, which we include UCB \citep{auer2002finite}, TS \citep{agrawal2017near}, $\epsilon$-TS \citep{jin2023thompson}, and TS-SGLD \citep{mazumdar2020thompson}. All these strategies, similar to the ones mentioned earlier, select arms at round $t$ according to $A_t = \arg\max_{a} \langle\bphi_a, \btheta_{a,t}\rangle$, where the construction of $\btheta_{a,t}$ for each approach is specified in \Cref{appendix:subsec:baseline}.

\begin{remark}
    To construct stochastic gradient estimates of TS-SA in experiments, we approximate the full gradient using a mini-batch of recent observations. Specifically, let $m = \min(\mathcal T_a(t), \mathcal{B})$ denote the number of most recent rewards available for the selected arm $a$ at round $t$. The stochastic gradient is $\boldsymbol{g}^{(j)} = \frac{1}{m} \sum_{i=0}^{m-1} \nabla_{\btheta} \log p(X_{n-i}|\btheta^{(j)})$, where $\btheta^{(j)}$ denotes the iterate at the $j$-th step. We then construct LMC update $\bomega^{(j+1)} = \btheta^{(j)} + h \boldsymbol{g}^{(j)} + \mathcal{N}\big(\zero, 2h \Ib\big)$, replacing Line \ref{line:one_step_SGLD} in \Cref{alg:TS_SA}.
\end{remark}

\begin{figure*}
  \centering
        \begin{subfigure}[b]{0.24\textwidth}
            \centering
            \includegraphics[width=\textwidth]{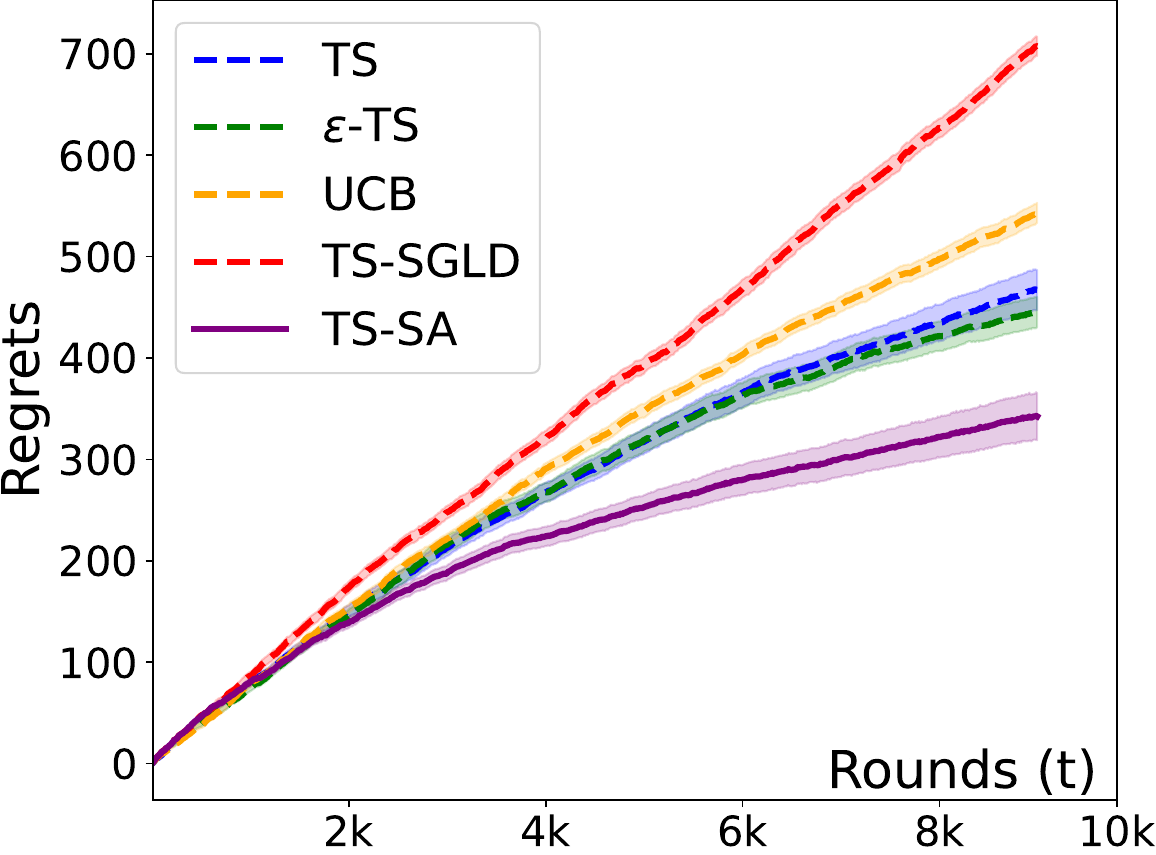}
        \caption{SGR,$\Delta = 0.1$,$K=10$}
        \end{subfigure}
        \begin{subfigure}[b]{0.24\textwidth}
            \centering
            \includegraphics[width=\textwidth]{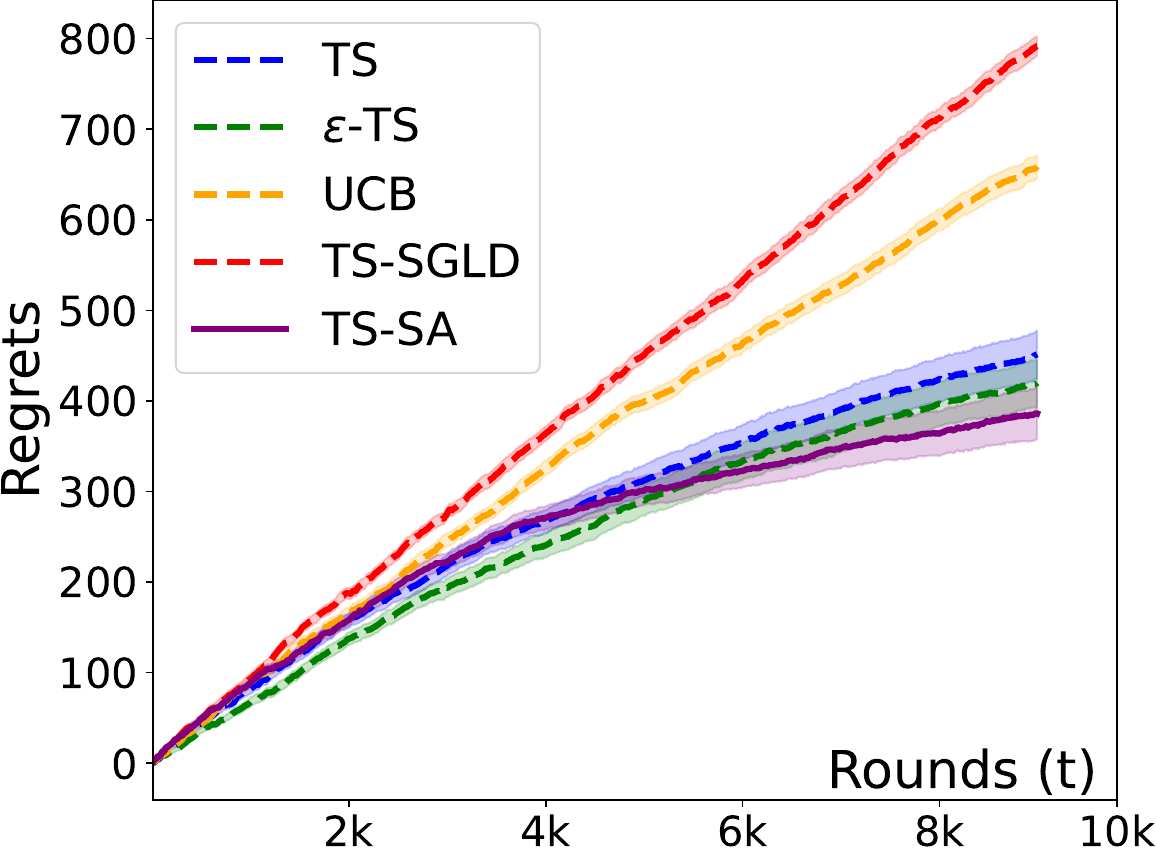}
        \caption{MGR,$\Delta = 0.1$,$K=10$}
        \end{subfigure}
        \begin{subfigure}[b]{0.24\textwidth}
            \centering
            \includegraphics[width=\textwidth]{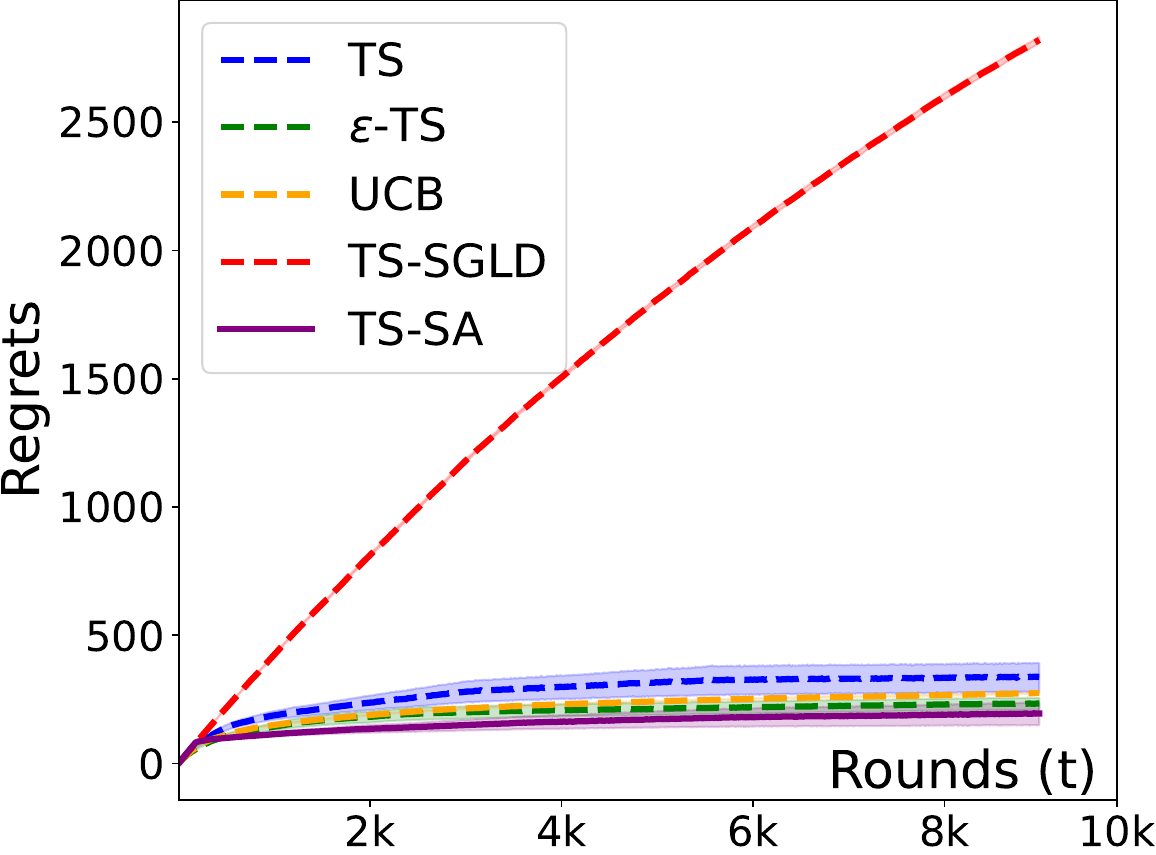}
        \caption{SGR,$\Delta = 0.5$,$K=10$}
        \end{subfigure}
        \begin{subfigure}[b]{0.24\textwidth}
            \centering
            \includegraphics[width=\textwidth]{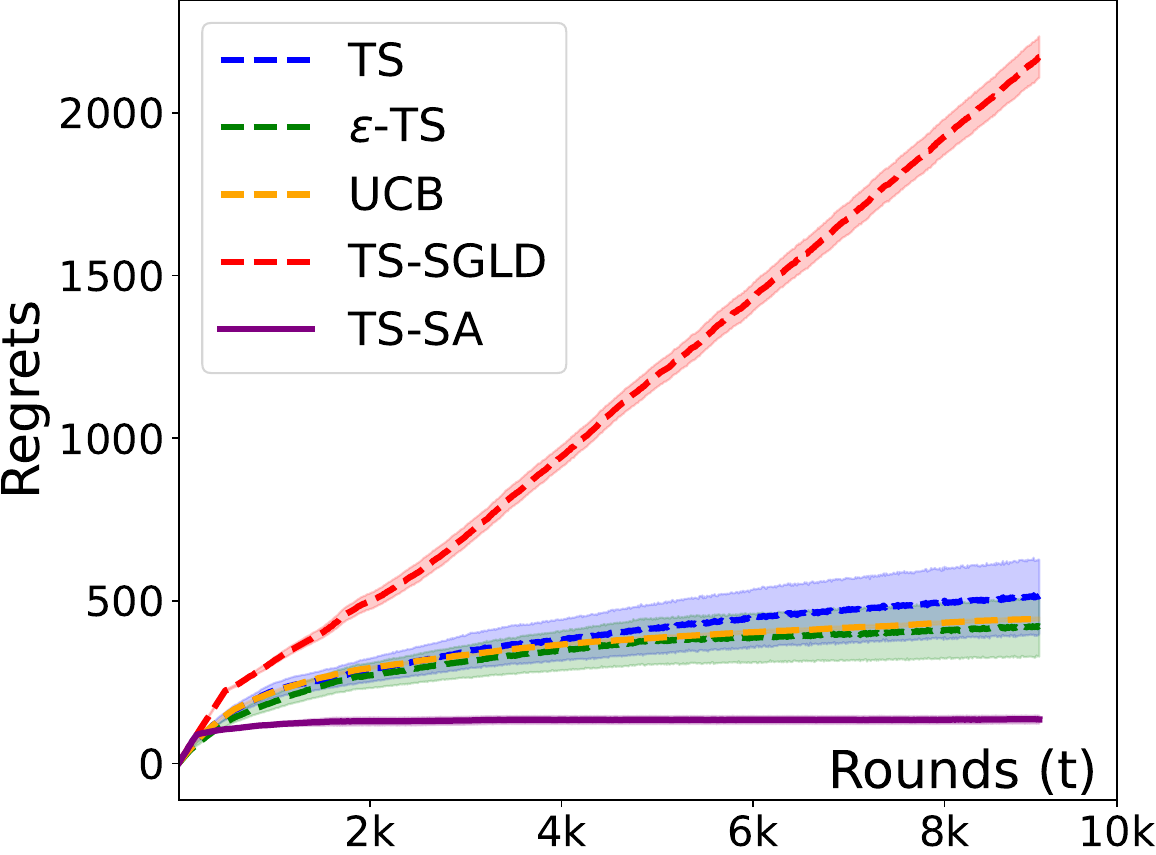}
        \caption{MGR,$\Delta = 0.5$,$K=10$}
        \end{subfigure}

        \begin{subfigure}[b]{0.24\textwidth}
            \centering
            \includegraphics[width=\textwidth]{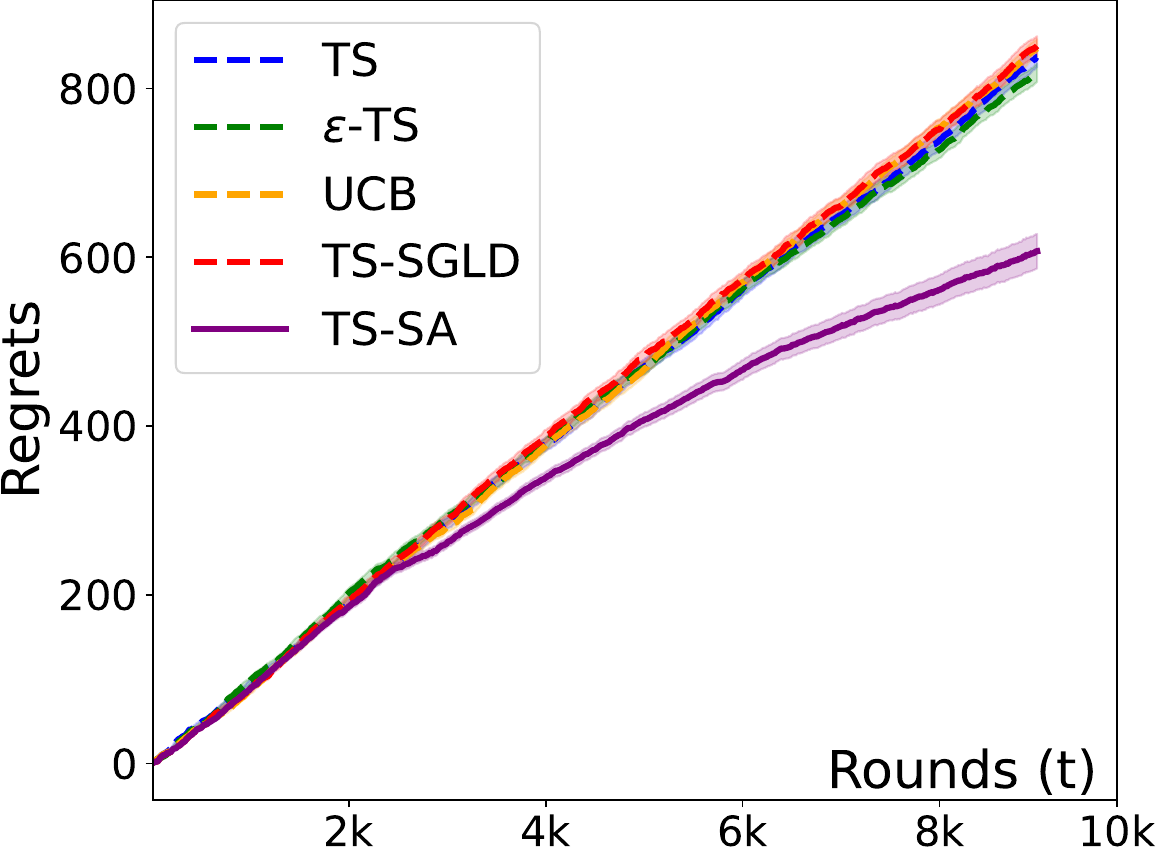}
        \caption{SGR,$\Delta = 0.1$,$K=50$}
        \end{subfigure}
        \begin{subfigure}[b]{0.24\textwidth}
            \centering
            \includegraphics[width=\textwidth]{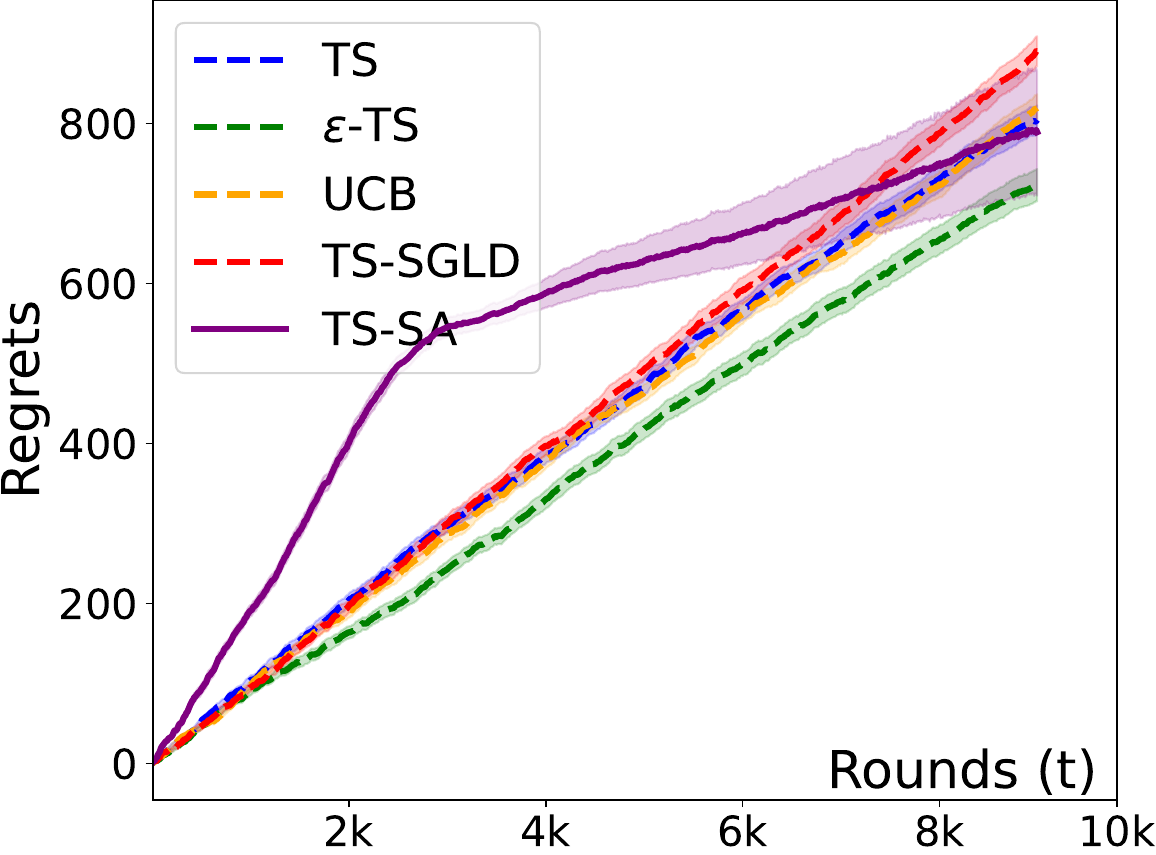}
        \caption{MGR,$\Delta = 0.1$,$K=50$}
        \end{subfigure}
        \begin{subfigure}[b]{0.24\textwidth}
            \centering
            \includegraphics[width=\textwidth]{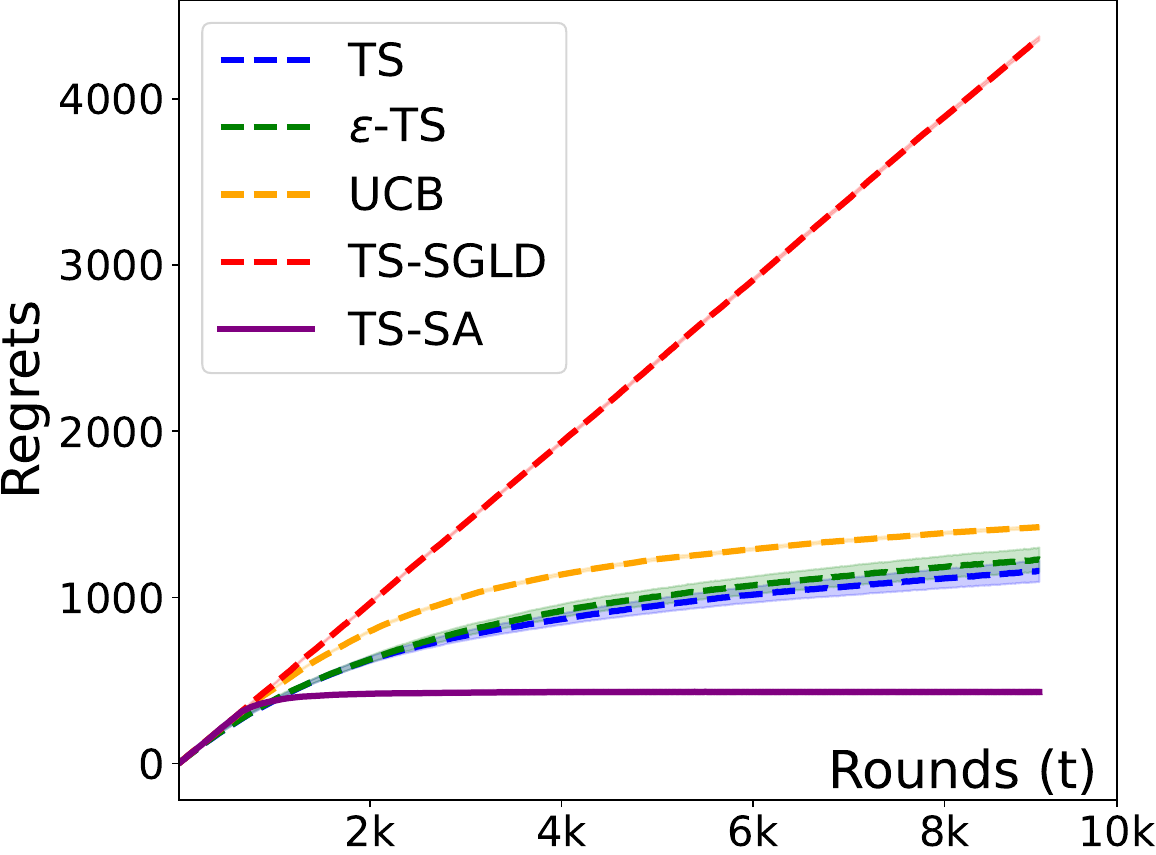}
        \caption{SGR,$\Delta = 0.5$,$K=50$}
        \end{subfigure}
        \begin{subfigure}[b]{0.24\textwidth}
            \centering
            \includegraphics[width=\textwidth]{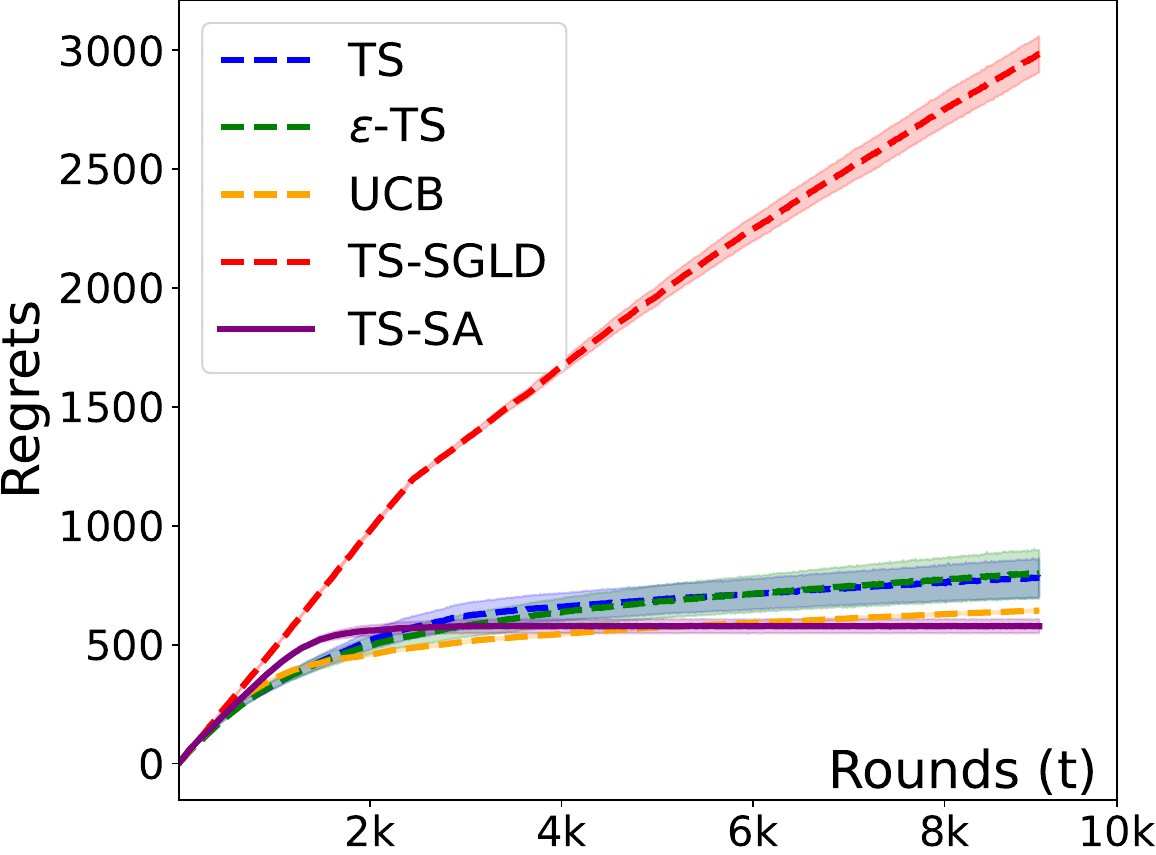}
        \caption{MGR,$\Delta = 0.5$,$K=50$}
        \end{subfigure}
    \vspace{-.00 in}
  \caption{Regrets under different reward gaps $\Delta$, different number of arms $K$, and reward settings.}\label{fig:results:main}
\vspace{-.10 in}
\end{figure*}

\paragraph{Implementation and Experimental Setup.} 

We evaluate our method in two distinct Gaussian MAB settings: Single Gaussian Reward (SGR) and Mixture Gaussian Rewards (MGRs). For all experiments, we use $K=10,50$ arms and run each for $T=10,000$ rounds, with results averaged over 50 independent trials. To ensure a fair comparison, we standardize the hyperparameter tuning process for all methods under evaluation. Specifically, for TS-SA, we tune the SGLD step size $h$ and temperature $\tau$. For UCB, we tune the temperature $\tau$ to estimate $\hat{\mu}_a(t) + \tau \sqrt{2 \log t / \mathcal{T}_a(t)}$. For TS, we tune the temperature $\tau$ in the posterior variance $\tau / (1/\sigma_0^2 + \mathcal{T}_a(t))$. For $\epsilon$-TS, we tune both the temperature $\tau$ and the exploration probability $\epsilon$. For TS-SGLD, we tune the temperature $\tau$ and the SGLD step size $h$. In SGR, we construct a problem where the optimal arm has a mean reward of $\mu_1 = 3.0$, and the remaining $K-1$ arms have identical mean rewards of $\mu_i = 3.0 - \Delta$ for $i=2, \dots, K$. We test two reward gaps: $\Delta \in \{0.1, 0.5\}$, and all arms are endowed with Gaussian noise of variance $\sigma^2=1.0$. To evaluate performance under a more complex reward distribution, we further design a multi-arm bandit problem for a mixture Gaussian setting. Here, the optimal arm's reward is drawn from a uniform mixture of two Gaussians: $\frac{1}{2} \mathcal{N}(\mu_1, \sigma^2) + \frac{1}{2} \mathcal{N}(\mu_1+\Delta, \sigma^2)$ where $\Delta \in \{0.1, 0.5\}$. Conversely, the rewards for the suboptimal arms are drawn from a similar mixture with lower means: $\frac{1}{2} \mathcal{N}(\mu_1-\Delta, \sigma^2) + \frac{1}{2} \mathcal{N}(\mu_1, \sigma^2)$.  

We here provide some implementation details for TS-SA. To sample $\btheta$ for each arm, we compute stochastic gradients using the most recent $\mathcal{B}$ rewards to control both variance and memory cost. We also include a warm-up phase by pulling each arm $\Omega$ times before applying TS-SA. This initialization incurs minor early-stage regret but significantly improves estimation stability and final performance. Empirical results are shown in \Cref{fig:results:main}, where the x-axis denotes the number of rounds, and the y-axis denotes accumulated regrets. The blue dashed line represents the result of vanilla TS, the green dashed line denotes $\epsilon$-TS, the orange line represents TS-SGLD, and the purple solid line denotes the result of TS-SA. The additional results are summarized in \Cref{appendix:sec:experiments}.

\paragraph{Effect of Gradient Window.} 
As discussed in \Cref{sec:comparison_with_ts-sgld}, we also observe that constructing gradient estimates from only the most recent $m$ rewards consistently outperforms uniform sampling from all rewards. Focusing on a narrow and recent window reduces gradient variance, whereas a single sample suffers high variance and falters on ``hard” tasks where arm means are close and difficult to distinguish. This strategy therefore adapts well across task difficulty levels.

\paragraph{Parameter Sensitivity.}

We also conducted ablation studies to evaluate the sensitivity of each individual hyperparameter used in our algorithm, with results reported in \Cref{sec:ablation}. The results highlight that a good warm-up, batch size, and some SA parameter are critical to performance. These findings offer practical guidance on key design choices and confirm the robustness and sensitivity of TS-SA across various parameter configurations.

\paragraph{Failure Modes and Practical Guidelines.} 

Our ablation studies also reveal two key failure modes that can potentially lead to linear regret. First, when small batch sizes are used in tasks with high reward variance, the optimal arm is often underestimated. This can be mitigated by using a sufficiently large mini-batch size and collecting more warm-up reward observations, such as $\mathcal{B} \geq 20, \Omega \geq 20$. Subsequently, overly large temperature values smooth the posterior too aggressively, which may reduce the informativeness of samples and degrade exploration. We find that bounding the temperature by $\tau \leq 0.1$ and applying a mild decay schedule helps avoid this issue and maintains the exploration–exploitation tradeoff.

\section{Conclusion and Outlook}\label{sec:conclude}

This paper introduced TS-SA, a new TS-based algorithm designed to address the challenges in \cite{mazumdar2020thompson}. 
By leveraging updates from recent reward(s) within the SA framework, TS-SA simulates a stationary SGLD process rather than a dynamically evolving posterior. This design provides several key advantages: (1) it maintains a \textit{constant memory cost} by freezing the posterior target in each round, which avoids the non-stationarity and moving-target issues common in na\"ive LMC; (2) it is supported by theoretical guarantees, achieving a \textit{near-optimal regret bound} of $\widetilde{\mathcal{O}}(\sqrt{KT})$ under standard assumptions; and (3) it demonstrates \textit{superior empirical performance} over standard baselines such as UCB, TS, and TS-SGLD.

While our method demonstrates strong empirical results, we also observe some limitations. For example, it relies on tuning several hyperparameters (e.g., SA step sizes and batch size $\mathcal{B}$). We also observe that TS-SA is more intuitive and robust to set compared to the sensitive decaying step-size schedules in TS-SGLD. 
Detailed tuning procedures and sensitivity analyses are provided in \Cref{appendix:sec:experiments}. Future work may extend TS-SA to contextual bandits or reinforcement learning settings, where the benefits of stationary sampling and low-variance updates may lead to scalable and theoretically grounded exploration strategies.
Overall, we want to highlight that the proposed TS-SA offers more consistent and efficient alternatives to traditional TS-SGLD methods. These findings imply that TS-SA offers both theoretical and practical advantages.

\newpage
\appendix

\section{Notations}
\label{sec:notations}

Before presenting our proof, we provide \Cref{table_notation} for notation clarification. 

\begin{table}[!htbp]
\caption{Notation summary in the analysis of TS-SA.}\label{table_notation}
\centering
\begin{tabular}{|c|c|}
\hline
Symbols & Explanations    \\ \hline  \hline
$h$ & SGLD step-size \\
$h'$ & joint SGLD \& SA step-size\\
$\gamma_n$ & SA step-size at $n$-th round pulling \\ 
$d_a$ & dimension of model parameter space for arm $a$ \\ 
$\pi_a$ & prior for arm $a$ \\
$\bphi_a$ & bandit feature for arm $a$ \\
$B_a$ & norm of $\bphi_a$ \\
$\Delta_a$ & expected reward gap between the first arm and arm $a$ \\ 
$m_a$ & strongly log-concave constant for $\btheta$ and arm $a$ \\ 
$\nu_a$ & strongly log-concave constant for the reward $X$ and arm $a$ \\ 
$L_a$ & Lipschitz smooth constant for arm $a$ \\
$\tau_a$ & temperature to control the sharpness of the posterior distribution for arm $a$ \\ 
$\mathcal{T}_{a}(t)$ & number of times arm $a$ has been pulled before step $t$ \\
\hline \hline
$\bomega^{(j)}$ & latent sample generated by one-step SGLD at $j$-th iteration \\
$\btheta^*_a$ & true model parameter for arm $a$ \\ 
$\btheta^{(j)}$ & sample generated after SA step at $j$-th iteration \\ 
$\btheta_{a,t}$ & sampled parameter for arm $a$ at step $t$ from \Cref{alg:TS_SA}\\
$\btheta_a(n_a)$ & estimated parameter for arm $a$ at $n$-th round pulling\\
$\bar X_a$ & expected reward for arm $a$ \\
$\hat X_{a,t}$ & $\bphi_a^\top \btheta_{a,t}$ : estimated reward for arm $a$ at step $t$ \\
$X_{a, t}$ & received reward for arm $a$ at step $t$ \\ 
$X_a(n_a)$ & received reward for arm $a$ at $n$-th round pulling\\
$\mu^{\text{SA}}_a$ & stationary target distribution of arm $a$\\
$\widehat{\mu}_a^{\text{SA}}(n_a)$ & distribution of $\btheta_a(n_a)$ at $n_a$-th pulling from \Cref{alg:TS_SA} \\
$\widehat{\mu}_a^{\text{SA}}(n_a)[\tau_a]$ &  distribution of $\btheta_{a, t}$ at $n_a$-th pulling from \Cref{alg:TS_SA} \\
\hline
\end{tabular}
\end{table}

We have two notation systems for parameter $\btheta$ and reward $X$: 1) In step $t$, we use $\btheta_{a,t}$ to reflect sampled parameter for all arms, which are only used for arm selection and use $X_{A_t,t}$ to denote received reward for selected arm. This notation is mainly used for regret analysis. 2) For the round we select arm $a$ ($A_t = a$), we use $\btheta_a(n_a)$, $X_a(n_a)$, where $n_a = \mathcal{T}_a(t)$ denote the arm $a$ pulling times before step $t$, to reflect newest estimated parameter and newest received reward for all arms. They only change after the corresponding arm is pulled. This notation is mainly used in posterior concentration analysis because we only care about the changes of estimated parameter for each arm.

\section{Proof for Posterior Concentration}
\label{appendix:sec:posterior:concentration}

In this section, we prove the posterior concentration of approximate samples from \Cref{alg:TS_SA}.

The following lemma quantifies how tightly our stationary target posterior concentrates around the true arm parameter. This bound is crucial for our analysis to show that the posterior concentration gap norm has sub‑Gaussian tails.

\begin{lemma}[Concentration of target posterior]
\label{lem:posterior_concentration_continuous_converged}
Suppose \Crefrange{assum:strongly_concave_theta}{assum:joint_lipschitz} hold. For any arm $a$, $\mu^{\text{SA}}_a \propto \exp \big( \sum_{i=1}^T \log p_a\big({X}_i|\btheta\big) \big)$ is the target posterior distribution, then for any $p \geq 1$,
\begin{align*}
  \mathbb{E}_{{\btheta}\sim \mu^{\text{SA}}_a}\big[\big\|{\btheta}-\btheta_a^*\big\|^p\big]^{\frac{1}{p}} \leq \sqrt{\frac{8d_a}{m_a T} + \bigg(\frac{2 L_a^2 d_a}{m_a^2 \nu_a T} + \frac{64}{m_a  T} \bigg)p}.
\end{align*}
Therefore, $\big\|{\btheta}-\btheta_a^*\big\|_{{\btheta} \sim \mu^{\text{SA}}_a}$ has sub-Gaussian tails.
\end{lemma}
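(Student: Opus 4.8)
The plan is to follow the diffusion-process approach referenced in the excerpt (Theorem 1 of \citet{mou2022diffusionprocessperspectiveposterior} and Theorem B.1 of \citet{mazumdar2020thompson}): since $\mu_a^{\text{SA}}$ is the stationary distribution of the overdamped Langevin SDE $d\btheta_t = \tfrac12 \nabla_{\btheta} F_T(\btheta_t)\,dt + \tfrac{1}{\sqrt T}\,dB_t$ (with $F_T(\btheta) = \sum_{i=1}^T \log p_a(X_i|\btheta)$, so $\nabla F_T$ is the sum of the per-sample gradients), moment bounds on a draw from $\mu_a^{\text{SA}}$ can be obtained by analyzing the long-time behaviour of the SDE started, say, at $\btheta_a^*$. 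First I would introduce the time-weighted Lyapunov function $V(t,\btheta) = \tfrac12 e^{\beta t}\|\btheta - \btheta_a^*\|^2$ for a parameter $\beta>0$ to be chosen, and apply It\^o's lemma to $V(t,\btheta_t)$ along the SDE. This gives three contributions: the explicit $\partial_t V = \beta V$ term, the drift term $\tfrac12 e^{\beta t}\langle \btheta_t - \btheta_a^*, \nabla F_T(\btheta_t)\rangle$, and the generator's second-order term $\tfrac{d_a}{T} e^{\beta t}$ from the diffusion, plus a martingale part $\int_0^t e^{\beta s}\langle \btheta_s - \btheta_a^*, dB_s\rangle / \sqrt T$.

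The key algebraic step is to control the drift inner product. Writing $\nabla F_T(\btheta) = \sum_{i=1}^T \nabla_{\btheta}\log p_a(X_i|\btheta)$, I would split each summand as $\nabla_{\btheta}\log p_a(X_i|\btheta) - \nabla_{\btheta}\log p_a(X_i|\btheta_a^*)$ plus $\nabla_{\btheta}\log p_a(X_i|\btheta_a^*)$. \Cref{assum:strongly_concave_theta} gives $\langle \btheta - \btheta_a^*, \nabla_{\btheta}\log p_a(X_i|\btheta) - \nabla_{\btheta}\log p_a(X_i|\btheta_a^*)\rangle \le -m_a\|\btheta-\btheta_a^*\|^2$, so the first part contributes $-m_a T\|\btheta-\btheta_a^*\|^2$, i.e. a negative term $-\tfrac{m_a T}{2} e^{\beta t}\|\btheta_t-\btheta_a^*\|^2 = -m_a T\,V(t,\btheta_t)$. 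Choosing $\beta$ a constant multiple of $m_a T$ (e.g. $\beta = m_a T/2$ or similar) makes $\beta V - m_a T V$ strictly negative, so the "bad" terms left to absorb are: the residual score term $\tfrac12 e^{\beta t}\langle \btheta_t-\btheta_a^*, \sum_i \nabla_{\btheta}\log p_a(X_i|\btheta_a^*)\rangle$, handled by Young's inequality against the negative quadratic (this is where the $L_a^2 d_a/(m_a^2\nu_a)$ term arises — the gradient-at-truth $\sum_i \nabla_{\btheta}\log p_a(X_i|\btheta_a^*)$ is a sum of independent mean-zero vectors, which by \Cref{assum:joint_lipschitz} and \Cref{assum:strongly_concave_X} are sub-Gaussian with the appropriate variance proxy, so its squared norm has expectation $\lesssim L_a^2 d_a T/\nu_a$), and the diffusion term $\tfrac{d_a}{T}e^{\beta t}$, which after multiplying through and integrating gives the $d_a/(m_a T)$ scale.

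Next I would take $\sup_{t\in[0,N]}$ and $L^p$ norms. Using the Burkholder–Davis–Gundy inequality to bound the $L^p$ norm of the martingale part by (a constant times $\sqrt p$ times) the $L^p$ norm of its quadratic variation, then absorbing that quadratic variation back into $\sup_t V$ via Young's inequality and rearranging, I get a bound of the form $\mathbb{E}[\sup_{t\le N} V(t,\btheta_t)^p]^{1/p} \lesssim e^{\beta N}\big(\tfrac{d_a}{m_a T} + (\tfrac{L_a^2 d_a}{m_a^2\nu_a T} + \tfrac{1}{m_a T})p\big)$ — the $\sqrt p$ from BDG gets squared into a linear-in-$p$ term once I solve the resulting quadratic-in-norm inequality, and the "$64$" and "$8$" are just the explicit constants that fall out. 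Dividing by $e^{\beta N}$ (the definition of $V$ at time $N$) removes the exponential, and letting $N\to\infty$ the law of $\btheta_N$ converges to $\mu_a^{\text{SA}}$, yielding $\mathbb{E}_{\btheta\sim\mu_a^{\text{SA}}}[\|\btheta-\btheta_a^*\|^{2p}]^{1/2p}$ bounded as claimed (after renaming $2p\to p$ and adjusting constants, or working directly with the stated exponent). The sub-Gaussian tail is then immediate: a bound of the form $\mathbb{E}[\|\btheta-\btheta_a^*\|^p]^{1/p} \le \sqrt{A + Bp}$ for all $p\ge1$ is equivalent (up to constants) to $\|\btheta-\btheta_a^*\|$ being sub-Gaussian with proxy $O(A+B)$, by the standard moment characterization.

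The main obstacle I anticipate is the bookkeeping in the BDG/Young's-inequality step: one must choose $\beta$ and the Young's-inequality splitting constants simultaneously so that all negative quadratic terms dominate all the positive ones uniformly in $t$ and the final constants come out clean, and one must be careful that the $\sup_t$ is taken before the expectation (so the martingale bound genuinely controls the running maximum, not just the terminal value). A secondary subtlety is justifying the interchange of limits — that moments of $\btheta_N$ converge to moments under $\mu_a^{\text{SA}}$ — which follows from exponential ergodicity of the strongly-log-concave Langevin diffusion plus uniform-in-$N$ moment bounds (hence uniform integrability), but should be stated explicitly rather than glossed over. The sub-Gaussianity of the score at the truth, $\nabla_{\btheta}\log p_a(X_i|\btheta_a^*)$, is itself borrowed from \citet{mazumdar2020thompson}'s Lemma 5 (which uses \Cref{assum:strongly_concave_X} and \Cref{assum:joint_lipschitz}), so I would cite that rather than reprove it.
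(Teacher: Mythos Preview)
Your approach is essentially identical to the paper's: same Lyapunov function $V(t,\btheta)=\tfrac12 e^{\beta t}\|\btheta-\btheta_a^*\|^2$, same It\^o decomposition, same splitting of the drift via \Cref{assum:strongly_concave_theta} plus Young's inequality on the score-at-truth, same BDG bound on the martingale supremum, and same $N\to\infty$ limit.

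Two small corrections. First, your normalization of $F_T$ is inconsistent: if $F_T$ is the \emph{sum} $\sum_i\log p_a(X_i|\btheta)$ while the diffusion coefficient is $1/\sqrt{T}$, the stationary law of your SDE is $\propto\exp(T\sum_i\log p_a(X_i|\btheta))$, not $\mu_a^{\text{SA}}$. The paper takes $F_T(\btheta)=\tfrac1T\sum_i\log p_a(X_i|\btheta)$ (the average), so the stationary is $\exp(TF_T)=\mu_a^{\text{SA}}$, strong concavity gives a drift contraction of $-m_a$ (not $-m_aT$), and one sets $\beta=m_a/2$; the score $\varepsilon_T=\|\nabla F_T(\btheta^*)\|$ is then the norm of an \emph{average} of i.i.d.\ sub-Gaussians and is $L_a\sqrt{d_a/(\nu_aT)}$-sub-Gaussian. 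Second, for passing to the limit the paper just invokes Fatou's lemma, $\mathbb{E}_{\mu_a^{\text{SA}}}[\|\btheta-\btheta_a^*\|^p]\le\liminf_N\mathbb{E}[\|\btheta_N-\btheta_a^*\|^p]$, which avoids the uniform-integrability argument you anticipate.
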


\begin{proof}
For notation simplicity, we only focus on arm $a$ and remove the index about the arm. We only focus on the rounds that arm $a$ is pulled. 
Based on the discussion in \Cref{sec:algorithm}, we focus on the target posterior consider the following SDE,
\begin{align}\label{equ:SA_sde} 
d\btheta_t = \frac{1}{2} \nabla_{\btheta} F_T(\btheta_t)  dt + \frac{1}{\sqrt{T}} dB_t, 
\end{align} 
where $B_t$ denotes standard Brownian motion.
We should clarify that here $t$ in \eqref{equ:SA_sde} is the time point of the stochastic process instead of the step in the algorithm and $T$ is the total number of steps in the algorithm. 
Based on the classic result, as $t \rightarrow \infty$ the distribution $P_t$ of $\btheta_t$ becomes
\begin{align*}
    \lim_{t \rightarrow \infty} P_t \big(\btheta | {X}_1,...,{X}_T\big) \propto \exp \big(T F_T(\btheta)\big) = \exp \bigg(\sum_{i=1}^T \log p\big({X}_i|\btheta\big) \bigg).
\end{align*}
We define the potential function $V(t,\btheta) = \frac{1}{2} e^{\beta t}\|\btheta - \btheta^*\|^2$, then we use time-space It$\text{\^o}$'s lemma and obtain that
\begin{align*}
   d V(t, \btheta_t)&=\frac{\partial V}{\partial t}(t, \btheta_t)  d t+\nabla_{\btheta} V(t, \btheta_t)  d \btheta_t+\frac{1}{2} \operatorname{Tr}\Big(\nabla_{\btheta}^2 V(t, \btheta_t)  d\langle \btheta \rangle_t\Big) \\
  &=\frac{\beta}{2} e^{\beta t}\|\btheta_t-\btheta^*\|^2 d t + \frac{1}{2} e^{\beta t}(\btheta_t-\btheta^*)^{\top} \nabla_{\btheta} F_T(\btheta_t) d t + \frac{1}{\sqrt{ T}} e^{\beta t}(\btheta_t-\btheta^*)^{\top}  d B_t \\
  &\qquad+\frac{1}{2} \operatorname{Tr}\bigg(\frac{e^{\beta t}}{ T} \Ib\bigg)  d t.
\end{align*}
Integrating both sides from $0$ to $t$, we get
\begin{align*}
  V(t, \btheta_t)&=V(0, \btheta_0)+ \underbrace{\frac{\beta}{2} \int_0^t e^{\beta s}\|\btheta_s-\btheta^*\|^2  d s}_{T_1}+\underbrace{\frac{1}{2} \int_0^t e^{\beta s}(\btheta_s-\btheta^*)^{\top} \nabla_{\btheta} F_T(\btheta_s)  d s}_{T_2} \\
  \qquad& + \underbrace{\frac{1}{\sqrt{ T}} \int_0^t e^{\beta s}(\btheta_s-\btheta^*)^{\top}  d B_s}_{T_3} + \underbrace{ \frac{d}{2 T} \int_0^t e^{\beta s}  d s}_{T_4}.
\end{align*}
Note that 
\begin{align*}
  T_1&=\frac{\beta}{2} \int_0^t e^{\beta s}\|\btheta_s-\btheta^*\|^2  d s, \\
  T_2&=\frac{1}{2} \int_0^t e^{\beta s}(\btheta_s-\btheta^*)^{\top} \nabla_{\btheta} F_T(\btheta_s)  d s, \\
  T_3&=\frac{1}{\sqrt{ T}} \int_0^t e^{\beta s}(\btheta_s-\btheta^*)^{\top}  d B_s  \triangleq \frac{1}{\sqrt{ T}} M_t, \\
  T_4&=\frac{d}{2 T} \int_0^t e^{\beta s}  d s.
\end{align*}
For term $T_2$, we can further bound it as follows 
\begin{align}
\label{equ:continuous_converged_bound_T2}
  T_2 &= \frac{1}{2} \int_0^t e^{\beta s}(\btheta_s-\btheta^*)^{\top} \nabla_{\btheta} F_T(\btheta_s)  d s \notag \\
  &= \frac{1}{2} \int_0^t e^{\beta s} \langle \btheta_s-\btheta^*, \nabla_{\btheta} F_T(\btheta_s) - \nabla_{\btheta} F_T(\btheta^*) \rangle  d s + \frac{1}{2} \int_0^t e^{\beta s} \langle \btheta_s-\btheta^*, \nabla_{\btheta} F_T(\btheta^*) \rangle  d s \notag \\
  &\leq - \frac{m}{2} \int_0^te^{\beta s}\|\btheta_s-\btheta^*\|^2  ds + \frac{1}{2} \int_0^t e^{\beta s} \|\btheta_s-\btheta^*\| \big\|\nabla_{\btheta} F_T(\btheta^*)\big\|  d s, 
\end{align}
where the last inequality holds because of \Cref{assum:strongly_concave_theta} and Cauchy-Schwarz inequality. Next we will prove $\varepsilon_T \triangleq \|\nabla_{\btheta} F_T(\btheta^*)\|$ is sub-Gaussian based on \Cref{assum:strongly_concave_X,assum:joint_lipschitz}.
Let $v$ be an arbitrary unit vector on the $d$-dimensional sphere and define the function $g: \mathbb{R}^d \rightarrow \mathbb{R}$ as $g(X) = \langle \nabla_{\btheta} \log p(X | \btheta^*), v \rangle$. Then function $g$ is $L$-Lipschitz. To verify this, for any $X_1, X_2 \in \mathbb{R}$, we have
\begin{align*}
  |g(X_1) - g(X_2)| &= \big|\langle \nabla_{\btheta} \log p(X_1 | \btheta^*) - \nabla_{\btheta} \log p(X_2 | \btheta^*), v \rangle \big| \\
  &\leq \big\|\nabla_{\btheta} \log p(X_1 | \btheta^*) - \nabla_{\btheta} \log p(X_2 | \btheta^*)\big\| \|v\| \\
  &\leq L |X_1 - X_2|,
\end{align*}
where the second inequality holds due to \Cref{assum:joint_lipschitz}. Since $\log p_a(X|\btheta_a^*)$ is $\nu_a$-strongly concave over $X$, we can apply \Cref{proposition:concentration}, which provides a concentration inequality for Lipschitz functions under log-concave measures. As a result, $L$-Lipschitz function $g(X)$ of $X$ is sub-Gaussian with parameter $L/\sqrt{\nu}$. In particular, for any arbitrary unit vector $v$, the projection $\langle v, \nabla_{\btheta} \log p(X \mid \btheta_a^*) \rangle$ is $L/\sqrt{\nu}$-sub-Gaussian. This implies that the full random vector $\nabla_{\btheta} \log p(X | \btheta^*)$ is $L/\sqrt{\nu}$-sub-Gaussian in the standard sense. 

Since $\nabla_{\btheta} F_T(\btheta^*)$ is the empirical average of $T$ i.i.d. copies of $\nabla_{\btheta} \log p(X \mid \btheta_a^*)$, it follows that $\nabla_{\btheta} F_T(\btheta^*)$ is $L/\sqrt{\nu T}$-sub-Gaussian. Subsequently, by \Cref{lemma:subgaussian_examples} and the fact that $\EE_X [\nabla_{\btheta} \log p(X | \btheta^*)] = 0$, we conclude that $\nabla_{\btheta} F_T(\btheta^*)$ is norm sub-Gaussian with parameter $L\sqrt{d/\nu T}$. Thus, $\varepsilon_T$ is $L\sqrt{d/\nu T}$-sub-Gaussian as claimed.

We then further bound the second term in \eqref{equ:continuous_converged_bound_T2}. Based on Young's inequality: $\langle x, y\rangle _{L^2} \leq \frac{\lambda}{2}\|x\|_{L^2}+\frac{1}{2\lambda}\|y\|_{L^2}$, by letting $x=e^{\frac{1}{2} \beta s}\|\btheta_s-\btheta^*\|, y=e^{\frac{1}{2} \beta s} \varepsilon_T$ and $\lambda = m$, we have
\begin{align*}
  \int_0^t e^{\beta s} \|\btheta_s-\btheta^*\| \varepsilon_T d s &\leq \frac{m}{2} \int_0^{t} e^{\beta s}\|\btheta_s-\btheta^*\|^2 d s + \frac{\varepsilon_T^2}{2m} \int_0^{t} e^{\beta s} d s.
\end{align*}
Then we have
\begin{align*}
  T_2 \leq -\frac{m}{4} \int_0^t e^{\beta s}\|\btheta_s-\btheta^*\|^2 d s + \frac{\varepsilon_T^2}{4m} \int_0^t e^{\beta s} d s.
\end{align*}
Then we can bound $V(t,\btheta_t)$ as follows
\begin{align}
\label{equ:bound_Vt_total}
  V(t, \btheta_t) \leq \frac{1}{2} \|\btheta_0-\btheta^*\|^2 + \frac{2\beta - m}{4} \int_0^t e^{\beta s}\|\btheta_s-\btheta^*\|^2 d s + \frac{1}{\sqrt{ T}} M_t + \bigg(\frac{d}{2 T} + \frac{\varepsilon_T^2}{4m}\bigg) \int_0^t e^{\beta s} d s.
\end{align}
Here we set $\beta = \frac{m}{2} >0$ and initialize $\btheta_0$ around $\btheta^*$ satisfying $\|\btheta_0-\btheta^*\| \leq \zeta$ where $\zeta>0$ is a constant. Then \eqref{equ:bound_Vt_total} becomes
\begin{align}
\label{equ:bound_Vt_simplified}
  V(t, \btheta_t) \leq \frac{\zeta^2}{2} + \frac{1}{\sqrt{ T}} M_t + \bigg(\frac{d}{2\beta T} + \frac{\varepsilon_T^2}{4\beta m}\bigg) e^{\beta t}.
\end{align}
Following \citet{mazumdar2020thompson}, we first upper bound the $p$-th moment of the supremum of $M_t$ where $p \geq 1$ as follows
\begin{align*}
  \mathbb{E}\bigg[\sup _{0 \leq t \leq N}|M_t|^p\bigg] &\leq (8 p)^{\frac{p}{2}} \mathbb{E}\Big[\langle M, M \rangle_N^{\frac{p}{2}}\Big] \\
  &= (8 p)^{\frac{p}{2}} \mathbb{E}\Bigg[\bigg(\int_0^N e^{2 \beta s}\|\btheta_s-\btheta^*\|_2^2 \ d s\bigg)^{\frac{p}{2}}\Bigg] \\
  &\leq (8 p)^{\frac{p}{2}} \mathbb{E}\Bigg[\bigg(\sup _{0 \leq t \leq N} e^{\beta t}\|\btheta_t-\btheta^*\|_2^2\int_0^N e^{\beta s} d s\bigg)^{\frac{p}{2}}\Bigg] \\
  &\leq \bigg(\frac{8 p e^{\beta N}}{\beta}\bigg)^{\frac{p}{2}} \mathbb{E}\Bigg[\bigg(\sup _{0 \leq t \leq N} e^{\beta t}\|\btheta_t-\btheta^*\|_2^2\bigg)^{\frac{p}{2}}\Bigg],
\end{align*}
where the first inequality holds due to the Burkholder-Gundy-Davis inequality \citep{ren2008burkholder}. Then we have
\begin{align*}
  \mathbb{E}\bigg[\Big(\sup _{0 \leq t \leq N} V(t,\btheta_t)\Big)^p\bigg]^{\frac{1}{p}} &\leq \mathbb{E}\Bigg[\bigg(\sup _{0 \leq t \leq N} \bigg(\frac{d}{2\beta T} + \frac{\varepsilon_T^2}{4\beta m}\bigg) e^{\beta t} + \sup _{0 \leq t \leq N} \frac{1}{\sqrt{ T}} |M_t| + \frac{\zeta^2}{2} \bigg)^p\Bigg]^{\frac{1}{p}} \\
  &\leq \frac{d}{2\beta T} e^{\beta N} + \frac{\zeta^2}{2} + \frac{e^{\beta N}}{4\beta m} \EE\big[\varepsilon_T^{2p}\big]^{\frac{1}{p}}+ \mathbb{E}\bigg[\Big(\sup _{0 \leq t \leq N} \frac{1}{\sqrt{ T}}|M_t|\Big)^p\bigg]^{\frac{1}{p}},
\end{align*}
where the first inequality holds due to \eqref{equ:bound_Vt_simplified} and the second inequality holds because of the Minkowski inequality. Note that $\varepsilon_T$ is $L\sqrt{d/\nu T}$-sub-Gaussian. By applying the moment bound for sub-Gaussian r.v., we have
\begin{align*}
  \EE\big[\varepsilon_T^{2p}\big]^{\frac{1}{p}} \leq 2 L^2 \frac{d p}{\nu T}.
\end{align*}
Based on the bound for the supremum of $M_t$, we have
\begin{align*}
  &\mathbb{E}\bigg[\Big(\sup _{0 \leq t \leq N} V(t,\btheta_t)\Big)^p\bigg]^{\frac{1}{p}} \\
  &\leq \frac{\zeta^2}{2} + \bigg(\frac{d}{2\beta T} + \frac{L^2 d p}{2 \beta m \nu T} \bigg)e^{\beta N} + \mathbb{E}\Bigg[\bigg(\frac{8 p e^{\beta N}}{\beta  T}\bigg)^{\frac{p}{2}}\Big(\sup _{0 \leq t \leq N} e^{\beta t}\|\btheta_t-\btheta^*\|_2^2\Big)^{\frac{p}{2}}\Bigg]^{\frac{1}{p}}.
\end{align*}
We further bound the second term of the RHS above
\begin{align*}
  &\mathbb{E}\Bigg[\bigg(\frac{8 p e^{\beta N}}{\beta  T}\bigg)^{\frac{p}{2}}\Big(\sup _{0 \leq t \leq N} e^{\beta t}\|\btheta_t-\btheta^*\|_2^2\Big)^{\frac{p}{2}}\Bigg]^{\frac{1}{p}} \\
  &\leq \mathbb{E}\Bigg[\frac{2^{p-1}}{2}\bigg(\frac{8 p e^{\beta N}}{\beta T}\bigg)^p + \frac{1}{2^p} \Big(\sup _{0 \leq t \leq N} e^{\beta t}\|\btheta_t-\btheta^*\|_2^2\Big)^{p}\Bigg]^{\frac{1}{p}} \\
  &\leq 2^{\frac{p-2}{p}} \EE\Bigg[\bigg(\frac{8 p e^{\beta N}}{\beta  T}\bigg)^p\Bigg]^{\frac{1}{p}} + \frac{1}{2} \EE\Bigg[\Big(\sup _{0 \leq t \leq N} e^{\beta t}\|\btheta_t-\btheta^*\|_2^2\Big)^{p}\Bigg]^{\frac{1}{p}} \\
  &\leq \frac{16 p e^{\beta N}}{\beta  T} + \frac{1}{2} \mathbb{E}\bigg[\Big(\sup _{0 \leq t \leq N} V(t,\btheta_t)\Big)^p\bigg]^{\frac{1}{p}},
\end{align*}
where the first inequality holds due to Young's inequality by setting $\lambda = 2^{p-1}$ and the second inequality holds because of Minkowski inequality. Then after rearranging, we have
\begin{align*}
   \EE\bigg[\Big(\sup _{0 \leq t \leq N} V(t,\btheta_t)\Big)^p\bigg]^{\frac{1}{p}} \leq \zeta^2 + \bigg(\frac{d}{\beta  T} + \bigg(\frac{L^2 d}{\beta m \nu T} + \frac{32}{\beta  T} \bigg)p \bigg)e^{\beta N}.
\end{align*}
Then based on the definition of $V(t,\btheta_t)$, we have
\begin{align*}
  \mathbb{E}\big[\|\btheta_N-\btheta^*\|^p\big]^{\frac{1}{p}} &= \sqrt{2} \mathbb{E}\Big[e^{-\frac{p \beta N}{2}} V(N,\btheta_{N})^{\frac{p}{2}}\Big]^{\frac{1}{p}} \\
  &\leq \sqrt{2} \mathbb{E}\bigg[e^{-\frac{p\beta N}{2}}\Big(\sup _{0 \leq t \leq N} V(t, \btheta_t)\Big)^{\frac{p}{2}}\bigg]^{\frac{1}{p}} \\
  &= \sqrt{2} e^{-\frac{\beta N}{2}}\Bigg(\mathbb{E}\bigg[\Big(\sup _{0 \leq t \leq N} V(t,\btheta_t)\Big)^{\frac{p}{2}}\bigg]^{\frac{2}{p}}\Bigg)^{\frac{1}{2}} \\
  &\leq \sqrt{2\zeta^2 e^{-\beta N} + 2\bigg(\frac{d}{\beta  T} + \bigg(\frac{L^2 d}{\beta m \nu T} + \frac{32}{\beta  T} \bigg) \frac{p}{2} \bigg)}.
\end{align*}
Note that SDE \eqref{equ:SA_sde} converges to $\mu^{\text{SA}}_a$. By taking $N \rightarrow +\infty$ and using Fatou's lemma, then we have
\begin{align}
\label{equ:converged_continuous_subguassian}
  \mathbb{E}_{{\btheta}\sim \mu^{\text{SA}}_a}\big[\big\|{\btheta}-\btheta^*\big\|^p\big]^{\frac{1}{p}} \leq \liminf _{N \rightarrow \infty} \mathbb{E}\big[\|\btheta_N-\btheta^*\|^p\big]^{\frac{1}{p}} \leq \sqrt{\frac{4d}{\beta T} + \bigg(\frac{L^2 d}{\beta m \nu T} + \frac{32}{\beta T} \bigg)p},
\end{align}
for any $p \geq 1$. \eqref{equ:converged_continuous_subguassian} shows $\big\|{\btheta}-\btheta_a^*\big\|_{{\btheta} \sim \mu^{\text{SA}}_a}$ has sub-Gaussian tails. This completes the proof.
\end{proof}

Note that compared to \citet{mazumdar2020thompson}, which builds dynamic approximate posteriors step-by-step at each round, our method considers one SDE trajectory for the whole run.

The following lemma guarantees the convergence of TS-SA algorithm and quantifies the discretization error between approximate posterior from TS-SA and stationary target posterior.

\begin{lemma}[Convergence of TS-SA]
\label{lem:TS-SA_convergence}
Suppose \Crefrange{assum:strongly_concave_theta}{assum:joint_lipschitz} hold. For any arm $a$, with SGLD step size $h = \mathcal O(1/m_a)$, SA step size $\gamma_{n_a}=1/T$, and number of iterations $N^{(n_a)} = \mathcal{O}(T/n_a)$, \Cref{alg:TS_SA} achieves a discrete error bound
\begin{align*}
  \mathcal{W}_p \big(\widehat{\mu}_a^{\text{SA}}(n_a), \mu_a^{\text{SA}}\big) \leq \frac{W}{n_a}+\frac{32 L_a}{m_a} \sqrt{\frac{8 d_a p}{m_a T}}+\frac{8 L_a}{m_a} \sqrt{\frac{d_a p}{\nu_a T}}.
\end{align*}
where $\mathcal{W}_p(\cdot,\cdot)$ is the Wasserstein-$p$ distance, $W \geq \mathcal{W}_p(\pi_a, \mu_a^{\text{SA}})$ is a constant and $\pi_a$ is the prior distribution.
\end{lemma}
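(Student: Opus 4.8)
The plan is to realize the entire run of \Cref{alg:TS_SA} on a fixed arm $a$ (index suppressed) as one long, time-inhomogeneous SGLD chain and to compare it, by a synchronous Brownian coupling, with the stationary diffusion \eqref{equ:auxiliary_sde_main}, following the Euler--Maruyama discretization analysis of \citet{dalalyan2017theoretical} together with the interpolation/auxiliary-process device already used for \Cref{lem:posterior_concentration_continuous_converged}. During round $n$ the iterates $\btheta^{(\ell)}$ (for $\ell$ in the block $\sum_{k<n}N^{(k)}<\ell\le\sum_{k\le n}N^{(k)}$) evolve by the joint step \eqref{equ:joint_update} with step size $h'=h\gamma_n=h/T$, injected noise $\sqrt{2/T}$, and single-sample drift $\nabla_{\btheta}\log p_a(X_a(n)\,|\,\cdot)$, while the exported parameter $\btheta_a(n)$ is the last iterate of block $n-1$; hence it suffices to control $u_\ell:=\mathbb E[\|\btheta^{(\ell)}-\widetilde\btheta_{\ell h'}\|^p]^{1/p}$, where $\widetilde\btheta_t$ solves \eqref{equ:auxiliary_sde_main}, is driven by the same Brownian motion, and is initialized from an optimal $\mathcal W_p$-coupling of $\widetilde\btheta_0\sim\mu_a^{\text{SA}}$ with $\btheta_0\sim\pi_a$, so that $\widetilde\btheta_t\sim\mu_a^{\text{SA}}$ for all $t$.

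First I would derive a one-step contraction recursion. On each interval $[\ell h',(\ell+1)h']$ the Brownian increments cancel under the synchronous coupling, so $\btheta_t-\widetilde\btheta_t$ is absolutely continuous and one obtains the pathwise bound $\tfrac{d}{dt}\|\btheta_t-\widetilde\btheta_t\|\le -m_a\|\btheta_t-\widetilde\btheta_t\|+L_a\|\btheta_t-\btheta_{\ell h'}\|+\Xi_t$, by splitting the drift gap into a $\nabla_{\btheta}F_T$-difference (contractive, by \Cref{assum:strongly_concave_theta}), a frozen-argument discretization term (bounded via \Cref{assum:joint_lipschitz} with a common reward), and the single-sample stochastic-gradient error $\Xi_t:=\|\nabla_{\btheta}\log p_a(X_a(n)|\btheta_t)-\nabla_{\btheta}F_T(\btheta_t)\|$. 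Integrating and taking $L^p$ norms gives
\[
u_{\ell+1}\le e^{-m_a h'}u_\ell+\int_0^{h'}\!\big(L_a\|\btheta_{\ell h'+s}-\btheta_{\ell h'}\|_{L^p}+\|\Xi_{\ell h'+s}\|_{L^p}\big)\,ds .
\]
For the per-step terms I would use the explicit Euler increment $\btheta_{\ell h'+s}-\btheta_{\ell h'}=s\,\nabla_{\btheta}\log p_a(X_a(n)|\btheta_{\ell h'})+\sqrt{2/T}\,(B_{\ell h'+s}-B_{\ell h'})$: the drift magnitude is controlled by \Cref{assum:joint_lipschitz} together with an a-priori $L^p$ bound on $\|\btheta^{(\ell)}-\btheta_a^*\|$ (established by induction on $n$ from \Cref{lem:posterior_concentration_continuous_converged}), and the Gaussian increment contributes $\mathcal O(\sqrt{s\,d_a p/T})$; for $\Xi_t$, \Cref{assum:joint_lipschitz} with a common parameter reduces it to $L_a$ times an empirical reward deviation, which is $1/\sqrt{\nu_a}$-sub-Gaussian by \Cref{assum:strongly_concave_X} and \Cref{proposition:concentration}, so that $\|\Xi_t\|_{L^p}=\mathcal O(L_a\sqrt{d_a p/\nu_a})$ via the norm-sub-Gaussian bookkeeping of \Cref{lemma:subgaussian_examples}.

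Then I would telescope. Unrolling the recursion over $\ell=0,\dots,\ell_{\max}(n-1)$ and summing the geometric series with ratio $e^{-m_a h'}$ yields $u_{\ell_{\max}(n-1)}\le e^{-m_a h'\ell_{\max}(n-1)}\,\mathcal W_p(\pi_a,\mu_a^{\text{SA}})+(m_a h')^{-1}\times(\text{per-step error})$. With $\gamma_n=1/T$ and $h=\Theta(1/m_a)$ one has $h'=h/T$, and choosing $N^{(k)}=\mathcal O(T/k)$ with a large enough constant forces $m_a h'\ell_{\max}(n-1)=\Theta\!\big(\sum_{k<n}1/k\big)\ge\log n$, so the first term is at most $W/n$. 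Substituting the Step-2 estimates into $(m_a h')^{-1}\times(\text{per-step error})$: the Euler discretization part turns into the term $\tfrac{32L_a}{m_a}\sqrt{8d_a p/(m_a T)}$ (condition-number amplification $L_a/m_a$ against the injected-noise/posterior width $\sqrt{d_a/(m_a T)}$), and the $\Xi$ part, which enters as a block-constant, mean-zero-across-rounds perturbation that accumulates in variance rather than in bias (one reward is reused throughout a round and rewards are independent across rounds), turns into the term $\tfrac{8L_a}{m_a}\sqrt{d_a p/(\nu_a T)}$ (the same amplification against the reward-noise width $\sqrt{d_a/(\nu_a T)}$). Since $\widehat\mu_a^{\text{SA}}(n)$ is the law of $\btheta_a(n)=\btheta^{(\ell_{\max}(n-1))}$ and $\widetilde\btheta_{\ell_{\max}(n-1)h'}\sim\mu_a^{\text{SA}}$, we conclude $\mathcal W_p(\widehat\mu_a^{\text{SA}}(n),\mu_a^{\text{SA}})\le u_{\ell_{\max}(n-1)}$, and the claimed bound follows.

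The step I expect to be the main obstacle is the treatment of the single-sample stochastic gradient: a naive triangle-inequality bound on $\int\Xi_t\,dt$ would produce a non-vanishing $\mathcal O(L_a\sqrt{d_a p/\nu_a}/m_a)$ bias rather than the advertised $\mathcal O(1/\sqrt T)$ decay, so one must exploit the block structure (a fresh, independent reward per round) and argue that the per-round fluctuations enter the contracting recursion in an $L^2$-additive way---equivalently, that the inner chain in round $n$ is merely tracking an $\mathcal O(L_a/(m_a\sqrt{\nu_a}))$-shifted, mean-zero perturbation of $\mu_a^{\text{SA}}$ whose shifts average out over the $n$ rounds. This is the genuinely new ingredient relative to full-gradient LMC convergence proofs. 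The remaining difficulties---closing the a-priori moment induction for $\|\btheta^{(\ell)}-\btheta_a^*\|$, and pinning down the absolute constants together with the precise $m_a$-versus-$\nu_a$ placement in the two error terms---are routine but bookkeeping-heavy.
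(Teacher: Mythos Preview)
Your architecture matches the paper's proof closely: synchronous coupling of the interpolated algorithm process with the stationary diffusion \eqref{equ:auxiliary_sde_main}, a Dalalyan-style one-step recursion for $\mathbb E[\|\btheta_t-\widetilde\btheta_t\|^p]$ split into (contraction) $+$ (discretization) $+$ (stochastic-gradient error), and geometric telescoping over all $\sum_{k<n}N^{(k)}$ inner steps with $N^{(k)}=\mathcal O(T/k)$ and $h'=h/T=\Theta(1/(m_aT))$ so that the initial-condition term decays like $1/n$. The paper works with the $p$-th power of the norm rather than the norm itself and invokes Lemma~7 of \citet{mazumdar2020thompson} for the discretization piece instead of your explicit Euler-increment bound plus a-priori moment induction, but those are cosmetic differences.

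The substantive divergence is in the stochastic-gradient term, and here the paper takes a much shorter route than you propose. It does \emph{not} do any cross-round block-averaging. Instead it bounds the per-step error
$\Delta_p:=\mathbb E\big[\|\nabla_{\btheta}\log p_a(X_a(n)\mid\btheta)-\nabla_{\btheta}F_{T,a}(\btheta)\|^p\,\big|\,\btheta\big]$
directly: rewriting the difference as $\tfrac1T\sum_{i=1}^T\big(\nabla_{\btheta}\log p_a(X_a(n)\mid\btheta)-\nabla_{\btheta}\log p_a(X_i\mid\btheta)\big)$, declaring each summand $2L_a/\sqrt{\nu_a}$-sub-Gaussian (via \Cref{assum:strongly_concave_X} and \Cref{assum:joint_lipschitz}), and then averaging the $T$ summands to conclude $\Delta_p^{1/p}\le 2L_a\sqrt{d_ap/(\nu_aT)}$. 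With the $1/\sqrt T$ already built into the per-step error, the third term of the lemma falls out of the geometric sum immediately---no independence-across-rounds argument and no inductive a-priori bound on $\|\btheta^{(\ell)}-\btheta_a^*\|$ are used. Your instinct that this is the delicate spot is sound, however: the $T$ summands in the paper's decomposition all share the single reward $X_a(n)$ and are therefore \emph{not} independent, so the $\sqrt T$-cancellation step deserves scrutiny (splitting instead as $(\nabla\log p(X_a(n)\mid\btheta)-\mathbb E_X\nabla\log p(X\mid\btheta))+(\mathbb E_X\nabla\log p(X\mid\btheta)-\nabla F_{T,a}(\btheta))$, only the second piece is genuinely $O(1/\sqrt T)$). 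If you want an argument that does not rely on that step, your block-independence idea is the right direction, but it is more work than the paper actually does.
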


\begin{proof}
For notation simplicity, we only focus on arm $a$ and remove the index about arm. We only focus on the rounds that arm $a$ is pulled. We also denote $\mu_a^{\text{SA}}$ as $\widetilde{\mu}$ and $\widehat{\mu}_a^{\text{SA}}(t)$ as $\widehat{\mu}_t$.

Recall that the one step of SA update from \Cref{alg:TS_SA} (line \ref{line:one_step_SGLD}-\ref{line:SA}) is as follows
\begin{align*}
  \btheta^{(j+1)} = \btheta^{(j)} + h \gamma_n \nabla_{\btheta} \log p_{a_t}(X_{a_t}(n_{a_t})|\btheta^{(j)}) + \gamma_n \mathcal{N}\big(\zero, 2h \Ib\big).
\end{align*}
Then we interpolate a continuous time stochastic process $\btheta_t$ between $\btheta_{\ell{h^\prime}}$ and $\btheta_{(\ell+1){h^\prime}}$, $t \in [\ell{h^\prime}, (\ell+1){h^\prime}]$, where $\btheta_{\ell{h^\prime}} = \btheta^{(\ell)}$ is defined to connect the notations from discrete algorithm and continuous stochastic process and $\sum_{i=1}^{n-1} N^{(i)} < \ell \leq \sum_{i=1}^{n} N^{(i)}$, which denotes the iteration number of update in round $n$. We here recall the SDE construction as follows:
\begin{align*}
  d \btheta_t= \nabla_{\btheta} \log p \big(X(n) | \btheta_{\ell{h^\prime}}\big) d t + \sqrt{\frac{2}{T}} d B_t,
\end{align*}
where $X(n)$ is the received reward at $n$-th pulling for arm $a$, which is sampled from distribution $p(X|\btheta^*)$. We also define the following auxiliary stochastic process:
\begin{align}
\label{equ:auxiliary_sde}
  d \widetilde{\btheta}_t= \nabla_{\btheta} F_T\big(\widetilde{\btheta}_t\big) d t + \sqrt{\frac{2}{T}} d B_t,
\end{align}
where $\widetilde{\btheta}_t$ is initialized from the converged continuous distribution $\widetilde{\mu}$, thus $\widetilde{\btheta}_t$ will always follow $\widetilde{\mu}$ by satisfying \eqref{equ:auxiliary_sde}. We also recall that $F_T(\btheta) = \frac{1}{T} \sum_{i=1}^T \log p\big(X_i|\btheta\big)$ where $\big\{X_i\big\}_{i=1}^T \overset{i.i.d}{\sim} p(X|\btheta^*)$. 

Then we choose an optimal synchronous coupling for $\btheta_t$ and $\widetilde{\btheta}_t$, which means we use the same Brownian motion $B_t$ to define $\btheta_t$ and $\widetilde{\btheta}_t$ and $\EE\big[\big\|\btheta_{\ell{h^\prime}} - \widetilde{\btheta}_{\ell{h^\prime}}\big\|^p\big] = \mathcal{W}_p^p\big(\widehat{\mu}_{\ell{h^\prime}}, \widetilde{\mu}\big)$.
To analyze the evolution of the continuous-time error between $\btheta_t$ and $\widetilde{\btheta}_t$, we compute the time derivative of the $p$-th power of their $L_2$-distance:
\begin{align*}
  \frac{d \big\|\btheta_t - \widetilde{\btheta}_t \big\|^p}{d t}  
  &= p \big\|\btheta_t - \widetilde{\btheta}_t \big\|^{p-2} \Big\langle \btheta_t - \widetilde{\btheta}_t, \frac{d \btheta_t}{d t} - \frac{d \widetilde{\btheta}_t}{d t} \Big\rangle \\
  &= p \big\|\btheta_t - \widetilde{\btheta}_t \big\|^{p-2} \bigg\langle \btheta_t - \widetilde{\btheta}_t, \nabla_{\btheta} \log p \big(X(n) | \btheta_{\ell h^\prime}\big) - \nabla_{\btheta} F_T\big(\widetilde{\btheta}_t\big)  \bigg\rangle \\
  &= p \big\|\btheta_t - \widetilde{\btheta}_t \big\|^{p-2} \bigg\langle \btheta_t - \widetilde{\btheta}_t, \nabla_{\btheta} F_T\big(\btheta_t\big) - \nabla_{\btheta} F_T\big(\widetilde{\btheta}_t\big)  \bigg\rangle \\
  &\qquad+ p \big\|\btheta_t - \widetilde{\btheta}_t \big\|^{p-2} \bigg\langle \btheta_t - \widetilde{\btheta}_t, \nabla_{\btheta} \log p \big(X(n) | \btheta_{\ell h^\prime}\big) - \nabla_{\btheta} F_T\big(\btheta_t\big) \bigg\rangle\\
  &\leq - pm \big\|\btheta_t - \widetilde{\btheta}_t\big\|^p + p \big\|\btheta_t - \widetilde{\btheta}_t \big\|^{p-1} \big \|\nabla_{\btheta} \log p \big(X(n) | \btheta_{\ell h^\prime}\big) - \nabla_{\btheta} F_T\big(\btheta_t\big) \big\| \\
  &\leq -\frac{pm}{2} \big\|\btheta_t - \widetilde{\btheta}_t\big\|^p + \frac{2^{p-1}}{m^{p-1}} \big \|\nabla_{\btheta} \log p \big(X(n) | \btheta_{\ell h^\prime}\big) - \nabla_{\btheta} F_T\big(\btheta_t\big) \big\|^p,
\end{align*}
where the first inequality holds because of \Cref{assum:strongly_concave_theta} and the last inequality holds due to Young's inequality. By applying the fundamental theorem of calculus and taking expectation on both sides, we have
\begin{align*}
  \EE\big[\big\|\btheta_t - \widetilde{\btheta}_t\big\|^p\big] &\leq e^{-\frac{pm}{2}(t-nh)} \EE \big[\big\|\btheta_{nh} - \widetilde{\btheta}_{nh}\big\|^p\big] \\
  &\qquad+ \frac{2^{p-1}}{m^{p-1}} \int_{nh}^t e^{-\frac{pm}{2}(t-s)} \EE \Big[\big \|\nabla_{\btheta} \log p \big(X(n) | \btheta_{\ell h^\prime}\big) - \nabla_{\btheta} F_T\big(\btheta_s\big) \big\|^p\Big] ds.
\end{align*}
By the Lipschitz property over $\btheta$ and stochastic estimator error $\Delta_p$, we have
\begin{align*}
    &\EE \Big[\big \|\nabla_{\btheta} \log p \big(X(n) | \btheta_{\ell h^\prime}\big) - \nabla_{\btheta} F_T\big(\btheta_s\big) \big\|^p\Big] \\
    &\leq \frac{1}{2}\EE \Big[\big\| 2\big(\nabla_{\btheta} F_T\big(\btheta_{\ell h^\prime}\big)  - \nabla_{\btheta} F_T\big(\btheta_s\big) \big)\big\|^p\Big] + \frac{1}{2} \EE \Big[\big \|2\big(\nabla_{\btheta} \log p \big(X(n) | \btheta_{\ell h^\prime}\big) - \nabla_{\btheta} F_T\big(\btheta_{\ell h^\prime}\big)\big) \big\|^p\Big] \\
    &\leq 2^{p-1} L^p \EE \big[\|\btheta_s - \btheta_{\ell h^\prime}\|^p\big] + 2^{p-1} \Delta_p,
\end{align*}
where we define $\Delta_p = \EE \big[\big\|\nabla_{\btheta} \log p (X(n) | \btheta_{\ell h^\prime}) - \nabla_{\btheta} F_T(\btheta_{\ell h^\prime})\big\|^p| \btheta_{\ell h^\prime}\big]$. Then we have
\begin{align*}
     \EE\big[\big\|\btheta_t - \widetilde{\btheta}_t\big\|^p\big] &\leq e^{-\frac{pm}{2}(t-nh)} \EE \big[\big\|\btheta_{nh} - \widetilde{\btheta}_{nh}\big\|^p\big] + \frac{2^{2p-1} L^p}{m^{p-1}} \int_{nh}^t e^{-\frac{pm}{2}(t-s)} \EE \big[\|\btheta_s - \btheta_{nh}\|^p\big] ds \\
     &\qquad+ \frac{2^{2p-2}}{m^{p-1}}(t-nh) \Delta_p.
\end{align*}
By setting $h^\prime \leq \frac{m}{32 L^2}$, based on Lemma 7 in \citet{mazumdar2020thompson}, we have
\begin{align*}
    \EE\big[\big\|\btheta_t - \widetilde{\btheta}_t\big\|^p\big] &\leq \bigg(1 - \frac{m}{4}(t-\ell h^\prime)\bigg)^p \EE \big[\big\|\btheta_{\ell h^\prime} - \widetilde{\btheta}_{\ell h^\prime}\big\|^p\big] + \frac{2^{5p-5}L^{2p}}{m^{p-1}}(t-\ell h^\prime)^{p+1} \mathcal{W}_p^p\big(\widehat{\mu}_{\ell h^\prime}, \widetilde{\mu}\big) \\
    &\qquad+ \frac{2^{5p-3} L^p}{m^{p-1}}(t-\ell h^\prime)^{\frac{p}{2}+1}(dp)^{\frac{p}{2}} + \frac{2^{2p-2}}{m^{p-1}}(t-\ell h^\prime)\Delta_p. 
\end{align*}
Based on the definition of Wasserstein-$p$ distance and the optimal synchronous coupling we choose, we then have that
\begin{align*}
  \mathcal{W}_p^p\big(\widehat{\mu}_{t}, \widetilde{\mu}\big) &\leq \EE\big[\big\|\btheta_t - \widetilde{\btheta}_t\big\|^p\big] \\ 
  &\leq \bigg(1 - \frac{m}{8}(t-\ell h^\prime)\bigg)^p \mathcal{W}_p^p\big(\widehat{\mu}_{\ell h^\prime}, \widetilde{\mu}\big) + \frac{2^{5p-3} L^p}{m^{p-1}}(t-\ell h^\prime)^{\frac{p}{2}+1}(dp)^{\frac{p}{2}} + \frac{2^{2p-2}}{m^{p-1}}(t-\ell h^\prime)\Delta_p. 
\end{align*}
By setting $t = (\ell+1) h^\prime$, we can rewrite the recursive relation as follows
\begin{align*}
  \mathcal{W}_p^p\big(\widehat{\mu}_{(\ell+1) h^\prime}, \widetilde{\mu}\big) \leq \bigg(1 - \frac{m}{8}{h^\prime}\bigg)^p \mathcal{W}_p^p\big(\widehat{\mu}_{\ell h^\prime}, \widetilde{\mu}\big) + \frac{2^{5p-3} L^p}{m^{p-1}}{h^\prime}^{\frac{p}{2}+1}(dp)^{\frac{p}{2}} + \frac{2^{2p-2}}{m^{p-1}}{h^\prime}\Delta_p.
\end{align*}
By invoking the recursive relation $N^{(n)}$ times in round $n$ and then invoking the recursive relation from round $n$ to initial round, we have
\begin{align}
\label{equ:recursive_result}
    \mathcal{W}_p^p\big(\widehat{\mu}^{\text{SA}}(n), \widetilde{\mu}\big) \leq \bigg(1 - \frac{m}{8}h^\prime\bigg)^{p  \sum_{i=1}^{n-1} N^{(i)}}\mathcal{W}_p^p\big(\pi, \widetilde{\mu}\big) + \frac{2^{5p} L^p}{m^p}(dp)^{\frac{p}{2}}{h^\prime}^{\frac{p}{2}} + \frac{2^{2p}}{m^{p}}{h^\prime}\Delta_p.
\end{align}
Next we want to bound the stochastic error $\Delta_p$. Recall from the definition, we have
\begin{align*}
    \Delta_p &= \EE \Bigg[\bigg \|\nabla_{\btheta} \log p \big(X(n) | \btheta_{\ell h^\prime}\big) - \frac{1}{T} \sum_{i=1}^T \nabla_{\btheta} \log p\big(X_i|\btheta_{\ell h^\prime}\big) \bigg\|^p \bigg| \btheta_{\ell h^\prime}\Bigg].
\end{align*}
Note that according to \Cref{assum:joint_lipschitz}, we have that $\nabla_{\btheta} \log p \big(X | \btheta_{\ell h^\prime}\big)$ is a Lipschitz function of $X$. Also note that by \Cref{assum:strongly_concave_X}, $\log p(X|\btheta)$ is $\nu$-strongly concave over $X$. Then by applying \Cref{lemma:strong_logconcave_concentration}, we have that $\nabla_{\btheta} \log p \big(X(n) | \btheta_{\ell h^\prime}\big) - \nabla_{\btheta} \log p \big(X_i | \btheta_{\ell h^\prime}\big)$ is $\frac{2L}{\sqrt{\nu}}$-sub-Gaussian for $1 \leq i \leq T$.

Then the summation of $T$ sub-Gaussian r.v. 
\begin{align*}
    \frac{1}{T} \sum_{i=1}^T \Big(\nabla_{\btheta} \log p \big(X(n) | \btheta_{\ell h^\prime}\big) -  \nabla_{\btheta} \log p\big(X_i|\btheta_{\ell h^\prime}\big)\Big)
\end{align*}
is $\frac{2\sqrt{T}L}{T\sqrt{\nu}}$-sub-Gaussian and its norm is $\frac{2\sqrt{d}L}{\sqrt{\nu T}}$-sub-Gaussian. Then we based on the property of sub-Gaussian r.v., we have
\begin{align*}
    \Delta_p = \EE \Bigg[\bigg \|\nabla_{\btheta} \log p \big(X(n) | \btheta_{\ell h^\prime}\big) - \frac{1}{T} \sum_{i=1}^T \nabla_{\btheta} \log p\big(X_i|\btheta_{\ell h^\prime}\big) \bigg\|^p \bigg| \btheta_{\ell h^\prime}\Bigg] \leq \bigg(\frac{2 \sqrt{d} L}{\sqrt{\nu T}}\bigg)^p p^{\frac{p}{2}}.
\end{align*}
After plugging into \eqref{equ:recursive_result}, we have the following result
\begin{align*}
    \mathcal{W}_p^p\big(\widehat{\mu}^{\text{SA}}(n), \widetilde{\mu}\big) \leq \bigg(1 - \frac{m}{8}h^\prime\bigg)^{p  \sum_{i=1}^{n-1} N^{(i)}}\mathcal{W}_p^p\big(\pi, \widetilde{\mu}\big) + \frac{2^{5p} L^p}{m^p}(dp)^{\frac{p}{2}}{h^\prime}^{\frac{p}{2}} + \frac{2^{3p} L^p}{m^{p}} \bigg(\frac{d}{\nu T}\bigg)^{\frac{p}{2}} p^{\frac{p}{2}} h^\prime.
\end{align*}
Note that $(a+b+c)^{1 / p} \leq a^{1 / p}+b^{1 / p}+c^{1 / p}$ for $p \geq 1$, then we obtain the result
\begin{align}
\label{equ:convergence_of_TS-SA_bound}
    \mathcal{W}_p^p\big(\widehat{\mu}^{\text{SA}}(n), \widetilde{\mu}\big) \leq \bigg(1 - \frac{m}{8}h^\prime\bigg)^{\sum_{i=1}^{n-1} N^{(i)}} \mathcal{W}_p\big(\pi, \widetilde{\mu}\big) + \frac{32 L}{m} \sqrt{d p h^\prime} + \frac{8 L}{m} \sqrt{\frac{d p}{\nu T}} {h^\prime}^{\frac{1}{p}}.
\end{align}
Next we further bound \eqref{equ:convergence_of_TS-SA_bound} by choosing decreasing step-size $h^\prime$. By setting $N^{(n)} = \frac{T}{n}$, we have
\begin{align*}
    \bigg(1-\frac{m_a}{8} h^\prime\bigg)^{\sum_{i=1}^{n_a-1} N^{(i)}} \leq \exp \bigg(-\frac{m_a}{8} h^\prime \sum_{i=1}^{n_a-1} \frac{T}{i}\bigg) \leq \exp\bigg(-\frac{m_a T h^\prime}{8} \log n_a\bigg) = \frac{1}{n_a},
\end{align*}
where we choose $h^\prime = \frac{8}{m T}$, thus $h = h^\prime / \gamma_n = \frac{8}{m}$ and the second inequality uses the harmonic series expansion of $\log n$. Then \eqref{equ:convergence_of_TS-SA_bound} becomes
\begin{align}
\label{equ:convergence_of_TS-SA_bound_final}
    \mathcal{W}_p \big(\widehat{\mu}^{\text{SA}}(n), \mu^{\text{SA}}\big) \leq \frac{W}{n}+\frac{32 L}{m} \sqrt{\frac{8 d p}{m T}}+\frac{8 L}{m} \sqrt{\frac{d p}{\nu T}}.
\end{align}
Note that $\mathcal{W}_p(\pi, \mu^{\text{SA}})$ can be bounded by a constant $W$ and $T \geq n$. This completes the proof.
\end{proof}

Now we are ready to present the main result of this section. The following lemma gives a high probability bound for the error norm between approximate samples from TS-SA algorithm and the true model parameter.

\begin{lemma}[Concentration of TS-SA approximate posterior]
\label{lem:posterior_concentration}
Suppose \Crefrange{assum:strongly_concave_theta}{assum:joint_lipschitz} hold. For any arm $a$, with the hyper-parameters and
runtime as described in \Cref{lem:TS-SA_convergence}, by following \Cref{alg:TS_SA}, the posterior concentration satisfies
\begin{align*}
  \mathbb{P}_{\btheta_{a,t} \sim \widehat{\mu}_a^{\text{SA}}(n_a)[\tau_a]} \Bigg(\big\|\btheta_{a,t} - \btheta_a^*\big\|_2 \geq \sqrt{\frac{C_1}{n_a} \log(1/\delta)} \Bigg)\leq \delta.
\end{align*}
where $\widehat{\mu}_a^{\text{SA}}(n_a)[\tau_a]$ denotes the distribution of $\btheta_{a, t}$ at $n_a$-th pull from \Cref{alg:TS_SA}, $\delta\in(0,1)$, $n_a = \cT_a(t)$ is the pulling times of arm $a$ before step $t$ and with the constant $C_1$ defined as follows
\begin{align*}
    C_1 = 2 e^2 d_a\Bigg(\bigg(\sqrt{\frac{8}{m_a}} + \frac{W}{\sqrt{d_a}}\bigg)^2 + \Bigg(\frac{1}{\sqrt{\tau_a}}+\frac{32 \sqrt{8} L_a}{m_a^{3 / 2}}+\frac{8 L_a}{m_a \sqrt{\nu_a}} +\sqrt{\frac{2 L_a^2}{m_a^2 \nu_a}+\frac{64}{m_a d_a}}\Bigg)^2 \log \frac{1}{\delta}\Bigg) .
\end{align*}
\end{lemma}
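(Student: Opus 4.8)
The plan is to combine the two preceding lemmas via the Wasserstein triangle inequality and then convert the resulting moment bound into a tail bound. I would start from the decomposition already displayed in \Cref{sec:posterior_concentration_main},
\[
  \mathcal{W}_p\big(\widehat{\mu}_a^{\text{SA}}(n_a)[\tau_a], \delta_{\btheta_a^*}\big) \le \mathcal{W}_p\big(\widehat{\mu}_a^{\text{SA}}(n_a)[\tau_a], \widehat{\mu}_a^{\text{SA}}(n_a)\big) + \mathcal{W}_p\big(\widehat{\mu}_a^{\text{SA}}(n_a), \mu_a^{\text{SA}}\big) + \mathcal{W}_p\big(\mu_a^{\text{SA}}, \delta_{\btheta_a^*}\big),
\]
recalling that $\mathcal{W}_p(\mu,\delta_{\btheta_a^*}) = \mathbb{E}_{\btheta\sim\mu}[\|\btheta-\btheta_a^*\|^p]^{1/p}$ since the only coupling with a point mass is the product one. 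The third term is bounded by \Cref{lem:posterior_concentration_continuous_converged} and the second by \Cref{lem:TS-SA_convergence}. For the first (``posterior scaling'') term, note that Line~\ref{line:scaled_sample} of \Cref{alg:TS_SA} produces $\btheta_{a,t}$ from $\btheta_a(n_a)$ by adding independent Gaussian noise $\tfrac{1}{\sqrt{\tau_a n_a}}\boldsymbol{\xi}$ with $\boldsymbol{\xi}\sim\mathcal{N}(\zero,\Ib)$; using this as the coupling gives $\mathcal{W}_p(\widehat{\mu}_a^{\text{SA}}(n_a)[\tau_a],\widehat{\mu}_a^{\text{SA}}(n_a)) \le \tfrac{1}{\sqrt{\tau_a n_a}}\,\mathbb{E}[\|\boldsymbol{\xi}\|^p]^{1/p} \le \sqrt{d_a p/(\tau_a n_a)}$ by a standard Gaussian norm-moment bound (cf.\ \Cref{lemma:subgaussian_examples}).

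Next I would collect the three bounds, using $T\ge n_a$ to replace each $1/\sqrt{T}$ by $1/\sqrt{n_a}$ and $W/n_a \le W/\sqrt{n_a}$, so that the sum takes the form $\mathbb{E}_{\btheta\sim\widehat{\mu}_a^{\text{SA}}(n_a)[\tau_a]}[\|\btheta-\btheta_a^*\|^p]^{1/p} \le (A + B\sqrt p)/\sqrt{n_a}$ for all $p\ge 1$, where $A = \sqrt{8 d_a/m_a} + W$ gathers the $p$-independent pieces and $B = \sqrt{d_a}\big(\tfrac1{\sqrt{\tau_a}} + \tfrac{32\sqrt8\,L_a}{m_a^{3/2}} + \tfrac{8L_a}{m_a\sqrt{\nu_a}} + \sqrt{\tfrac{2L_a^2}{m_a^2\nu_a} + \tfrac{64}{m_a d_a}}\big)$ gathers the $\sqrt p$-coefficients (the four summands coming respectively from the scaling noise, the two discretization terms of \Cref{lem:TS-SA_convergence}, and the target-concentration term of \Cref{lem:posterior_concentration_continuous_converged}). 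Raising to the $p$-th power and applying Markov's inequality gives $\mathbb{P}(\|\btheta-\btheta_a^*\|\ge\lambda)\le ((A+B\sqrt p)/(\lambda\sqrt{n_a}))^p$; choosing the moment order $p=\log(1/\delta)$ and $\lambda = e\,(A + B\sqrt{\log(1/\delta)})/\sqrt{n_a}$ makes the right side exactly $\delta$, and the elementary bound $(A+B\sqrt{\log(1/\delta)})^2 \le 2A^2 + 2B^2\log(1/\delta)$ together with the explicit forms of $A,B$ identifies $\lambda^2$ with $C_1/n_a$ for precisely the stated constant $C_1$, giving the claimed tail bound. The regime $\delta> e^{-1}$ (where $p=\log(1/\delta)<1$) is covered by the same estimate taken at $p=1$.

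The analytic content sits entirely in \Cref{lem:posterior_concentration_continuous_converged} and \Cref{lem:TS-SA_convergence}; what remains is assembly, so the main obstacle is constant bookkeeping rather than a new idea. Concretely, the delicate point is keeping the $p$-independent part $A$ separated from the $\sqrt p$-scaled part $B$ all the way through the three-term sum and the moment-to-tail step — lumping them prematurely (e.g.\ via $A+B\sqrt p\le(A+B)\sqrt p$) would still yield the $\widetilde{\mathcal{O}}(1/\sqrt{n_a})$ rate but with an inflated, non-$\delta$-free constant and would not reproduce the two-bracket structure of $C_1$. A secondary point is verifying the Gaussian norm-moment bound $\mathbb{E}[\|\boldsymbol{\xi}\|^p]^{1/p}\le\sqrt{d_a p}$ used for the posterior-scaling term, which follows from coordinatewise sub-Gaussianity of $\boldsymbol{\xi}$; one also needs only the trivial monotonicity $T\ge n_a$ to turn the horizon-dependent rates of \Cref{lem:TS-SA_convergence} and \Cref{lem:posterior_concentration_continuous_converged} into the $1/\sqrt{n_a}$ rate required here.
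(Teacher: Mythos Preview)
Your proposal is correct and follows essentially the same route as the paper's proof: the same three-term Wasserstein triangle decomposition, the same bound $\sqrt{d_a p/(\tau_a n_a)}$ on the Gaussian posterior-scaling piece, the same invocation of \Cref{lem:TS-SA_convergence} and \Cref{lem:posterior_concentration_continuous_converged} (together with $T\ge n_a$) for the other two pieces, and the same Markov-at-$p=\log(1/\delta)$ conversion to a tail bound after collecting into the form $\sqrt{(d_a/n_a)}\,(B_0+A_{\text{tot}}\sqrt p)$. One cosmetic note: the $\lambda^2 \le C_1/n_a$ you derive is actually tighter than the stated threshold $C_1\log(1/\delta)/n_a$ (the paper's own proof also arrives at $C_1/n_a$ and then records the looser form), so the displayed inequality follows immediately rather than ``precisely''.
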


\begin{proof}
To give the high probability concentration bound, we desire the bound in the following form
\begin{align*}
    \EE_{\btheta_{a,t} \sim \widehat{\mu}_a^{\text{SA}}[\tau_a]}\big[\|\btheta_{a,t} - \btheta_a^*\|^p\big]^{\frac{1}{p}} \leq \sqrt{\frac{c_1}{\textcolor{blue}{n_a}}(c_2 + c_3 \textcolor{blue}{p})}.
\end{align*}
To achieve this, we first use the triangle inequality to bound the following term by three parts
\begin{align}
\label{equ:posterior_concentration_bound}
    \mathcal{W}_p\big(\widehat{\mu}_a^{\text{SA}}(n_a)[\tau_a], \delta_{\btheta_a^*} \big) \leq \underbrace{\mathcal{W}_p\big(\widehat{\mu}_a^{\text{SA}}(n_a)[\tau_a], \widehat{\mu}_a^{\text{SA}}(n_a) \big)}_{\text{(i) Posterior Scaling}} + \underbrace{\mathcal{W}_p\big(\widehat{\mu}_a^{\text{SA}}(n_a), \mu_a^{\text{SA}} \big)}_{\text{(ii) Convergence of TS-SA}} + \underbrace{\mathcal{W}_p\big(\mu_a^{\text{SA}}, \delta_{\btheta_a^*} \big)}_{\text{(iii) Target Posterior Concentration}},
\end{align}
where $\widehat{\mu}_a^{\text{SA}}(n_a)[\tau_a]$ denotes the distribution $\btheta_{a, t}$ at $n_a$-th pull resulting from running the TS-SA algorithm described in \Cref{alg:TS_SA}, $\widehat{\mu}_a^{\text{SA}}(n_a)$ denotes the distribution $\btheta_{a}(n_a)$ at $n_a$-th pull resulting from running the TS-SA algorithm described in \Cref{alg:TS_SA}, $\mu_a^{\text{SA}}$ denotes the stationary target posterior described in \Cref{lem:posterior_concentration_continuous_converged}.

\noindent \textbf{Bound Term (i) in \eqref{equ:posterior_concentration_bound}:} note that the difference between $\widehat{\mu}_a^{\text{SA}}(n_a)[\tau_a]$ and $\widehat{\mu}_a^{\text{SA}}(n_a)$ comes from line \ref{line:scaled_sample} in \Cref{alg:TS_SA}. Then for $Z_a \sim \mathcal{N}\big(\zero, \frac{1}{n_a \tau_a}\Ib\big)$, we have
\begin{align*}
    \mathcal{W}_p\big(\widehat{\mu}_a^{\text{SA}}(n_a)[\tau_a], \widehat{\mu}_a^{\text{SA}}(n_a) \big) \leq \EE \big[\|Z_a\|^p\big]^{\frac{1}{p}} \leq \sqrt{\frac{d_a p}{n_a\tau_a}}.
\end{align*}
\textbf{Bound Term (ii) in \eqref{equ:posterior_concentration_bound}:} based on \Cref{lem:TS-SA_convergence}, we have
\begin{align*}
  \mathcal{W}_p \big(\widehat{\mu}_a^{\text{SA}}(n_a), \mu_a^{\text{SA}}\big) \leq \frac{W}{n_a}+\frac{32 L_a}{m_a} \sqrt{\frac{8 d_a p}{m_a T}}+\frac{8 L_a}{m_a} \sqrt{\frac{d_a p}{\nu_a T}}.
\end{align*}
\textbf{Bound Term (iii) in \eqref{equ:posterior_concentration_bound}:} based on \Cref{lem:posterior_concentration_continuous_converged}, we have
\begin{align*}
    \mathcal{W}_p\big(\mu_a^{\text{SA}}, \delta_{\btheta_a^*} \big) \leq \sqrt{\frac{8d_a}{m_a T} + \bigg(\frac{2 L_a^2 d_a}{m_a^2 \nu_a T} + \frac{64}{m_a T} \bigg)p}.
\end{align*}
Then we can further bound \eqref{equ:posterior_concentration_bound} as follows
\begin{align*}
    \mathcal{W}_p\big(\widehat{\mu}_a^{\text{SA}}(n_a)[\tau_a], \delta_{\btheta_a^*} \big) &\leq \sqrt{\frac{d_a p}{n_a\tau_a}} + \frac{W}{n_a} + \frac{32 L_a}{m_a} \sqrt{\frac{8 d_a p}{m_a T}}+\frac{8 L_a}{m_a} \sqrt{\frac{d_a p}{\nu_a T}} \notag\\
    &\qquad+ \sqrt{\frac{8d_a}{m_a T} + \bigg(\frac{2 L_a^2 d_a}{m_a^2 \nu_a T} + \frac{64}{m_a T} \bigg)p}  \\
    &\leq \sqrt{\frac{d_a}{n_a}}\Big[B_0 + A_{\text{tot}} \sqrt{p}\Big], 
\end{align*}
where $B_0$ and $A_{\text{tot}}$ is defined as follows
\begin{align*}
    B_0 &:=\sqrt{\frac{8}{m_a}} + \frac{W}{\sqrt{d_a}}, \\
    A_{\text{tot}} &:=\frac{1}{\sqrt{\tau_a}}+\frac{32 \sqrt{8} L_a}{m_a^{3 / 2}}+\frac{8 L_a}{m_a \sqrt{\nu_a}}+\sqrt{\frac{2 L_a^2}{m_a^2 \nu_a}+\frac{64}{m_a d_a}}.
\end{align*}
To write the RHS to $\sqrt{\frac{c_1}{n_a}(c_2 + c_3 p)}$, by taking $c_1=d_a$ and using $(u+v)^2 \leq 2(u^2+v^2)$, we set
\begin{align*}
    c_2 = 2 B_0^2, \quad c_3 = 2 A_{\text{tot}}^2.
\end{align*}
According to the definition of Wasserstein-$p$ distance, we obtain that
\begin{align}
\label{equ:posterior_concentration_subgaussian_tail}
    \EE_{\btheta_{a,t} \sim \widehat{\mu}_a^{\text{SA}}[\tau_a]}\big[\|\btheta_{a,t} - \btheta_a^*\|^p\big]^{\frac{1}{p}} \leq \sqrt{\frac{c_1}{n_a}(c_2 + c_3 p)},
\end{align}
with $c_1, c_2, c_3$ defined above. Note that \eqref{equ:posterior_concentration_subgaussian_tail} indicates $\|\btheta_{a,t} - \btheta_a^*\|_2$ has a sub-Gaussian tail bound, then by setting $C_1 = e^2 c_1 (c_2+c_3 \log (1 / \delta))$, we can obtain the final result
\begin{align*}
  \mathbb{P}_{\btheta_{a,t} \sim \widehat{\mu}_a^{\text{SA}}(n_a)[\tau_a]} \Bigg(\big\|\btheta_{a,t} - \btheta_a^*\big\|_2 \geq \sqrt{\frac{C_1}{\cT_a(t)} \log(1/\delta)} \Bigg)\leq \delta.
\end{align*}
with the constant $C_1$ defined as follows
\begin{align*}
    C_1=e^2 c_1 \big(c_2+c_3 \log (1/\delta)\big) = e^2 d_a\bigg(2 B_0^2+2 A_{\text{tot}}^2 \log \frac{1}{\delta}\bigg).
\end{align*}
This completes the proof.
\end{proof}

\section{Regret Analysis}

In this section, we present the proof for regret of TS-SA algorithm under our assumptions and posterior concentration results from \Cref{appendix:sec:posterior:concentration}. 

\begin{lemma}
\label{lem:regret_decompose}
For any sub-optimal arm $a$, the regret of \Cref{alg:TS_SA} can be decomposed as 
\begin{align*}
  \mathbb{E}[\mathcal R(T)] \leq \sum_{a>1} \Delta_a \mathbb{E}\left[\mathcal T_a(T) \mid Z_a(T) \cap Z_1(T)\right]+2 \Delta_a,
\end{align*}
where $Z_a(T)=\cap_{t=1}^{T-1}Z_{a,t}$ and $Z_{a,t}=\left\{\|\btheta_{a,t} - \btheta_a^*\|_2 \leq \sqrt{\frac{C_1}{\cT_a(t)} \log (1/\delta)} \right\}$. 
\end{lemma}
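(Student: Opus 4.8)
This is a standard regret-decomposition argument, and the plan is to combine the per-arm pull-count identity for the regret with a high-probability union bound over the good events. First I would invoke the decomposition $\mathbb{E}[\mathcal{R}(T)] = \sum_{a>1}\Delta_a\,\mathbb{E}[\mathcal{T}_a(T)]$, which holds because $\mathbb{E}[X_{A_t,t}] = \mathbb{E}[\bar X_{A_t}]$ after conditioning on $A_t$, together with $\bar X_a = \langle\bphi_a,\btheta_a^*\rangle$, $\Delta_1 = 0$, and $\sum_{t=1}^{T}\mathbb{I}(A_t=a) = \mathcal{T}_a(T)$; each pull of a sub-optimal arm $a$ thus contributes exactly $\Delta_a$ in expectation.

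Next, for a fixed sub-optimal arm $a$ I would condition on the good event $G_a := Z_a(T)\cap Z_1(T)$ and split
\begin{align*}
  \mathbb{E}\big[\mathcal{T}_a(T)\big] \;=\; \mathbb{E}\big[\mathcal{T}_a(T)\mid G_a\big]\,\mathbb{P}(G_a) + \mathbb{E}\big[\mathcal{T}_a(T)\mid G_a^{c}\big]\,\mathbb{P}(G_a^{c}) \;\le\; \mathbb{E}\big[\mathcal{T}_a(T)\mid G_a\big] + T\,\mathbb{P}(G_a^{c}),
\end{align*}
using $\mathbb{P}(G_a)\le 1$ and the deterministic bound $\mathcal{T}_a(T)\le T$. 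It then remains to show $T\,\mathbb{P}(G_a^{c})\le 2$.

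For the tail term I would apply a union bound. Since $Z_a(T) = \bigcap_{t=1}^{T-1} Z_{a,t}$, conditioning on $\mathcal{F}_{t-1}$ makes $\btheta_{a,t}$ distributed as $\widehat{\mu}_a^{\text{SA}}(\mathcal{T}_a(t))[\tau_a]$, so \Cref{lem:posterior_concentration} (applied for each fixed value $n_a=\mathcal{T}_a(t)$ and then averaged over $n_a$) yields $\mathbb{P}(Z_{a,t}^{c})\le\delta$, hence $\mathbb{P}(Z_a(T)^{c})\le (T-1)\delta$ and likewise $\mathbb{P}(Z_1(T)^{c})\le (T-1)\delta$. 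A final union bound gives $\mathbb{P}(G_a^{c})\le 2(T-1)\delta$, so choosing $\delta = 1/T^2$ (which only inflates the constant $C_1$ appearing in $Z_{a,t}$ by a $\log T$ factor) gives $T\,\mathbb{P}(G_a^{c})\le 2$. Plugging this back, multiplying by $\Delta_a$, and summing over $a>1$ produces the claimed bound.

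The only subtlety I anticipate is the measurability point: the event $Z_{a,t}$ involves the random pull count $\mathcal{T}_a(t)$, so \Cref{lem:posterior_concentration} must be invoked conditionally on $\mathcal{F}_{t-1}$ (equivalently, on $\{\mathcal{T}_a(t)=n_a\}$) and then de-conditioned; because the bound there holds for every fixed $n_a$ with the same $C_1$, this de-conditioning is immediate. A related bookkeeping point is that $C_1$ depends on $\log(1/\delta)$, so the choice $\delta=1/T^2$ must be fixed up front, which is precisely what makes $C_1$ carry a $\log T$ dependence downstream; beyond that, the argument is entirely routine.
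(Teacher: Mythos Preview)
Your proposal is correct and follows essentially the same route as the paper: the per-arm decomposition $\mathbb{E}[\mathcal{R}(T)]=\sum_{a>1}\Delta_a\,\mathbb{E}[\mathcal{T}_a(T)]$, the split on $G_a=Z_a(T)\cap Z_1(T)$ using $\mathbb{P}(G_a)\le 1$ and $\mathcal{T}_a(T)\le T$, and the union bound combined with $\delta=1/T^2$ to make the bad-event contribution at most $2$. Your added remarks on conditioning via $\mathcal{F}_{t-1}$ to apply \Cref{lem:posterior_concentration} and on the $\log T$ dependence entering $C_1$ through $\delta=1/T^2$ are correct refinements that the paper leaves implicit.
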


\begin{proof}
For simplicity, we assume arm $1$ is the optimal arm. We first do the regret decomposition
\begin{align*}
  \mathbb{E}[\mathcal R(T)] = \sum_{a>1} \Delta_a \mathbb{E}[\mathcal T_a(T)].
\end{align*}
Based on well-designed events, then we have 
\begin{equation*}
 \begin{split}
  \mathbb{E}\left[\mathcal T_a(T)\right] & = \mathbb{E}\left[\mathcal T_a(T) \mid {Z}_{a}(T) \cap {Z}_{1}(T)\right]\mathbb{P}\left({Z}_{a}(T) \cap {Z}_{1}(T)\right) \\
  & \qquad +\mathbb{E}\left[\mathcal T_a(T) \mid {Z}_{a}(T)^c \cup {Z}_{1}(T)^c\right] \mathbb{P}\left({Z}_{a}(T)^c \cup {Z}_{1}(T)^c\right) \\
  & \stackrel{}{\leq} \mathbb{E}\left[\mathcal T_a(T) \mid {Z}_{a}(T) \cap {Z}_{1}(T)\right] +\mathbb{E}\left[\mathcal T_a(T) \mid ({Z}_{a}(T)^c \cup {Z}_{1}(T)^c)\right]\mathbb{P}\left({Z}_{a}(T)^c \cup {Z}_{1}(T)^c\right)\\
  &\stackrel{}{\leq} \mathbb{E}\left[\mathcal T_a(T) \mid {Z}_{a}(T) \cap {Z}_{1}(T)\right] +2T\delta \mathbb{E}\left[\mathcal T_a(T) \mid ({Z}_{a}(T)^c \cup {Z}_{1}(T)^c)\right]\\
  & \stackrel{}{\leq} \mathbb{E}\left[\mathcal T_a(T) \mid {Z}_{a}(T) \cap {Z}_{1}(T)\right] + 2 ,
 \end{split}
\end{equation*}
where the first inequality uses the fact that
\begin{align*}
  \mathbb{P}\bigl(Z_a(T)^c \cup Z_1(N)^c\bigr) \le \mathbb{P}\bigl(Z_1(T)^c\bigr) +\mathbb{P}\bigl(Z_a(T)^c\bigr) = 2T\delta.
\end{align*}
Since $\mathcal{T}_a(T)\le T$, if we choose $\delta=\frac{1}{T^2}$, it follows that the second term is at most $2$, which can further derive the second inequality. Summing up over all suboptimal arms and multiplying by $\Delta_a$ concludes the proof.
\end{proof}

Define an event $\mathcal E_a(t)=\big\{\hat X_{a,t} \geq \bar{X}_1-\epsilon\big\}$, where $\bar{X}_1$ is the mean reward of optimal arm $1$ and $\hat X_{a,t} = \bphi_a^\top \btheta_{a,t}$ is the estimated reward of arm $a$ at timestep $t$. Then we decompose the event $\left\{\mathcal T_a(T) \mid {Z}_{a}(T) \cap {Z}_{1}(T)\right\}$ into two parts:
\begin{align*}
  \mathbb{E}\left[\mathcal T_a(T) | Z_a(T) \cap Z_1(T)\right]&=\mathbb{E}\left[\sum_{t=1}^T \mathbb{I}\left(A_t=a\right)\Big| Z_a(T) \cap Z_1(T)\right]\\
  &=\underbrace{\mathbb{E}\left[\sum_{t=1}^T \mathbb{I}\left(A_t=a, \mathcal E_a^c(t)\right)\Big| Z_a(T) \cap Z_1(T)\right]}_{T_1}\\ 
  & \qquad +\underbrace{\mathbb{E}\left[\sum_{t=1}^T \mathbb{I}\left(A_t=a, \mathcal E_a(t)\right)\Big| Z_a(T) \cap Z_1(T)\right]}_{T_2}.  
\end{align*}

The next two lemmas respectively bound $T_1$ and $T_2$. The proof follows a similar structure proposed in \citet{agrawal2017near} to analyze the regret bound for Thompson Sampling with Bernoulli bandits and bounded rewards.

\begin{lemma}[Bound $T_1$]
\label{lem:regret-T1}
For a sub-optimal arm $a \in \mathcal{A}$, under the events $Z_a(T) \cap Z_1(T)$, we have 
\begin{align*}
  \mathbb{E}\left[\sum_{t=1}^T \mathbb{I}\left(A_t=a, \mathcal E_a^c(t)\right) \Big| Z_a(T) \cap Z_1(T)\right]\leq \mathbb{E}\left[\sum_{s=1}^{T} \left(\frac{1}{\mathcal G_{1, s}} - 1\right) \Big| Z_1(T)\right], 
\end{align*}
where $\mathcal{G}_{1,s} = \mathbb{P}\big(\hat X_{1,t}(s) > \bar{X}_1-\epsilon|\mathcal F_{t-1}\big)$ where $\hat X_{1,t}(s) = \bphi_1^\top \btheta_{1,t}$ satisfying $A_t = 1$ and $s=\mathcal{T}_1(t)$.
\end{lemma}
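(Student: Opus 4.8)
The plan is to derive a per-round inequality, valid conditionally on the history, that ``trades'' each bad pull---a suboptimal arm chosen while its sampled reward lies below $\bar X_1-\epsilon$---for a fractional multiple of the event that the optimal arm is pulled, with conversion factor $1/\mathcal{G}_{1,t}-1$; summing over rounds and reindexing the plays of arm $1$ by their running count $s=\mathcal{T}_1(t)$ then gives the claim. This is the TS-SA analogue of the argument of \citet{agrawal2017near}, so the proof will follow that template.

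First I would fix a round $t$, condition on $\mathcal{F}_{t-1}$, and set $M_t:=\max_{j\neq 1}\bphi_j^\top\btheta_{j,t}$, the best sampled reward among suboptimal arms. Since $A_t=a$ for a suboptimal $a$ forces $\hat X_{a,t}=M_t\ge\hat X_{1,t}$, one has
\[
\textstyle\bigcup_{a>1}\{A_t=a,\ \mathcal E_a^c(t)\}=\{\hat X_{1,t}<M_t\}\cap\{M_t<\bar X_1-\epsilon\}.
\]
I then condition further on $\{\btheta_{j,t}\}_{j\neq 1}$ (mutually independent of $\btheta_{1,t}$ given the history, as the per-arm Gaussian perturbations in Line~\ref{line:scaled_sample} are drawn independently), which fixes $M_t$. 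On $\{M_t\ge\bar X_1-\epsilon\}$ the event above is empty; on $\{M_t<\bar X_1-\epsilon\}$ the inclusions $\{\hat X_{1,t}<M_t\}\subseteq\{\hat X_{1,t}<\bar X_1-\epsilon\}$ and $\{\hat X_{1,t}\ge\bar X_1-\epsilon\}\subseteq\{\hat X_{1,t}\ge M_t\}=\{A_t=1\}$ give
\[
\mathbb{P}\big(\hat X_{1,t}<M_t\mid\mathcal{F}_{t-1},\{\btheta_{j,t}\}_{j\neq1}\big)\le 1-\mathcal{G}_{1,t},\qquad \mathbb{P}\big(A_t=1\mid\mathcal{F}_{t-1},\{\btheta_{j,t}\}_{j\neq1}\big)\ge\mathcal{G}_{1,t}.
\]
Dividing (the first is then at most $(1/\mathcal{G}_{1,t}-1)$ times the second) and taking expectation over $\{\btheta_{j,t}\}_{j\neq1}$---with the $\mathcal{F}_{t-1}$-measurable factor $\mathcal{G}_{1,t}$ pulled out---yields, for every suboptimal $a$,
\[
\mathbb{E}\big[\mathbb{I}(A_t=a,\ \mathcal E_a^c(t))\mid\mathcal{F}_{t-1}\big]\le\Big(\tfrac{1}{\mathcal{G}_{1,t}}-1\Big)\,\mathbb{E}\big[\mathbb{I}(A_t=1)\mid\mathcal{F}_{t-1}\big].
\]

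Then I would sum over $t=1,\dots,T$ and take expectations. Each round with $A_t=1$ is the $(s{+}1)$-th pull of arm $1$ for a unique $s=\mathcal{T}_1(t)\in\{0,\dots,\mathcal{T}_1(T)-1\}$, and there $\mathcal{G}_{1,t}=\mathcal{G}_{1,s}$ by definition, so $\sum_{t:A_t=1}(1/\mathcal{G}_{1,t}-1)=\sum_{s=0}^{\mathcal{T}_1(T)-1}(1/\mathcal{G}_{1,s}-1)\le\sum_{s=1}^{T}(1/\mathcal{G}_{1,s}-1)$. Threading the conditioning on $Z_a(T)\cap Z_1(T)$ on the left and weakening it to $Z_1(T)$ on the right is handled as in \citet{agrawal2017near}: the per-round inequality holds on every sample path, and $\mathcal{G}_{1,t}$ depends only on arm $1$'s last updated parameter $\btheta_1(n_1)$ and its pull count, so the same argument runs on the filtration enlarged with the internal LMC/SA randomness, after which intersecting with the good events only changes which paths are retained.

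The main obstacle is this measurability/independence bookkeeping rather than the short calculation itself: one must verify that, with respect to the enlarged filtration recording the Langevin and averaging noise that produces $\btheta_a(n_a)$, the per-arm samples $\btheta_{j,t}$ are conditionally independent and $\mathcal{G}_{1,t}$ is measurable (something the conjugate-prior analysis of \citet{agrawal2017near} sidesteps), and then carry these conditional statements through the conditioning on $Z_a(T)\cap Z_1(T)$ without disturbing the reindexing of arm $1$'s plays. Everything else is routine.
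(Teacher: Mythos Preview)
Your proposal is correct and follows essentially the same Agrawal--Goyal template as the paper: the paper introduces $A'_t=\argmax_{j\neq 1}\bphi_j^\top\btheta_{j,t}$ and factors $\mathbb{P}(A'_t=a,\mathcal E_a^c(t))\,\mathbb{P}(\mathcal E_1(t))\le\mathbb{P}(A_t=1,\mathcal E_a^c(t))$, whereas you condition on $\{\btheta_{j,t}\}_{j\neq 1}$ to freeze $M_t$ and compare $1-\mathcal G_{1,t}$ with $\mathcal G_{1,t}$ directly---these are equivalent phrasings of the same inequality, and both then sum and reindex by $s=\mathcal T_1(t)$. Your explicit flagging of the measurability/conditioning-on-$Z_a(T)\cap Z_1(T)$ bookkeeping is apt; the paper treats this at the same informal level you do.
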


\begin{proof}
Recall that $A_t = \mathrm{argmax}_{a \in \mathcal{A}} \bphi_a^\top \btheta_{a,t}$ is the arm with the highest sample reward at round $t$. Additionally, we define
\begin{align*}
  A'_t = \mathrm{argmax}_{a \in \mathcal{A},a\neq 1} \bphi_a^\top \btheta_{a,t},
\end{align*}
as the arm that attains the maximal sample reward among all arms except the optimal arm. Under the event $Z_a(T) \cap Z_1(T)$, we have
\begin{equation}
\label{eq:bound_T1}
\begin{split}
  &\mathbb{P}\Bigl(A_t=a, \mathcal{E}_a^c(t)\mid \mathcal{F}_{t-1}, Z_a(T)\cap Z_1(T) \Bigr) \\
  & \stackrel{}{\leq} \mathbb{P}\Bigl(A'_t=a, \mathcal{E}_a^c(t), {r}_{1,t} < \bar{X}_1-\epsilon \mid \mathcal{F}_{t-1},Z_a(T)\cap Z_1(T)\Bigr)\\
  & \stackrel{}{=}\mathbb{P}\Bigl(A'_t=a, \mathcal{E}_a^c(t)\mid \mathcal{F}_{t-1},Z_a(T)\cap Z_1(T)\Bigr)\mathbb{P}\Bigl({r}_{1,t} < \bar{X}_1-\epsilon \mid \mathcal{F}_{t-1},Z_1(T)\Bigr)\\
  & \stackrel{}{=} \mathbb{P}\Bigl(A'_t=a, \mathcal{E}_a^c(t)\mid \mathcal{F}_{t-1},Z_a(T)\cap Z_1(T)\Bigr) \Bigl(1 - \mathbb{P}\bigl(\mathcal{E}_1(t)\mid \mathcal{F}_{t-1},Z_1(T)\bigr)\Bigr)\\
  & \stackrel{}{\leq} \frac{\mathbb{P}\Bigl(A_t=1, \mathcal{E}_a^c(t)\mid \mathcal{F}_{t-1},Z_a(T)\cap Z_1(T)\Bigr)}{\mathbb{P}\bigl(\mathcal{E}_1(t)\mid \mathcal{F}_{t-1},Z_1(T)\bigr)} \Bigl(1 - \mathbb{P}\bigl(\mathcal{E}_1(t)\mid \mathcal{F}_{t-1},Z_1(T)\bigr)\Bigr)\\
  & \stackrel{}{\leq}\mathbb{P}\Bigl(A_t=1\mid \mathcal{F}_{t-1},Z_1(T)\Bigr) \bigg(\frac{1}{\mathbb{P}\bigl(\mathcal{E}_1(t)\mid \mathcal{F}_{t-1},Z_1(T)\bigr)} - 1\bigg).
\end{split}
\end{equation}
where the first inequality uses the fact that $\{A_t=a,\mathcal{E}_a^c(t)\}\subseteq \big\{A'_t=a,\mathcal{E}_a^c(t),\hat X_{1,t}<\bar{X}_1-\epsilon\big\}$ given $\mathcal{F}_{t-1}$ and $Z_a(T)\cap Z_1(T)$. The second equality holds because the events $\{A'_t=a,\mathcal{E}_a^c(t)\}$ and $\big\{\hat{X}_{1,t}<\bar{X}_1-\epsilon\big\}$ are independent, once conditioned on $\mathcal{F}_{t-1}$ (and on $Z_a(T)\cap Z_1(T)$ or $Z_1(T)$ appropriately). The second inequality follows from $\mathbb{P}(A'_t=a,\mathcal{E}_a^c(t)\mid \cdot) \mathbb{P}(\mathcal{E}_1(t)\mid \cdot) \le \mathbb{P}(A_t=1,\mathcal{E}_a^c(t)\mid \cdot)$, and the third inequality uses the fact $\{A_t=1,\mathcal{E}_a^c(t)\}\subseteq \{A_t=1\}$. 

Summing \eqref{eq:bound_T1} over $t=1$ to $T$, we have
\begin{equation}\label{eq:bound_T1_2}
\begin{split}
  &\mathbb{E}\Bigg[\sum_{t=1}^T \mathbb{I}\bigl(A_t=a,\mathcal{E}_a^c(t)\bigr)\Big| Z_a(T)\cap Z_1(T)\Bigg] \\ 
  &= \mathbb{E}\Bigg[\sum_{t=1}^T \mathbb{E} \big[\mathbb{I}\bigl(A_t=a,\mathcal{E}_a^c(t)\bigr) | \mathcal{F}_{t-1},Z_a(T)\cap Z_1(T) \big] \Big| Z_a(T)\cap Z_1(T)\Bigg]\\
  &\stackrel{(i)}{\leq} \mathbb{E}\Bigg[\sum_{t=1}^T \mathbb{P}\bigl(A_t=1 | \mathcal{F}_{t-1},Z_1(T)\bigr)\bigg(\frac{1}{\mathbb{P}\bigl(\mathcal{E}_1(t)\mid \mathcal{F}_{t-1},Z_1(T)\bigr)} - 1\bigg)\Big| Z_a(T)\cap Z_1(T)\Bigg]\\
  &\stackrel{(ii)}{=} \mathbb{E}\Bigg[\sum_{t=1}^T \mathbb{I}\bigl(A_t=1\bigr)\bigg(\frac{1}{\mathbb{P}\bigl(\mathcal{E}_1(t)\mid \mathcal{F}_{t-1}\bigr)} - 1\bigg) \Big| Z_1(T)\Bigg]\\
  &\stackrel{(iii)}{\leq} \mathbb{E}\Bigg[\sum_{s=1}^{T} \bigg(\frac{1}{\mathcal{G}_{1,s}} - 1\bigg)\Big| Z_1(T)\Bigg],
\end{split}
\end{equation}
where $(i)$ holds due to \eqref{eq:bound_T1}. To derive $(ii)$, we use the law of total expectation again and note that $\mathbb{P}(A_t=1 \mid \cdot)$ is replaced by the indicator $\mathbb{I}(A_t=1)$ inside the summation once we take the expectation. $(iii)$ holds because we define $\mathcal{G}_{1,s} = \mathbb{P}(\hat{X}_{1,t}(s) > \bar{X}_1-\epsilon|\mathcal F_{t-1})$ where $\hat{X}_{1,t}(s) = \bphi_1^\top \btheta_{1,t}$ satisfying $A_t = 1$ and $s=\mathcal{T}_1(t)$.

Therefore, we have 
\begin{align*}
  \mathbb{E}\Bigg[\sum_{t=1}^T \mathbb{I}\bigl(A_t=a,\mathcal{E}_a^c(t)\bigr) \Big| Z_a(T)\cap Z_1(T)\Bigg] \leq \mathbb{E}\Bigg[\sum_{s=1}^{T}\bigg(\frac{1}{\mathcal{G}_{1,s}} - 1\bigg) \Big| Z_1(T)\Bigg].
\end{align*}
This completes the proof.
\end{proof}

\begin{lemma}[Bound $T_2$]
\label{lem:regret-T2}

Considering a sub-optimal arm $a \in \mathcal{A}$, under the events $Z_a(T) \cap Z_1(T)$, we derive the following upper bound:
\begin{align*}
  \mathbb{E}\left[\sum_{t=1}^T \mathbb{I}\left(A_t=a, \mathcal E_a(t)\right)\Big| Z_a(T) \cap Z_1(T)\right]\leq 1+ \mathbb{E}\left[\sum_{s=1}^{T} \mathbb{I}\left(\mathcal G_{a, s}>\frac{1}{T}\right)\Big| Z_a(T)\right],
\end{align*}
where $\mathcal{G}_{a,s} = \mathbb{P}\big(\hat X_{a,t}(s) > \bar{X}_1-\epsilon|\mathcal F_{t-1}\big)$ where $\hat X_{a,t}(s) = \bphi_a^\top \btheta_{a,t}$ satisfying $A_t = a$ and $s=\mathcal{T}_a(t)$.
\end{lemma}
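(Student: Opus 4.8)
The plan is to follow the case analysis of \citet{agrawal2017near}: every round $t$ in which the sub-optimal arm $a$ is played with $\mathcal E_a(t)$ holding is charged either to the event that the anti-concentration probability at the current pull, $\mathcal G_{a,\mathcal T_a(t)}$, exceeds the threshold $1/T$, or, when it does not, to a rare event whose total expected mass over the horizon is at most $1$. Concretely, I would start from the pointwise inequality
\[
\mathbb{I}\big(A_t=a,\mathcal E_a(t)\big)\ \le\ \mathbb{I}\big(A_t=a,\ \mathcal G_{a,\mathcal T_a(t)}>1/T\big)\ +\ \mathbb{I}\big(A_t=a,\ \mathcal E_a(t),\ \mathcal G_{a,\mathcal T_a(t)}\le 1/T\big),
\]
sum over $t=1,\dots,T$, and take conditional expectation given $Z_a(T)\cap Z_1(T)$.

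For the first term, the key observation is that $\mathcal T_a(t)$ increases by one exactly when arm $a$ is pulled, so on the block of rounds where $\mathcal T_a(t)=s$ the arm $a$ is selected at most once; hence $\sum_{t=1}^T\mathbb{I}(A_t=a,\,\mathcal G_{a,\mathcal T_a(t)}>1/T)\le\sum_{s=0}^{T-1}\mathbb{I}(\mathcal G_{a,s}>1/T)$, which after reindexing is exactly the sum on the right-hand side of the statement (the event $Z_1(T)$ is not used here). For the second term, I would argue round by round via the tower property: the event $\{\mathcal G_{a,\mathcal T_a(t)}\le 1/T\}$ is $\mathcal F_{t-1}$-measurable, and by the definition of $\mathcal G_{a,s}$ as the conditional probability that the drawn estimate $\bphi_a^\top\btheta_{a,t}$ exceeds $\bar X_1-\epsilon$ — which, since $\btheta_{a,t}$ is drawn from a Gaussian in Line~\ref{line:scaled_sample}, coincides with $\mathbb{E}[\mathbb{I}(\mathcal E_a(t))\mid\mathcal F_{t-1}]$ — one gets $\mathbb{E}\big[\mathbb{I}(A_t=a,\mathcal E_a(t),\mathcal G_{a,\mathcal T_a(t)}\le 1/T)\mid\mathcal F_{t-1}\big]\le \mathbb{I}(\mathcal G_{a,\mathcal T_a(t)}\le 1/T)\,\mathcal G_{a,\mathcal T_a(t)}\le 1/T$. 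Summing over the $T$ rounds gives a total contribution of at most $1$, the additive constant in the statement.

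The delicate point — and the step I expect to be the main obstacle — is that the conditioning event $Z_a(T)\cap Z_1(T)$ is not adapted to $(\mathcal F_t)$: it constrains every sampled $\btheta_{a,t}$, so the tower identity $\mathbb{E}[\mathbb{I}(\mathcal E_a(t))\mid\mathcal F_{t-1},\,Z_a(T)\cap Z_1(T)]=\mathcal G_{a,\mathcal T_a(t)}$ is not literally valid. I would circumvent this by first working with the unconditioned expectation: since $Z_a(T)\subseteq Z_{a,t}$ for $t\le T-1$, I can drop the good-event indicator upward ($\mathbb{I}(Z_a(T))\le\mathbb{I}(Z_{a,t})\le 1$) before applying the conditional bound, obtaining $\mathbb{E}\big[\mathbb{I}(Z_a(T))\sum_{t=1}^T\mathbb{I}(A_t=a,\mathcal E_a(t),\mathcal G_{a,\mathcal T_a(t)}\le 1/T)\big]\le 1$, and then dividing by $\mathbb{P}(Z_a(T)\cap Z_1(T))\ge 1-2T\delta$. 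With the choice $\delta=1/T^2$ from \Cref{lem:regret_decompose} this inflation factor is an absolute constant close to $1$; absorbing it (together with any residual slack from the change of conditioning in the first term) into the additive constant recovers the claimed bound, and in any case such $O(1)$ terms contribute only $O(\Delta_a)$ to the final regret in \Cref{theorem:regret_main}.
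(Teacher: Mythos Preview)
Your proposal is correct and follows essentially the same argument as the paper: split on whether $\mathcal G_{a,\mathcal T_a(t)}>1/T$, reindex the first part over pull counts $s$, and bound the second part via the tower property using $\mathbb{E}[\mathbb{I}(\mathcal E_a(t))\mid\mathcal F_{t-1}]=\mathcal G_{a,\mathcal T_a(t)}$. You are in fact more careful than the paper on one point: the paper applies the tower identity directly under the conditioning $Z_a(T)\cap Z_1(T)$ without comment, whereas you correctly note this event is not $\mathcal F_{t-1}$-adapted and propose the standard fix (bound the unconditioned quantity and divide by $\mathbb{P}(Z_a(T)\cap Z_1(T))\ge 1-2T\delta$).
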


\begin{proof}

The main idea follows that in \citet{agrawal2012analysis} and \citet{lattimore2020bandit}; we restate it here for completeness. Define the set 
\begin{align*}
  \mathcal{L} = \bigg\{t \big| \mathcal{G}_{a,\mathcal{T}_{a}(t)} > \frac{1}{T}\bigg\}.
\end{align*}
We then decompose the expectation as follows:
\begin{equation}\label{eq:bound_T2_2}
\begin{split}
  &\mathbb{E}\Bigg[\sum_{t=1}^T \mathbb{I}\bigl(A_t=a, \mathcal{E}_a(t)\bigr) \Big| Z_a(T)\cap Z_1(T)\Bigg] \\
  &\leq \mathbb{E}\Bigg[\sum_{t\in \mathcal{L}} \mathbb{I}\bigl(A_t=a\bigr) \Big| Z_a(T)\cap Z_1(T)\Bigg] + \mathbb{E}\Bigg[\sum_{t\notin \mathcal{L}} \mathbb{I}\bigl(\mathcal{E}_a(t)\bigr) \Big| Z_a(T)\cap Z_1(T)\Bigg].
\end{split}
\end{equation}
For the first term in \eqref{eq:bound_T2_2}, we have:
\begin{equation}
\begin{split}
  & \mathbb{E}\Bigg[\sum_{t\in \mathcal{L}} \mathbb{I}(A_t=a) \Big| Z_a(T)\cap Z_1(T)\Bigg] \\
  &\stackrel{(i)}{\le}\mathbb{E}\Bigg[\sum_{t=1}^T \sum_{s=1}^T \mathbb{I}\bigg(\mathcal{T}_a(t)=s, \mathcal{T}_a(t-1)=s-1, \mathcal{G}_{a, \mathcal{T}_a(t)}>\frac{1}{T}\bigg) \Big| Z_a(T)\cap Z_1(T)\Bigg] \\
  &= \mathbb{E}\Bigg[\sum_{s=1}^T \mathbb{I}\bigg(\mathcal{G}_{a,s}>\frac{1}{T}\bigg) \sum_{t=1}^T \mathbb{I}\bigg(\mathcal{T}_a(t)=s, \mathcal{T}_a(t-1)=s-1\bigg) \Big| Z_a(T)\cap Z_1(T)\Bigg] \\
  &\stackrel{(ii)}{=} \mathbb{E}\Bigg[\sum_{s=1}^{T} \mathbb{I}\bigg(\mathcal{G}_{a,s}>\frac{1}{T}\bigg) \Big| Z_a(T)\cap Z_1(T)\Bigg], 
\end{split}
\end{equation}
where $(i)$ uses the fact that when $A_t=a$, it holds that $\mathcal{T}_a(t)=s$ with $\mathcal{T}_a(t-1)=s-1$, and $(ii)$ holds because for each $s\in \{1,\ldots,T\}$, there can be at most one time $t$ such that $\mathcal{T}_a(t)=s$ and $\mathcal{T}_a(t-1)=s-1$.

To upper bound the second term, observe that
\begin{equation*}
\begin{split}
  \mathbb{E}\Bigg[\sum_{t\notin \mathcal{L}} \mathbb{I}(\mathcal{E}_a(t)) \Big| Z_a(T)\cap Z_1(T)\Bigg]
  &= \sum_{t=1}^T \mathbb{E}\Bigg[\mathbb{I}\bigg(\mathcal{E}_a(t), \mathcal{G}_{a,\mathcal{T}_a(t)} \le \frac{1}{T}\bigg) \Big| Z_a(T)\cap Z_1(T)\Bigg] \\
  &\stackrel{(i)}{=} \sum_{t=1}^T \mathbb{E}\Bigg[\mathcal{G}_{a,\mathcal{T}_a(t)}\mathbb{I}\bigg(\mathcal{G}_{a,\mathcal{T}_a(t)} \le \frac{1}{T}\bigg) \Big| Z_a(T)\cap Z_1(T)\Bigg] \\
  &\le \sum_{t=1}^T \mathbb{E}\Bigg[\frac{1}{T} \mathbb{I}\bigg(\mathcal{G}_{a,\mathcal{T}_a(t)} \le \frac{1}{T} \bigg) \Big| Z_a(T)\cap Z_1(T)\Bigg] \\
  &= \frac{1}{T} \mathbb{E}\Bigg[\sum_{t=1}^T 1 \Big| Z_a(T)\cap Z_1(T)\Bigg] \\
  &\le 1,
\end{split}
\end{equation*}
where $(i)$ holds because $\mathbb{E}[\mathbb{I}(\mathcal{E}_a(t))|\mathcal{F}_{t-1}] = \mathcal{G}_{a,\mathcal{T}_a(t)}$. Combining the above results, we derive
\begin{align*}
  \mathbb{E}\Bigg[\sum_{t=1}^T \mathbb{I}\bigl(A_t=a, \mathcal{E}_a(t)\bigr) \Big| Z_a(T)\cap Z_1(T)\Bigg] \le \mathbb{E}\Bigg[\sum_{s=1}^{T} \mathbb{I}\bigg(\mathcal{G}_{a,s} > \frac{1}{T}\bigg) \Big| Z_a(T)\Bigg] + 1,
\end{align*}
which completes the proof.
\end{proof}

The following lemma provides a lower bound on the probability of an estimated reward closed enough to the optimal mean reward.  

\begin{lemma}
\label{lem:anti_concentration}
Suppose \Crefrange{assum:strongly_concave_theta}{assum:joint_lipschitz} hold. By setting $\tau_1$ satisfying $\tau_1 \leq \frac{1}{16 d_1 (b_1 + b_2)}$, for all $s \in [T]$, we have the following anti-concentration result
\begin{align*}
  \mathbb{E}\bigg[\frac{1}{\mathcal{G}_{1, s}}\bigg] \leq 30.
\end{align*}
where $b_1, b_2$ is defined as follows
\begin{align*}
    b_1 = \bigg(\sqrt{\frac{8}{m_1}} + \frac{W}{\sqrt{d_1}}\bigg)^2, \qquad b_2 = 2\Bigg(\frac{1}{\sqrt{\tau_1}}+\frac{32 \sqrt{8} L_1}{m_1^{3 / 2}}+\frac{8 L_1}{m_1 \sqrt{\nu_1}}+\sqrt{\frac{2 L_1^2}{m_1^2 \nu_1}+\frac{64}{m_1 d_1}}\Bigg)^2.
\end{align*}
\end{lemma}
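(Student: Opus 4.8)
The plan is to reduce $\mathbb{E}[1/\mathcal G_{1,s}]$ to a Gaussian anti-concentration estimate that is controlled by the posterior concentration of arm~$1$. First I unpack $\mathcal G_{1,s}$. Conditioned on $\mathcal F_{t-1}$, Line~\ref{line:scaled_sample} of \Cref{alg:TS_SA} draws $\btheta_{1,t}\sim\mathcal N\big(\btheta_1(s),\tfrac{1}{\tau_1 s}\Ib\big)$ with $s=\mathcal T_1(t)$, so $\hat X_{1,t}=\bphi_1^\top\btheta_{1,t}$ is conditionally $\mathcal N\big(\bphi_1^\top\btheta_1(s),\,B_1^2/(\tau_1 s)\big)$ since $\|\bphi_1\|=B_1$. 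Using $\bar X_1=\bphi_1^\top\btheta_1^*$ and $\bar X_1-\epsilon\le\bar X_1$, and writing $w_0:=\sqrt{\tau_1 s}\,\bphi_1^\top(\btheta_1(s)-\btheta_1^*)/B_1$,
\begin{align*}
  \mathcal G_{1,s}\ \ge\ \mathbb P\bigl(\bphi_1^\top\btheta_{1,t}>\bar X_1\mid\mathcal F_{t-1}\bigr)\ =\ \Phi(w_0),
\end{align*}
where $\Phi$ is the standard normal c.d.f.; hence $1/\mathcal G_{1,s}\le 1/\Phi(w_0)$. The elementary Gaussian-tail lower bound $\Phi(-x)\ge\tfrac14 e^{-x^2}$ for $x\ge0$ then yields the pointwise estimate $1/\Phi(w_0)\le 2+4\,e^{w_0^2}\,\ind\{w_0<0\}$ (the first term covers $w_0\ge0$, where $\Phi(w_0)\ge\tfrac12$).

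Next I bound $w_0^2$ on $\{w_0<0\}$ by Cauchy--Schwarz: $|w_0|\le\sqrt{\tau_1 s}\,\|\btheta_1(s)-\btheta_1^*\|$, so $w_0^2\,\ind\{w_0<0\}\le\tau_1 s\,\|\btheta_1(s)-\btheta_1^*\|^2$. Taking expectations — the remaining randomness is that of $\btheta_1(s)$, whose law is $\widehat{\mu}_1^{\text{SA}}(s)$ by construction, and which (by the standard decoupling argument, cf.\ \citet{agrawal2017near}) does not depend on the round at which arm~$1$ is pulled the $s$-th time nor on the event $\{A_t=1\}$ — gives
\begin{align*}
  \mathbb E\!\left[\frac{1}{\mathcal G_{1,s}}\right]\ \le\ 2+4\,\mathbb E_{\btheta\sim\widehat{\mu}_1^{\text{SA}}(s)}\!\left[e^{\tau_1 s\,\|\btheta-\btheta_1^*\|^2}\right].
\end{align*}

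It remains to show the exponential moment is an absolute constant. Combining \Cref{lem:TS-SA_convergence} with \Cref{lem:posterior_concentration_continuous_converged} through $\mathcal W_p(\widehat{\mu}_1^{\text{SA}}(s),\delta_{\btheta_1^*})\le\mathcal W_p(\widehat{\mu}_1^{\text{SA}}(s),\mu_1^{\text{SA}})+\mathcal W_p(\mu_1^{\text{SA}},\delta_{\btheta_1^*})$ and using $1\le s\le T$ to absorb every $1/T$ factor into $1/s$, one obtains $\mathbb E_{\btheta\sim\widehat{\mu}_1^{\text{SA}}(s)}[\|\btheta-\btheta_1^*\|^p]^{1/p}\le\sqrt{\tfrac{d_1}{s}(2b_1+b_2 p)}$ for all $p\ge1$, i.e.\ $\mathbb E[\|\btheta-\btheta_1^*\|^{2k}]\le(\tfrac{d_1}{s})^k(2b_1+2b_2 k)^k$ for $k\ge1$, with $b_1,b_2$ as in the statement ($b_2$ is mildly inflated by retaining the $1/\sqrt{\tau_1}$ term of \Cref{lem:posterior_concentration}, which only tightens the hypothesis on $\tau_1$). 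Expanding the exponential and using $(a+b)^k\le(2a)^k+(2b)^k$ together with $k^k/k!\le e^k$,
\begin{align*}
  \mathbb E_{\btheta\sim\widehat{\mu}_1^{\text{SA}}(s)}\!\left[e^{\tau_1 s\|\btheta-\btheta_1^*\|^2}\right]\ \le\ e^{4 b_1\tau_1 d_1}\ +\ \frac{4e\, b_2\tau_1 d_1}{1-4e\, b_2\tau_1 d_1}.
\end{align*}
The assumption $\tau_1\le\tfrac{1}{16 d_1(b_1+b_2)}$ forces $b_1\tau_1 d_1\le\tfrac1{16}$ and $b_2\tau_1 d_1\le\tfrac1{16}$, so the right-hand side is at most $e^{1/4}+\tfrac{e}{4-e}<4$, hence $\mathbb E[1/\mathcal G_{1,s}]\le 2+4\cdot4=18\le 30$, uniformly in $s\in[T]$.

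The main obstacle is this last step: promoting the polynomial moment bounds — which by themselves only certify sub-Gaussian tails — to a bound on $\mathbb E[e^{\tau_1 s\|\btheta-\btheta_1^*\|^2}]$ that is a universal constant. This succeeds only because the ``variance proxy'' of $\|\btheta-\btheta_1^*\|^2$ scales like $d_1/s$, so that $\tau_1 s$ times it is $O(\tau_1 d_1)$ and stays bounded precisely under $\tau_1\lesssim 1/(d_1(b_1+b_2))$; this is where the $d_1$ and the exact form of $b_1,b_2$ in the threshold enter, and it is why one must first collapse all the $O(1/T)$ error terms of \Cref{lem:TS-SA_convergence}--\Cref{lem:posterior_concentration_continuous_converged} into $O(1/s)$ to make the final bound uniform over $s$. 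A secondary subtlety is the decoupling step that legitimizes replacing the conditional-probability object $\mathcal G_{1,s}$ by a clean expectation over $\widehat{\mu}_1^{\text{SA}}(s)$.
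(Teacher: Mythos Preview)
Your proposal is correct and follows essentially the same route as the paper: reduce $\mathcal G_{1,s}$ to a one-dimensional Gaussian tail via the conditional law $\btheta_{1,t}\sim\mathcal N(\btheta_1(s),\tfrac{1}{\tau_1 s}\Ib)$, bound the threshold by $\sqrt{\tau_1 s}\,\|\btheta_1(s)-\btheta_1^*\|$ using Cauchy--Schwarz, and then control $\mathbb E[e^{\tau_1 s\|\btheta_1(s)-\btheta_1^*\|^2}]$ from the moment bounds of \Cref{lem:TS-SA_convergence,lem:posterior_concentration_continuous_converged} under $\tau_1\le\tfrac{1}{16d_1(b_1+b_2)}$. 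The only cosmetic differences are that the paper uses the Mills-ratio lower bound with a case split at $\ell\gtrless B_1/\sqrt{\tau_1 s}$ (and a Cauchy--Schwarz on the cross term $\|\btheta_1(s)-\btheta_1^*\|e^{\cdots}$) whereas you use the cleaner single inequality $\Phi(-x)\ge\tfrac14 e^{-x^2}$; consequently the paper lands at $30$ while your arithmetic gives $18$.
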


\begin{proof}
Based on the definition, we have $\mathcal{G}_{1,s} = \mathbb{P}\big(\hat{X}_{1,t}(s) > \bar{X}_1-\epsilon|\mathcal F_{t-1}\big)$ where $\hat{X}_{1,t}(s) = \bphi_1^\top \btheta_{1,t}$ satisfying $A_t = 1$ and $s=\mathcal{T}_1(t) = n_1$. From \Cref{alg:TS_SA}, we know that
\begin{align*}
  \btheta_{1, t} \sim \mathcal{N}\bigg(\btheta_1(n_1), \frac{1}{\tau_1 s} \Ib \bigg).
\end{align*}
Thus we have 
\begin{align*}
  \mathcal{G}_{1,s} &= \mathbb{P}\Big(\bphi_1^\top \btheta_{1,t} > \bphi_1^\top \btheta_1^*-\epsilon\Big) \\
  &= \mathbb{P}\Big(\bphi_1^\top (\btheta_{1,t} - \btheta_1(n_1))> \bphi_1^\top (\btheta_1^* - \btheta_1(n_1)) -\epsilon\Big) \\
  &\geq \PP\Big(Z > \underbrace{B_a\|\btheta_1^* - \btheta_1(n_1)\|}_{:=\ell} \Big),
\end{align*}
where $Z = \bphi_1^\top (\btheta_{1,t} - \btheta_1(n_1)) \sim \mathcal{N}\big(0,\frac{B_a^2}{\tau_1 s}\big)$, we define $\sigma^2 = \frac{B_a^2}{\tau_1 s}$, then we have
\begin{align*}
  \mathcal{G}_{1, s} \geq \sqrt{\frac{1}{2 \pi}} \begin{cases}\frac{\sigma \ell}{\ell^2+\sigma^2} e^{-\frac{\ell^2}{2 \sigma^2}} & : \ell>\frac{B_a}{\sqrt{\tau_1 s}} \\ 0.34 & : \ell \leq \frac{B_a}{\sqrt{\tau_1 s}}\end{cases}
\end{align*}
Thus we have
\begin{align*}
  \frac{1}{\mathcal{G}_{1, s}} \leq \sqrt{2 \pi} \begin{cases}\left(\frac{\ell}{\sigma}+1\right) e^{\frac{\ell^2}{2 \sigma^2}} & : \ell>\frac{B_a}{\sqrt{\tau_1 s}} \\ 3 & : \ell \leq \frac{B_a}{\sqrt{\tau_1 s}}\end{cases}
\end{align*}
By taking the expectation on both sides, we have
\begin{align*}
   \EE\bigg[\frac{1}{\mathcal{G}_{1, s}}\bigg] &\leq 3\sqrt{2\pi} + \sqrt{2\pi} \EE \Big[\big(\sqrt{s\tau_1}\|\btheta_1^* - \btheta_1(n_1)\| +1 \big) e^{\frac{\tau_1 s}{2}\|\btheta_1^* - \btheta_1(n_1)\|^2}\Big] \\
   &\leq 3\sqrt{2\pi} + \sqrt{2 \pi s \tau_1} \sqrt{\EE \big[\|\btheta_1^* - \btheta_1(n_1)\|^2\big]} \sqrt{\EE \big[e^{\tau_1 s\|\btheta_1^* - \btheta_1(n_1)\|^2}\big]} + \sqrt{2 \pi} \EE\Big[e^{\frac{\tau_1 s}{2}\|\btheta_1^* - \btheta_1(n_1)\|^2}\Big].
\end{align*}
Recall from the analysis in \Cref{lem:posterior_concentration}, based on \eqref{equ:posterior_concentration_subgaussian_tail}, we have
\begin{align*}
    \EE_{\btheta_{1,t} \sim \widehat{\mu}_1^{\text{SA}}[\tau_1]}\big[\|\btheta_{1,t} - \btheta_1^*\|^p\big]^{\frac{1}{p}} \leq \sqrt{\frac{d_1}{n_1}(b_1 + b_2 p)},
\end{align*}
where $b_1, b_2$ is defined as follows
\begin{align*}
    b_1 = \bigg(\sqrt{\frac{8}{m_1}} + \frac{W}{\sqrt{d_1}}\bigg)^2, \qquad b_2 = 2\Bigg(\frac{1}{\sqrt{\tau_1}}+\frac{32 \sqrt{8} L_1}{m_1^{3 / 2}}+\frac{8 L_1}{m_1 \sqrt{\nu_1}}+\sqrt{\frac{2 L_1^2}{m_1^2 \nu_1}+\frac{64}{m_1 d_1}}\Bigg)^2.
\end{align*}
Then we can directly have 
\begin{align*}
    \EE_{\btheta_{1,t} \sim \widehat{\mu}_1^{\text{SA}}[\tau_1]}\big[\|\btheta_{1,t} - \btheta_1^*\|^{2p}\big] \leq \bigg(\frac{d_1}{s}(b_1 + 2 b_2 p)\bigg)^p.
\end{align*}
Then we can have the expansion as follows
\begin{align*}
    \EE \Big[e^{\tau_1 s\|\btheta_1^* - \btheta_1(n_1)\|^2}\Big] &= 1 + \sum_{i=1}^\infty \EE \bigg[\frac{(\tau_1 s)^i \|\btheta_1^* - \btheta_1(n_1)\|^{2i}}{i!}\bigg] \\
    &\leq 1 + \sum_{i=1}^\infty \frac{1}{i!} \big(d_1 b_1 \tau_1 + 2 d_1 b_2 \tau_1 i\big)^i \\
    &\leq 1 + \frac{1}{2} \sum_{i=1}^\infty \frac{1}{i!} \big(2 d_1 b_1 \tau_1\big)^i + \frac{1}{2}\sum_{i=1}^\infty \frac{1}{i!} \big(4 d_1 b_2 \tau_1 i\big)^i \\
    &\leq \frac{1}{2} e^{2 d_1 b_1 \tau_1} + \frac{1}{2} \sum_{i=1}^\infty \big(4 d_1 b_2 \tau_1 e\big)^i \\
    &\leq e^{2 d_1 b_1 \tau_1} + 2.
\end{align*}
where the second inequality holds because $(x+y)^i \leq 2^{i-1}\left(x^i+y^i\right)$ for $i \geq 1$, third inequality holds because $i!\geq(i / e)^i$ and the last inequality holds when we choose $\tau_1 \leq \frac{1}{16 d_1 (b_1 + b_2)}$.

Then we can further bound $\EE\big[\frac{1}{\mathcal{G}_{1, s}}\big]$ as follows
\begin{align*}
   \EE\bigg[\frac{1}{\mathcal{G}_{1, s}}\bigg] &\leq 3\sqrt{2\pi} + \sqrt{2 \pi s \tau_1} \sqrt{\EE \big[\|\btheta_1^* - \btheta_1(n_1)\|^2\big]} \sqrt{\EE \big[e^{\tau_1 s\|\btheta_1^* - \btheta_1(n_1)\|^2}\big]} + \sqrt{2 \pi} \EE\Big[e^{\frac{\tau_1 s}{2}\|\btheta_1^* - \btheta_1(n_1)\|^2}\Big] \\
   &\leq 3\sqrt{2\pi} + \sqrt{2 \pi s \tau_1} \sqrt{\frac{d_1}{s}(b_1 + 2b_2)} \sqrt{e^{2 d_1 b_1 \tau_1} + 2} + \sqrt{2\pi} \big(e^{d_1 b_1 \tau_1} + 2\big) \\
   &\leq 3\sqrt{2\pi} + \sqrt{2\pi} \big(e^{d_1 b_1 \tau_1} + 2\big) \sqrt{\frac{d_1 b_1 + 2 d_1 b_2}{16 d_1(b_1 + b_2)}} + \sqrt{2\pi} \big(e^{d_1 b_1 \tau_1} + 2\big) \\
   &\leq 3\sqrt{2\pi}\big(e^{d_1 b_1 \tau_1} + 3\big) \\
   &\leq 30.
\end{align*}
This completes the proof.
\end{proof}

The following lemma further bounds the results from \Cref{lem:regret-T1} and \Cref{lem:regret-T2}.

\begin{lemma}
\label{lem:bound_two_terms}
Suppose \Crefrange{assum:strongly_concave_theta}{assum:joint_lipschitz} hold. Following \Cref{alg:TS_SA}, we can bound the following two terms as follows
\begin{align*}
  \mathbb{E}\left[\sum_{s=1}^{T} \left(\frac{1}{\mathcal G_{1, s}} - 1\right)\Big| Z_1(T)\right] &\leq \bigg(\frac{4B_a^2 C_1}{\Delta_a^2} \log \frac{8B_a^2 C_1}{\Delta_a^2} + 1\bigg) C_2 + 1, \\
  \mathbb{E}\left[\sum_{s=1}^{T} \mathbb{I}\left(\mathcal G_{a, s}>\frac{1}{T}\right)\Big| Z_a(T)\right] &\leq \frac{4B_a^2 C_1}{\Delta_a^2}\log T. 
\end{align*}  
where the constant $C_1, C_2$ defined as follows
\begin{align*}
    C_1 &= 2 e^2 d_a\Bigg(\bigg(\sqrt{\frac{8}{m_a}} + \frac{W}{\sqrt{d_a}}\bigg)^2 + \Bigg(\frac{1}{\sqrt{\tau_a}}+\frac{32 \sqrt{8} L_a}{m_a^{3 / 2}}+\frac{8 L_a}{m_a \sqrt{\nu_a}} +\sqrt{\frac{2 L_a^2}{m_a^2 \nu_a}+\frac{64}{m_a d_a}}\Bigg)^2 \log \frac{1}{\delta}\Bigg), \\
    C_2 &= 30.
\end{align*}
\end{lemma}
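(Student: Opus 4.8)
The plan is to fix $\epsilon = \Delta_a/2$, so that the threshold in $\mathcal E_a(t) = \{\hat X_{a,t}\ge\bar X_1-\Delta_a/2\}$ sits exactly halfway between $\bar X_a$ and $\bar X_1$, and then reduce both sums to one-dimensional Gaussian (anti-)concentration statements. Given $\mathcal F_{t-1}$ with $s=\cT_a(t)$, we have $\hat X_{a,t}=\bphi_a^\top\btheta_{a,t}=\bphi_a^\top\btheta_a(s)+Z$ with $Z\sim\mathcal N(0,B_a^2/(\tau_a s))$, so $\mathcal G_{a,s}$ and $\mathcal G_{1,s}$ are explicit Gaussian tail probabilities whose behaviour is governed by how close the posterior center $\bphi_a^\top\btheta_a(s)$ is to the true mean. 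First I would record a sub-Gaussian tail bound for $\|\btheta_a(s)-\btheta_a^*\|$ directly from terms (ii)+(iii) of \eqref{equ:posterior_concentration_bound} (i.e.\ \Cref{lem:TS-SA_convergence} plus \Cref{lem:posterior_concentration_continuous_converged}, dropping the posterior-scaling term), which is the analogue of \Cref{lem:posterior_concentration} for the center rather than the sample.

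For $\mathbb E\big[\sum_{s=1}^T(1/\mathcal G_{1,s}-1)\mid Z_1(T)\big]$ I would split the sum at $s_0:=\big\lceil\tfrac{4B_a^2C_1}{\Delta_a^2}\log\tfrac{8B_a^2C_1}{\Delta_a^2}\big\rceil$. For $s\le s_0$, bound $1/\mathcal G_{1,s}-1\le 1/\mathcal G_{1,s}$ and invoke the anti-concentration estimate $\mathbb E[1/\mathcal G_{1,s}]\le C_2=30$ of \Cref{lem:anti_concentration}; conditioning on $Z_1(T)$ only multiplies this by $1/\mathbb P(Z_1(T))\le 1/(1-T\delta)$, which is harmless for $\delta=1/T^2$, so this block contributes at most $s_0C_2$. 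For $s>s_0$, I would use the center concentration to show that, outside an event of probability $\le\delta$, $\bphi_1^\top\btheta_1(s)$ lies within $\epsilon/2$ of $\bar X_1$, hence $\mathcal G_{1,s}\ge 1/2$ and $1/\mathcal G_{1,s}-1$ decays exponentially in $s$; summing the exponential tail plus the $T\delta$ error over $s>s_0$ gives at most $1$. Combining the two blocks yields the stated bound, matching the claim up to the precise constants absorbed into $C_1$.

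For $\mathbb E\big[\sum_{s=1}^T\mathbb I(\mathcal G_{a,s}>1/T)\mid Z_a(T)\big]$ I would show the indicator vanishes once $s$ is large. For a suboptimal arm, $\mathcal E_a(t)=\{\bphi_a^\top\btheta_{a,t}>\bar X_a+\Delta_a/2\}$; under $Z_a(T)$ (hence $Z_{a,t}$) the sample obeys $\|\btheta_{a,t}-\btheta_a^*\|_2\le\sqrt{C_1\log(1/\delta)/s}$, and combining this with the center bound and the Gaussian tail of $Z$ forces $\mathcal G_{a,s}\le 1/T$ once $s\ge\tfrac{4B_a^2C_1}{\Delta_a^2}\log T$ (using $\delta=1/T^2$). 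For such $s$ the indicator is $0$, so the sum is at most $\tfrac{4B_a^2C_1}{\Delta_a^2}\log T$.

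The hard part will be the bookkeeping around the conditioning: the good events $Z_a(T),Z_1(T)$ depend on the sampled parameters over \emph{all} rounds and are therefore not $\mathcal F_{t-1}$-measurable, whereas $\mathcal G_{a,s},\mathcal G_{1,s}$ are conditional-on-$\mathcal F_{t-1}$ probabilities depending on the posterior center $\btheta_a(s)$. I would handle this by using $Z_a(T)\subseteq\bigcap_s Z_{a,s}$ with a union bound over the $\le T$ possible values of $s$, so that the per-round $\delta$-failure probabilities aggregate to the harmless $T\delta=1/T$, and by passing through the \emph{unconditional} sub-Gaussian moment bounds for $\btheta_a(s)$ and $\btheta_{a,t}$ before intersecting with the good event. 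Everything else is the routine split-the-sum computation in the style of \citet{agrawal2017near}.
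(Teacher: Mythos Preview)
Your plan is correct and follows the same skeleton as the paper: set $\epsilon=\Delta_a/2$, split the first sum at exactly $s_0=\big\lceil\tfrac{4B_a^2C_1}{\Delta_a^2}\log\tfrac{8B_a^2C_1}{\Delta_a^2}\big\rceil$, invoke \Cref{lem:anti_concentration} for $s\le s_0$, and for the second sum show the indicator vanishes once $s\ge\tfrac{4B_a^2C_1}{\Delta_a^2}\log T$.

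The one real difference is how the large-$s$ regime is handled. The paper applies \Cref{lem:posterior_concentration} directly to the sampled parameter $\btheta_{a,t}$ and simply writes
\[
\mathcal G_{1,s}\ge 1-\exp\!\Big(-\tfrac{s\Delta_a^2}{4B_a^2C_1}\Big),\qquad
\mathcal G_{a,s}\le\exp\!\Big(-\tfrac{s\Delta_a^2}{4B_a^2C_1}\Big),
\]
and then sums $1/(e^{\alpha s}-1)$ via an integral. This is terse: \Cref{lem:posterior_concentration} is an \emph{unconditional} tail bound, whereas $\mathcal G_{a,s}$ is $\mathcal F_{t-1}$-measurable, so the displayed inequalities do not literally hold pointwise. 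Your decomposition into a center-concentration bound for $\btheta_a(s)$ (terms (ii)$+$(iii) of \eqref{equ:posterior_concentration_bound}) plus the Gaussian scaling noise is precisely what is needed to turn that informal step into an honest pointwise statement: the center is $\mathcal F_{t-1}$-measurable, so a high-probability bound on $\|\btheta_a(s)-\btheta_a^*\|$ yields a genuine bound on $\mathcal G_{a,s}$ on that event. The conditioning issue you flag as ``the hard part'' is exactly the point the paper glosses over and your route closes. What your approach costs is an extra lemma (the center-only analogue of \Cref{lem:posterior_concentration}); what it buys is rigor at the step the paper leaves implicit.

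One small wording issue: in your $s>s_0$ argument you write ``$\mathcal G_{1,s}\ge 1/2$ and $1/\mathcal G_{1,s}-1$ decays exponentially.'' The bound $\ge 1/2$ alone only gives $1/\mathcal G_{1,s}-1\le 1$; for exponential decay you need $\mathcal G_{1,s}\ge 1-e^{-cs}$, which is what your center-plus-Gaussian argument actually delivers (since the scaling variance $B_1^2/(\tau_1 s)$ shrinks with $s$).
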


\begin{proof}
Based on the definition of $\mathcal{G}_{1,s}$, we have
\begin{align*}
  \mathcal{G}_{1,s} &= \mathbb{P}\Bigl(\hat{X}_{1,t}>\bar{X}_1-\epsilon \big| \mathcal{F}_{t-1}\Bigr) \\
  &= 1 - \mathbb{P}\Bigl(\hat{X}_{1,t}-\bar{X}_1 \leq -\epsilon \big| \mathcal{F}_{t-1}\Bigr) \\
  &\geq 1 - \mathbb{P}\Bigl(|\hat{X}_{1,t}-\bar{X}_1| \geq \epsilon \big| \mathcal{F}_{t-1}\Bigr) \\
  &\geq 1- \PP \bigg(\big\|\btheta_{1,t} - \btheta_1^*\big\|_2 \geq \frac{\epsilon}{B_a} \big| \mathcal{F}_{t-1} \bigg).
\end{align*}
Similarly we can have
\begin{align*}
  \mathcal{G}_{a,s} &= \mathbb{P}\Bigl(\hat{X}_{a,t}>\bar{X}_1-\epsilon \big| \mathcal{F}_{t-1}\Bigr) \\
  &= \mathbb{P}\Bigl(\hat{X}_{a,t}-\bar{X}_a > \Delta_a-\epsilon \big| \mathcal{F}_{t-1}\Bigr) \\
  &\leq \mathbb{P}\Bigl(|\hat{X}_{a,t}-\bar{X}_a| > \Delta_a - \epsilon \big| \mathcal{F}_{t-1}\Bigr) \\
  &\leq \PP \bigg(\big\|\btheta_{a,t} - \btheta_a^*\big\|_2 \geq \frac{\Delta_a - \epsilon}{B_a} \Big| \mathcal{F}_{t-1} \bigg).
\end{align*}
By choosing $\epsilon = \frac{\Delta_a}{2}$, we have 
\begin{align*}
  \mathcal{G}_{1,s} &\geq 1- \PP \bigg(\big\|\btheta_{1,t} - \btheta_1^*\big\|_2 \geq \frac{\Delta_a}{2B_a} \Big| \mathcal{F}_{t-1} \bigg), \\
  \mathcal{G}_{a,s} &\leq \PP \bigg(\big\|\btheta_{a,t} - \btheta_a^*\big\|_2 \geq \frac{\Delta_a}{2B_a} \Big| \mathcal{F}_{t-1} \bigg).
\end{align*}
Based on \Cref{lem:posterior_concentration}, we have
\begin{align*}
  \mathbb{P}_{\btheta_{a,t} \sim \widehat{\mu}_a^{\text{SA}}[\tau_a]} \Bigg(\big\|\btheta_{a,t} - \btheta_a^*\big\|_2 \geq \sqrt{\frac{C_1}{\cT_a(t)}\log(1/\delta)} \Bigg)\leq \delta,
\end{align*}
Then we have
\begin{align*}
  \mathcal{G}_{1,s} &\geq 1- \exp \bigg(- \frac{s\Delta_a^2}{4B_a^2 C_1}\bigg), \\
  \mathcal{G}_{a,s} &\leq \exp \bigg(- \frac{s\Delta_a^2}{4B_a^2 C_1}\bigg).
\end{align*}
Then based on \Cref{lem:anti_concentration} and \Cref{lem:posterior_concentration}, we can bound the first term as follows
\begin{align*}
  \mathbb{E}\left[\sum_{s=1}^{T} \left(\frac{1}{\mathcal G_{1, s}} - 1\right)\Big| Z_1(T)\right] & \leq \mathbb{E}\left[\sum_{s=1}^{\ell} \left(\frac{1}{\mathcal G_{1, s}} - 1\right)\Big| Z_1(T)\right] + \mathbb{E}\left[\sum_{s=\ell +1}^{T} \left(\frac{1}{\mathcal G_{1, s}} - 1\right)\Big| Z_1(T)\right] \\
  &\leq \ell C_2 + \sum_{s=\ell +1}^{T} \frac{1}{\exp \big(\frac{s\Delta_a^2}{4B_a^2 C_1}\big) - 1} \\
  &\leq \ell C_2 + \int_{s=\ell+1}^T \frac{ds}{\exp\big(\frac{\Delta_a^2}{4B_a^2 C_1}s\big)-1} \\
  &\leq \bigg(\frac{4B_a^2 C_1}{\Delta_a^2} \log \frac{8B_a^2 C_1}{\Delta_a^2} + 1\bigg) C_2 + 1,
\end{align*}
where the last inequality holds when we set $\ell = \big\lceil \frac{4B_a^2 C_1}{\Delta_a^2} \log \frac{8B_a^2 C_1}{\Delta_a^2} \big\rceil$.

To bound the second term, we have
\begin{align*}
  \mathbb{I}\left(\mathcal G_{a, s}>\frac{1}{T}\right) & \leq \mathbb{I}\bigg(\exp \bigg(- \frac{s\Delta_a^2}{4B_a^2 C_1}\bigg) > \frac{1}{T} \bigg) = 1,
\end{align*}
when $s < \frac{4B_a^2 C_1}{\Delta_a^2}\log T$. Then we have
\begin{align*}
  \mathbb{E}\left[\sum_{s=1}^{T} \mathbb{I}\left(\mathcal G_{a, s}>\frac{1}{T}\right)\Big| Z_a(T)\right] \leq \frac{4B_a^2 C_1}{\Delta_a^2}\log T.
\end{align*}
This completes the proof.
\end{proof}

Finally, we can obtain the following instance-dependent bound and near-optimal instance-independent bound.

\begin{theorem}[Regret bound]
\label{lem:regret_bound} 
Suppose \Crefrange{assum:strongly_concave_theta}{assum:joint_lipschitz} hold. With the hyper-parameters and
runtime as described in \Cref{lemma:discrete:sa}, \Cref{alg:TS_SA} satisfies the following bounds on expected cumulative regret after $T$ steps:
\begin{align*}
  \EE[\mathcal R(T)] \leq \sum_{a>1} \bigg( \frac{4B_a^2 C_1 C_2}{\Delta_a} \log \frac{8B_a^2 C_1}{\Delta_a^2} + \frac{4B_a^2 C_1}{\Delta_a}\log T + (C_2 + 4) \Delta_a \bigg). 
\end{align*}
where $C_1, C_2$ are problem-dependent constants and anti-concentration level that defined as follows
\begin{align*}
    C_1 &= 2 e^2 d_a\Bigg(\bigg(\sqrt{\frac{8}{m_a}} + \frac{W}{\sqrt{d_a}}\bigg)^2 + \Bigg(\frac{1}{\sqrt{\tau_a}}+\frac{32 \sqrt{8} L_a}{m_a^{3 / 2}}+\frac{8 L_a}{m_a \sqrt{\nu_a}} +\sqrt{\frac{2 L_a^2}{m_a^2 \nu_a}+\frac{64}{m_a d_a}}\Bigg)^2 \log \frac{1}{\delta}\Bigg), \\
    C_2 &= 30.
\end{align*}
Additionally, we can further derive the following near-optimal instance-independent bound $\mathbb{E}[\mathcal R(T)] \leq \widetilde{\mathcal{O}}(\sqrt{KT})$.
\end{theorem}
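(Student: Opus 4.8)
The plan is to assemble the final regret bound from the building blocks established above, in the order in which they were stated. First I would invoke the regret decomposition of \Cref{lem:regret_decompose}, which reduces the problem to bounding $\EE[\mathcal{T}_a(T)\mid Z_a(T)\cap Z_1(T)]$ for each sub-optimal arm $a>1$, up to the additive $2\Delta_a$ per arm. I would then split this conditional expectation of pulls along the event $\mathcal{E}_a(t)=\{\hat X_{a,t}\ge \bar X_1-\epsilon\}$ and its complement, producing exactly the two terms $T_1$ and $T_2$ from the discussion preceding \Cref{lem:regret-T1}.

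Next I would apply \Cref{lem:regret-T1} to control $T_1$ by $\EE[\sum_{s=1}^{T}(1/\mathcal{G}_{1,s}-1)\mid Z_1(T)]$ and \Cref{lem:regret-T2} to control $T_2$ by $1+\EE[\sum_{s=1}^{T}\mathbb{I}(\mathcal{G}_{a,s}>1/T)\mid Z_a(T)]$, and then feed both into \Cref{lem:bound_two_terms}. The quantitative work inside that last step is where the concentration results enter: choosing $\epsilon=\Delta_a/2$ and using the high-probability bound of \Cref{lem:posterior_concentration} turns the posterior tails into $\mathcal{G}_{a,s}\le \exp(-s\Delta_a^2/(4B_a^2 C_1))$ together with $\mathcal{G}_{1,s}\ge 1-\exp(-s\Delta_a^2/(4B_a^2 C_1))$; summing the resulting geometric-type series after splitting at $s=\lceil 4B_a^2 C_1\Delta_a^{-2}\log(8B_a^2 C_1/\Delta_a^2)\rceil$ and invoking the anti-concentration bound $\EE[1/\mathcal{G}_{1,s}]\le 30=C_2$ from \Cref{lem:anti_concentration} for the initial block yields the $T_1$ estimate, while the $T_2$ indicator vanishes once $s\ge 4B_a^2 C_1\Delta_a^{-2}\log T$. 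Multiplying the combined bound on $T_1+T_2$ by $\Delta_a$, collecting the additive constants ($+1$, $+1$, $+2$ from the decompositions) into $(C_2+4)\Delta_a$, and summing over $a>1$ gives the stated instance-dependent bound.

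For the instance-independent bound I would run the standard gap-thresholding argument: fix $\Delta>0$; arms with $\Delta_a\le\Delta$ contribute at most $T\Delta$ in aggregate since their total pull count is bounded by $T$, and arms with $\Delta_a>\Delta$ contribute at most the instance-dependent term, which scales as $\sum_{a>1}\widetilde{\mathcal{O}}(B_a^2 C_1/\Delta_a)=\widetilde{\mathcal{O}}(K/\Delta)$. Balancing $T\Delta$ against $K\log T/\Delta$, i.e. taking $\Delta\asymp\sqrt{K\log T/T}$, gives $\EE[\mathcal{R}(T)]\le\widetilde{\mathcal{O}}(\sqrt{KT})$.

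The main obstacle is bookkeeping rather than a new idea: one must check that the conditionings on the good events $Z_a(T)$ and $Z_1(T)$ stay compatible when chaining \Cref{lem:regret-T1}, \Cref{lem:regret-T2}, and \Cref{lem:bound_two_terms} (the $T_1$ bound is stated conditionally on $Z_1(T)$ while $T_2$ is conditioned on $Z_a(T)$), and that the anti-concentration bound of \Cref{lem:anti_concentration} genuinely holds uniformly over all $s\in[T]$ under the single temperature choice $\tau_1\le 1/(16d_1(b_1+b_2))$; once that is in place, the consolidation of all problem-dependent quantities into the constants $C_1,C_2$ is routine.
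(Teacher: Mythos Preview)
Your proposal is correct and follows essentially the same route as the paper: chain \Cref{lem:regret_decompose}, \Cref{lem:regret-T1}, \Cref{lem:regret-T2}, and \Cref{lem:bound_two_terms} to obtain the instance-dependent bound, and then apply a gap-thresholding argument (the paper uses threshold $\sqrt{K/T}$ rather than your $\sqrt{K\log T/T}$, but both yield $\widetilde{\mathcal{O}}(\sqrt{KT})$). Your accounting of the additive constants $(+1,+1,+2)\mapsto(C_2+4)\Delta_a$ matches the paper exactly.
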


\begin{proof}
Based on \Cref{lem:bound_two_terms,lem:regret-T1,lem:regret-T2}, we have
\begin{align}
\label{equ:problem_dependent_bound}
  \mathbb{E}[\mathcal R(T)] &\leq \sum_{a>1} \Delta_a \mathbb{E}\left[\mathcal T_a(T) \mid Z_a(T) \cap Z_1(T)\right] + 2\Delta_a \notag \\
  &\leq \sum_{a>1} \Bigg(\mathbb{E}\left[\sum_{s=1}^{T} \left(\frac{1}{\mathcal G_{1, s}} - 1\right) \Big| Z_1(T)\right] + \mathbb{E}\left[\sum_{s=1}^{T} \mathbb{I}\left(\mathcal G_{a, s}>\frac{1}{T}\right)\Big| Z_a(T)\right] + 3\Bigg) \cdot \Delta_a \notag \\
  &\leq \sum_{a>1} \bigg( \bigg(\frac{4B_a^2 C_1}{\Delta_a^2} \log \frac{8B_a^2 C_1}{\Delta_a^2} + 1\bigg) C_2 + \frac{4B_a^2 C_1}{\Delta_a^2}\log T + 4 \bigg)\Delta_a \notag \\
  &= \sum_{a>1} \bigg( \frac{4B_a^2 C_1}{\Delta_a} \log \bigg(\frac{8B_a^2 C_1}{\Delta_a^2}\bigg) C_2 + \frac{4B_a^2 C_1}{\Delta_a}\log T + (C_2 + 4) \Delta_a \bigg).
\end{align}
Then we will derive the instance-independent bound from \eqref{equ:problem_dependent_bound}. Note that 
$\Delta_a \leq C_3$ where $C_3$ is a positive constant.
\begin{itemize}[leftmargin=15pt]
    \item When $\sqrt{\frac{K}{T}} \leq \Delta_a \leq C_3$, we can further bound \eqref{equ:problem_dependent_bound} as follows
    \begin{align}
    \label{equ:problem_independent_bound_part_1}
        \mathbb{E}[\mathcal R(T)] &\leq \sum_{a>1} \bigg( \frac{4B_a^2 C_1}{\sqrt{K}} \log \bigg(\frac{8B_a^2 C_1 T}{K}\bigg) C_2 \sqrt{T} + \frac{4B_a^2 C_1 \sqrt{T}}{\sqrt{K}}\log T + (C_2 + 4) C_3 \bigg) \notag \\
        &\leq 4B_a^2 C_1 C_2 \log \bigg(\frac{8B_a^2 C_1 T}{K}\bigg)  \sqrt{KT} + 4B_a^2 C_1 \sqrt{KT}\log T + (C_2 + 4) C_3 K.
    \end{align}

    \item When $0< \Delta_a < \sqrt{\frac{K}{T}}$, we can directly upper bound the regret based on its definition,
    \begin{align}
    \label{equ:problem_independent_bound_part_2}
        \mathbb{E}[\mathcal R(T)] = \sum_{a>1} \Delta_a \mathbb{E}[\mathcal T_a(T)] \leq \sqrt{\frac{K}{T}} \cdot \mathbb{E}\bigg[\sum_{a>1} \mathcal T_a(T)\bigg] \leq \sqrt{\frac{K}{T}} \times T \leq \sqrt{KT}.
    \end{align}
\end{itemize}
Finally, based on \eqref{equ:problem_independent_bound_part_1} and \eqref{equ:problem_independent_bound_part_2}, we have proven the instance-independent bound
\begin{align*}
    \mathbb{E}[\mathcal R(T)] \leq \widetilde{\mathcal{O}}(\sqrt{KT}).
\end{align*}
This indicates that our regret bound is near-optimal.
\end{proof}

\section{Auxiliary Lemmas}
\label{appendix:sec:auxiliary}

\begin{lemma}[Gaussian Concentration, Proposition 2.18. in \citet{ledoux2001concentration}]
\label{proposition:concentration}
Let $\mu$ be a probability measure on $\mathbb{R}^n$ with density $d\mu = \mathrm{e}^{-U(\btheta)} d\btheta$, where $U:\mathbb{R}^n \to \mathbb{R}$ is a smooth potential function satisfying the uniform convexity condition: $\nabla^2 U(\btheta) \succeq c \, \Ib_d$ for some $c > 0$ and all $\btheta \in \mathbb{R}^d$. Then for any bounded 1-Lipschitz function $F : \mathbb{R}^d \to \mathbb{R}$ and for $r \geq 0$, the following Gaussian concentration inequality holds:
\begin{align*}
    \mu\bigg(\bigg\{\btheta \in \mathbb{R}^n : F(\btheta) \geq \int F \, d\mu + r\bigg\}\bigg) \leq \exp\bigg(-\frac{c r^2}{2}\bigg).
\end{align*}
\end{lemma}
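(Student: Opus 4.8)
The plan is to prove this via the classical route: first upgrade the curvature hypothesis $\nabla^2 U \succeq c\,\Ib_d$ to a logarithmic Sobolev inequality (the Bakry--Émery criterion), and then convert that functional inequality into the stated Gaussian tail bound by the Herbst argument followed by an exponential Markov inequality. Concretely, the first step is to record that any probability measure $\mu$ with $d\mu = e^{-U}\,d\btheta$ and $\nabla^2 U \succeq c\,\Ib_d$ satisfies, for all sufficiently regular $g$,
\[
  \mathrm{Ent}_\mu(g^2) \;:=\; \int g^2\log g^2\,d\mu - \Big(\int g^2\,d\mu\Big)\log\!\Big(\int g^2\,d\mu\Big) \;\le\; \frac{2}{c}\int \|\nabla g\|^2\,d\mu .
\]
I would either cite this as a standard fact or sketch the one-paragraph semigroup proof: for the Langevin semigroup $P_t$ generated by $\mathcal{L} = \Delta - \nabla U\cdot\nabla$, the curvature lower bound yields the commutation estimate $\|\nabla P_t h\| \le e^{-ct}\,P_t\|\nabla h\|$, and differentiating the interpolation $s\mapsto P_s\big(\psi(P_{t-s}g^2)\big)$ with $\psi(u)=u\log u$ gives the LSI with constant $c$ after letting $t\to\infty$.

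The second step is the Herbst argument. We may assume $F$ is smooth (mollify) and, after subtracting a constant, that $\int F\,d\mu = 0$; boundedness of $F$ guarantees $e^{\lambda F}\in L^1(\mu)$ for every $\lambda$ and legitimizes differentiation under the integral. Set $H(\lambda) = \int e^{\lambda F}\,d\mu$ and $\Lambda(\lambda) = \lambda^{-1}\log H(\lambda)$ for $\lambda > 0$; since $H(0)=1$ and $H'(0)=\int F\,d\mu = 0$, one has $\Lambda(0^+)=0$. A direct computation gives $\Lambda'(\lambda) = \lambda^{-2}\,\mathrm{Ent}_\mu(e^{\lambda F})/H(\lambda)$, and applying the LSI with $g = e^{\lambda F/2}$ together with $\|\nabla F\|\le 1$ yields
\[
  \mathrm{Ent}_\mu\!\big(e^{\lambda F}\big) \;\le\; \frac{2}{c}\int \big\|\nabla e^{\lambda F/2}\big\|^2 d\mu \;=\; \frac{\lambda^2}{2c}\int \|\nabla F\|^2 e^{\lambda F}\,d\mu \;\le\; \frac{\lambda^2}{2c}\,H(\lambda),
\]
hence $\Lambda'(\lambda) \le \tfrac{1}{2c}$ for all $\lambda>0$. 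Integrating from $0$ gives $\log H(\lambda) \le \lambda^2/(2c)$, i.e. $\int e^{\lambda F}\,d\mu \le e^{\lambda^2/(2c)}$.

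The third step is routine: by Markov's inequality, for every $\lambda>0$,
\[
  \mu\big(\{F \ge r\}\big) \;\le\; e^{-\lambda r}\int e^{\lambda F}\,d\mu \;\le\; \exp\!\Big(\frac{\lambda^2}{2c} - \lambda r\Big),
\]
and choosing $\lambda = c r$ gives $\mu(\{F\ge r\}) \le e^{-c r^2/2}$; restoring the centering constant $\int F\,d\mu$ yields exactly the stated inequality, and the boundedness hypothesis can be dropped afterwards by a truncation argument since uniformly log-concave measures already have Gaussian tails. The only substantive ingredient is the Bakry--Émery logarithmic Sobolev inequality, so I expect the main obstacle to be giving a clean, self-contained justification of that step, where one must be slightly careful about regularity of the semigroup and the $\Gamma_2$ computation; an alternative that avoids the LSI entirely is to invoke Caffarelli's contraction theorem to realize $\mu$ as the pushforward of the standard Gaussian under a $1/\sqrt{c}$-Lipschitz map and transport the Gaussian concentration inequality directly, or equivalently to establish Talagrand's $T_2$ inequality with constant $c$ and apply Marton's coupling bound — but each of these merely relocates the same analytic core.
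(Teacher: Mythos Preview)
Your proof is correct and follows the classical Bakry--\'Emery/Herbst route, which is in fact how Ledoux proves Proposition~2.18 in the cited monograph. However, the paper itself does not supply a proof: this lemma appears in the appendix of auxiliary results and is simply imported from \citet{ledoux2001concentration} as a black-box concentration tool, used later to show that $\nabla_{\btheta}\log p(X\mid\btheta^*)$ is sub-Gaussian under the $\nu$-strong log-concavity assumption on the reward distribution. So there is no paper-side argument to compare against beyond the citation; your write-up is a faithful reconstruction of the source proof and would serve perfectly well if a self-contained justification were desired.
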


\begin{lemma}[Sub-Gaussianity of Random Vectors, Lemma 1 in \citet{jin2019short}]
\label{lemma:subgaussian_examples}

There exists an absolute constant $c > 0$ such that each of the following random vectors $\btheta \in \mathbb{R}^d$ is $n$-sub-Gaussian with parameter $c \cdot \sigma$, denoted $\btheta \in n\mathcal{SG}(c \cdot \sigma)$:
\begin{enumerate}[leftmargin=20pt]
    \item $\btheta$ is a bounded random vector that satisfies $\|\btheta\| \leq \sigma$ almost surely.
        
    \item $\btheta = \xi \mathbf{e}_1$, where $\xi \in \mathbb{R}$ is a $\sigma$-sub-Gaussian scalar random variable and $\mathbf{e}_1$ is the first standard basis vector in $\mathbb{R}^d$.
        
    \item $\btheta$ is a $(\sigma / \sqrt{d})$-sub-Gaussian vector in $\mathbb{R}^d$.
\end{enumerate}
\end{lemma}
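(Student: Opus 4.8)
The plan is to verify each of the three cases directly against the defining tail bound of norm-subGaussianity, namely that $\btheta \in n\mathcal{SG}(c\sigma)$ means $\mathbb{P}(\|\btheta - \mathbb{E}\btheta\| \geq t) \leq 2 e^{-t^2/(2c^2\sigma^2)}$ for all $t \geq 0$. Cases (1) and (2) are short reductions, while case (3) is the substantive one and carries the entire argument. It suffices to produce in each case a single universal constant $c$ for which the tail bound holds, and then take the maximum of the three as the claimed absolute constant.

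For case (1), since $\|\btheta\| \leq \sigma$ almost surely, Jensen's inequality gives $\|\mathbb{E}\btheta\| \leq \sigma$, so by the triangle inequality $\|\btheta - \mathbb{E}\btheta\| \leq 2\sigma$ almost surely. Hence the tail probability vanishes for $t > 2\sigma$, while for $t \leq 2\sigma$ it is at most $1$; choosing $c$ so that $2 e^{-(2\sigma)^2/(2c^2\sigma^2)} \geq 1$ (any $c \geq \sqrt{2/\ln 2}$) makes the bound hold on the whole range. For case (2), note that $\btheta - \mathbb{E}\btheta = (\xi - \mathbb{E}\xi)\mathbf{e}_1$, so $\|\btheta - \mathbb{E}\btheta\| = |\xi - \mathbb{E}\xi|$, and the scalar sub-Gaussian tail $\mathbb{P}(|\xi - \mathbb{E}\xi| \geq t) \leq 2 e^{-t^2/(2\sigma^2)}$ is exactly the required vector tail with $c = 1$.

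Case (3) is the main obstacle. Here $\btheta$ is $(\sigma/\sqrt{d})$-sub-Gaussian in the usual sense, meaning every fixed projection $\langle v, \btheta - \mathbb{E}\btheta\rangle$ with $\|v\| = 1$ is $(\sigma/\sqrt{d})$-sub-Gaussian. I would control $\|\btheta - \mathbb{E}\btheta\| = \sup_{\|v\|=1}\langle v, \btheta - \mathbb{E}\btheta\rangle$ by a discretization: fix a $\tfrac{1}{2}$-net $\mathcal{N}$ of the unit sphere $S^{d-1}$ with $|\mathcal{N}| \leq 6^d$, so that $\|\btheta - \mathbb{E}\btheta\| \leq 2\max_{v \in \mathcal{N}}\langle v, \btheta - \mathbb{E}\btheta\rangle$. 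A union bound over $\mathcal{N}$ together with the per-projection tail yields $\mathbb{P}(\|\btheta - \mathbb{E}\btheta\| \geq t) \leq 2\cdot 6^d\, e^{-t^2 d/(8\sigma^2)}$. The delicate point — and the reason the hypothesis scales as $\sigma/\sqrt{d}$ rather than $\sigma$ — is that the entropy factor $6^d = e^{d\ln 6}$ must be absorbed by the exponent, whose rate $t^2 d/(8\sigma^2)$ also carries a factor of $d$. For $t \geq t_0 := 4\sigma\sqrt{\ln 6}$ the exponent dominates the entropy and the bound collapses to $2 e^{-t^2/(16\sigma^2)}$ (using $d \geq 1$), while for $t < t_0$ the probability is trivially at most $1 \leq 2 e^{-t^2/(2c^2\sigma^2)}$ once $c$ is taken large enough that $t_0 \leq c\sigma\sqrt{2\ln 2}$. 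Choosing $c$ to satisfy both regimes (a fixed numerical value suffices) completes the case.

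Taking $c$ to be the largest of the three constants produced above gives a single absolute constant valid for all three examples. A clean alternative for case (3) would be to write $\|\btheta - \mathbb{E}\btheta\|^2 = \sum_i \langle \btheta - \mathbb{E}\btheta, \mathbf{e}_i\rangle^2$ as a sum of $d$ sub-exponential terms of mean $O(\sigma^2/d)$ and apply a Bernstein-type bound, but the net argument is the most self-contained and makes the role of the $1/\sqrt{d}$ scaling transparent, so I would present that one.
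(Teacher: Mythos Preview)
The paper does not prove this lemma; it is quoted verbatim as Lemma~1 of \citet{jin2019short} and placed in the auxiliary-lemmas appendix without argument. So there is no ``paper's own proof'' to compare against, and your proposal in fact supplies strictly more than the paper does.

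Your argument is correct in all three cases. Cases (1) and (2) are clean. For case (3), the $\tfrac12$-net reduction with $|\mathcal N|\le 6^d$, the factor-$2$ approximation $\|x\|\le 2\max_{v\in\mathcal N}\langle v,x\rangle$, and the absorption of the $e^{d\ln 6}$ entropy term into the exponent $t^2 d/(8\sigma^2)$ all go through as you describe; the split at $t_0=4\sigma\sqrt{\ln 6}$ and the final choice of a single numerical $c$ are fine. This is essentially the same proof that appears in \citet{jin2019short}, so even measured against the original source your approach is the standard one.
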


\begin{lemma}[Concentration under Strong Log-Concavity, Theorem 3.16 in \citet{wainwright2019high}]
\label{lemma:strong_logconcave_concentration}
Let $\mathbb{P}$ be a probability measure on $\mathbb{R}^d$ with density $\mathrm{e}^{-U(\btheta)}$ such that $U$ is $\gamma$-strongly convex for some $\gamma > 0$. Then, for any $L$-Lipschitz function $f : \mathbb{R}^d \to \mathbb{R}$ and any $t \geq 0$, we have
\begin{align*}
    \mathbb{P}\Big(|f(\btheta) - \mathbb{E}[f(\btheta)]| \geq t\Big) \leq 2 \exp\bigg(-\frac{\gamma t^2}{4L^2}\bigg).
\end{align*}
\end{lemma}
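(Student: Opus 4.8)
The plan is to prove this sub-Gaussian concentration bound via the classical two-step route: first establish a logarithmic Sobolev inequality (LSI) for $\mathbb{P}$, then run the Herbst argument to obtain a sub-Gaussian moment generating function. The starting point is the Bakry--\'Emery criterion: since $d\mathbb{P} = e^{-U}\,d\btheta$ with $\nabla^2 U(\btheta) \succeq \gamma\,\Ib_d$ uniformly, the measure $\mathbb{P}$ satisfies an LSI with constant $\gamma$, i.e. for every smooth $g$,
\begin{align*}
  \text{Ent}_{\mathbb{P}}(g^2) := \mathbb{E}\big[g^2 \log g^2\big] - \mathbb{E}[g^2]\log\mathbb{E}[g^2] \leq \frac{2}{\gamma}\,\mathbb{E}\big[\|\nabla g\|^2\big].
\end{align*}
I would invoke this as the known consequence of strong log-concavity, or, to make it self-contained, sketch its derivation through the $\Gamma_2$-calculus of the Langevin semigroup $P_t$ with generator $\Delta - \langle \nabla U, \nabla \cdot \rangle$, where the curvature lower bound $\gamma$ produces exactly the exponential entropy decay that yields the LSI constant.

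Next I would run the Herbst argument on the moment generating function $H(\lambda) = \mathbb{E}[e^{\lambda f}]$. By Rademacher's theorem an $L$-Lipschitz $f$ is differentiable almost everywhere with $\|\nabla f\| \leq L$, and a routine mollification reduces to smooth $f$. Applying the LSI to $g = e^{\lambda f/2}$ (so $g^2 = e^{\lambda f}$ and $\|\nabla g\|^2 = \tfrac{\lambda^2}{4}\|\nabla f\|^2 e^{\lambda f}$) gives
\begin{align*}
  \lambda H'(\lambda) - H(\lambda)\log H(\lambda) = \text{Ent}_{\mathbb{P}}\big(e^{\lambda f}\big) \leq \frac{2}{\gamma}\cdot\frac{\lambda^2}{4}\,\mathbb{E}\big[\|\nabla f\|^2 e^{\lambda f}\big] \leq \frac{\lambda^2 L^2}{2\gamma}\,H(\lambda).
\end{align*}
Dividing through by $\lambda^2 H(\lambda)$ identifies the left-hand side as the derivative of $\lambda \mapsto \lambda^{-1}\log H(\lambda)$, so this map has derivative at most $L^2/(2\gamma)$. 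Integrating from $0$, where the limit equals $\mathbb{E}[f]$, yields $\log H(\lambda) \leq \lambda\mathbb{E}[f] + L^2\lambda^2/(2\gamma)$, that is $\mathbb{E}[e^{\lambda(f-\mathbb{E}f)}] \leq e^{L^2\lambda^2/(2\gamma)}$ for all $\lambda \in \mathbb{R}$.

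Finally I would convert this into the tail bound by Chernoff: $\mathbb{P}(f-\mathbb{E}f \geq t) \leq \inf_{\lambda>0} e^{-\lambda t + L^2\lambda^2/(2\gamma)} = e^{-\gamma t^2/(2L^2)}$, optimized at $\lambda = \gamma t/L^2$. Applying the same estimate to $-f$ and taking a union bound gives $\mathbb{P}(|f-\mathbb{E}f| \geq t) \leq 2 e^{-\gamma t^2/(2L^2)} \leq 2 e^{-\gamma t^2/(4L^2)}$, which is the claimed inequality (in fact a sharper one, so the stated constant $4L^2$ leaves slack). The main obstacle is securing the LSI with the correct constant: the Bakry--\'Emery step is the real content, and a fully self-contained treatment requires the semigroup/$\Gamma_2$ machinery, whereas the Herbst computation and the Chernoff conversion are routine once the LSI is available.
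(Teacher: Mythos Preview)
Your proof is correct and follows the standard Bakry--\'Emery/Herbst route; note however that the paper does not prove this lemma at all---it is listed in the Auxiliary Lemmas appendix purely as a citation of Theorem~3.16 in \citet{wainwright2019high}, with no argument given. Your derivation (LSI from the curvature lower bound, then the Herbst differential inequality for $\lambda\mapsto\lambda^{-1}\log H(\lambda)$, then Chernoff) is exactly the textbook proof and in fact yields the sharper exponent $\gamma t^2/(2L^2)$, so there is nothing to compare against on the paper's side.
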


\section{Additional Experimental Details}
\label{appendix:sec:experiments}

We conducted the experiments on a Linux system, which was equipped with a single Intel(R) Core(TM) i9-14900K CPU with 32 threads and 128 GB of RAM. All methods were implemented using Python. The baseline algorithms used for comparison include: UCB, TS, and TS-SGLD. These implementations follow standard formulations, with TS-SGLD employing one-step SGLD updates per round using minibatches of past rewards. 

Subsequently, we detail the hyperparameter tuning and ablation experiments for TS-SA. We first conduct standardized experiments to ensure fair comparison across algorithms. We explore the best hyperparameter sets through Bayesian hyperparameter optimization \citep{akiba2019optuna}, which we focus on multi-arm Gaussian bandit problems under different reward gaps $\Delta=\{0.1, 0.5\}$ and different numbers of arms $K=\{10,50\}$. In this setting, we restrict the tuning of TS-SA to only two key parameters: the SGLD step size and the SA coefficient $\alpha$. Similarly, for each baseline algorithm, we tune only a small set of its core hyperparameters to control variance while preserving fairness. This standardized setup allows us to compare different methods under comparable optimization effort.

Moreover, we further perform a broader sensitivity study to investigate the individual contribution of each hyperparameter to overall performance. By varying one parameter at a time while keeping others fixed, we identify key thresholds and sensitivities that govern regret behavior. Our findings reveal which components are critical to tune (e.g., $\Omega$, $\mathcal{B}$, $c_1$, $\alpha$), and which exhibit stable behavior across a wide range of values (e.g., $c_2$, $c_3$), offering practical guidance for TS-SA deployments.

\subsection{Baseline Implementation}\label{appendix:subsec:baseline}

We evaluate our proposed method against several widely used baseline algorithms in MAB literature, including UCB, TS, $\epsilon$-TS, and TS-SGLD. Their implementations are summarized below for completeness.
\begin{itemize}[leftmargin=*,nosep]
  \item \textbf{UCB:} At round $t$, the reward estimation $\btheta_{a,t}$ for arm $a$ is given by 
  $\btheta_{a,t}=\hat{\mu}_a(t)+\tau\sqrt{{2\ln t}/{\mathcal T_a(t)}}$ where $\hat{\mu}_a(t)$ is the empirical mean of the rewards obtained from arm $a$ up to round $t$. We tune $\tau$ to control the estimate for each arm. 
  \item \textbf{TS:} After observing $\mathcal T_a(t)$ rewards from arm $a$ with sample mean $\hat{\mu}_a(t)$, the posterior distribution $p_{a}$ is also a normal distribution $N\left(\frac{{\mu_0}/{\sigma_0^2}+\mathcal T_a(t)\hat{\mu}_a(t)}{{1}/{\sigma_0^2}+\mathcal T_a(t)},\frac{\tau}{{1}/{\sigma_0^2}+\mathcal T_a(t)}\right)$, where $\mu_0$ denotes the prior mean and $\sigma_0^2$ the prior variance. We tune $\tau$ here to control the spread around the posterior mean.
  \item \textbf{$\epsilon$-TS:} With probability $1 - \epsilon$ ($\epsilon\in[0, 1)$), we draw $\btheta_{a,t}$ from the posterior $p_a$ as in TS. The posterior $p_a(t)$ is updated as a normal distribution according to the observed data of arm $a$. With probability $\epsilon$, we choose the arm with the highest expected mean of $p_a(t)$. Here we tune the temperature $\tau$ and the rate of greedy strategy $\epsilon$.  
  \item \textbf{TS-SGLD:} To obtain $\btheta_{a,t}$, we approximate the posterior distribution $p_{a}$ using SGLD. Given the log-posterior density $\log p_a(\btheta|\mathcal B)$ where $\mathcal B \subset \{X_{a, i}\}_{i=1}^{\mathcal T_a(t)}$ is the set of rewards uniformly sampled from rewards of arm $a$ up to round $t$, we adopt the update rule \eqref{eq:euler_update} to collect approximate samples from the last round of each arm and estimate rewards. For a fair comparison, we run SGLD for one step and set $\btheta_{a,t}=\btheta^{(1)}$. We here tune the step size $h$ and the temperature $\tau$.
\end{itemize}

\subsection{Sensitivity Study}

\begin{figure}[!htbp]
    \centering
    \includegraphics[width=0.98\linewidth]{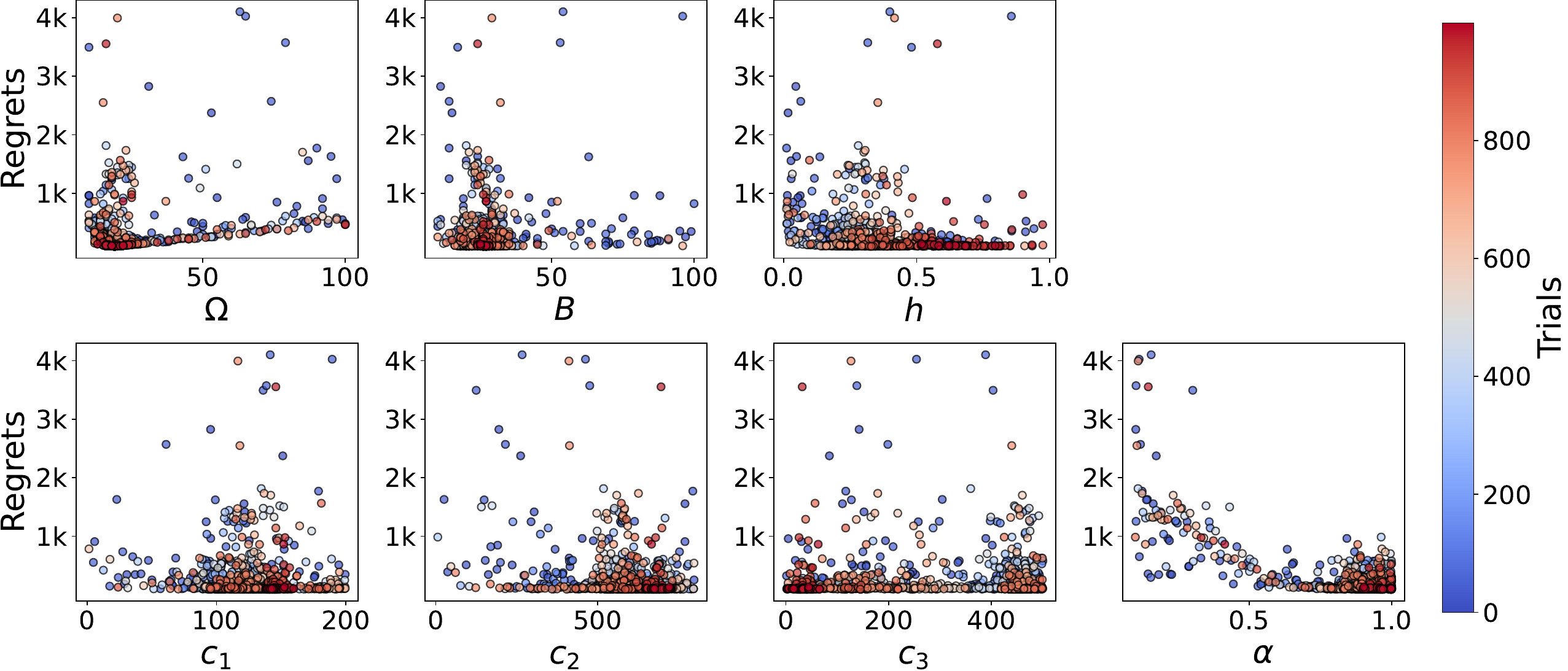}
    \caption{Slice plots of TS-SA hyperparameter tuning using Bayesian hyperparameter optimization, under fixed settings $K=10$, $\Delta=0.5$, $\tau=1.0$, and $N=1$. The $x$-axis denotes the tuning range of a single hyperparameter, while the $y$-axis shows the corresponding cumulative regret averaged over 50 independent runs. The color gradient from blue (early trials) to red (later trials) reflects the optimization trajectory. Regret decreases as Bayesian hyperparameter optimization identifies favorable values for warm-up pulls, batch size, and SA parameters $(c_1, c_2, \alpha)$, while the step size $h$ and offset $c_3$ exhibit relatively weak influence.}
    \label{fig:optuna}
\end{figure}

This subsection presents a sensitivity study on the hyperparameters of the TS-SA algorithm. The goal is to identify which parameters are critical to tune for optimal performance and which have minimal influence. This allows us to fix their values in our main experiments to reduce complexity. The study highlights the robustness of the algorithm to certain parameter choices.

\Cref{fig:optuna} visualizes the hyperparameter optimization process of TS-SA using Bayesian hyperparameter optimization. The x-axis denotes the selected range of each hyperparameter, while the y-axis represents the cumulative regret averaged over 50 independent trials. Each point represents a complete hyperparameter configuration, where all parameters are jointly selected. The color gradient from blue to red reflects the progression of the optimization timeline, where blue indicates early trials and red corresponds to later trials as the Bayesian hyperparameter optimization refines its search based on observed performance.

These plots offer valuable insights into how the optimizer explores and adapts over the course of tuning. Notably, the plots for warm-up pulls, batch size, and stochastic approximation exponent decay parameters ($c_1$, $c_2$, and $\alpha$) exhibit strong concentration toward regions associated with lower regret as the trials evolve from blue to red. This convergence behavior highlights the importance of these hyperparameters and confirms that Bayesian hyperparameter optimization effectively identifies their optimal ranges.

In contrast, the slice plots for the Langevin step size $h$ and offset parameters $c_3$ to adjust SA step size appear relatively flat, with no consistent trend in regret across the optimization process. The absence of directional improvement suggests that these hyperparameters have less impact on performance within the tested range and can be fixed to reduce tuning complexity.

\subsection{Ablation Studies}\label{sec:ablation}

To ensure competitive performance, we tune TS-SA's hyperparameters using Bayesian hyperparameter optimization \citep{akiba2019optuna}. The parameters include the Langevin step size $h$, SA step size parameters $(c_1, c_2, c_3, \alpha)$ via $\gamma = \tfrac{c_1}{c_2 \mathcal T_a(t)^\alpha + c_3}$, temperature $\tau$, mini-batch size $\mathcal{B}$, and warm-start scale $\Omega$\footnote{The theoretical analysis in \Cref{sec:theortical_analysis} assumes a canonical SA step size $\gamma = \frac{1}{\mathcal{T}_a(t)}$, with $\mathcal{B}=\Omega=1$ for analytical tractability. In experiments, we optimize these parameters to maximize empirical performance.}. The tuning proceeds in two stages: a broad search over 1,000 trials to explore wide parameter ranges, followed by a refined 1,000-trial search within narrowed intervals. Tuning is conducted under two settings: fixed $\tau = 1.0$ and tunable $\tau$. Each hyperparameter configuration is evaluated over 50 independent runs to compute average regret. Once the optimal parameters are identified, we run all methods for $T=1,000$ rounds and report regret averaged over 100 independent trials. 

We further conduct ablation studies to evaluate the sensitivity of individual hyperparameters in our TS-SA algorithm. These experiments utilize a 10-arm Gaussian bandit problem with reward gap $\Delta=0.5$ and temperature $\tau=1.0$.

\textbf{Experimental Setup.} To systematically assess each component's impact, we establish a default configuration where all parameters except the one under investigation remain fixed:
$$
N=1,\ h=0.532,\
c_{1}=144.07,\ c_{2}=677.88,\ c_{3}=40.02,\ \alpha=0.999,\ 
\mathcal B=27,\ \Omega=19.
$$
For evaluating the effects of coefficients ($c_{1}$, $c_{2}$, $c_{3}$) and decay exponent ($\alpha$), we reduce the batch size to $\mathcal{B} = 5$ to better isolate their individual contributions to algorithm performance. Similarly, when investigating the impact of the number of iterations $N$ in \Cref{alg:TS_SA}, we decrease the step size to $h=0.05$ to maintain a meaningful regret scale across different iteration counts.

\textbf{Results and Findings.} Our experiments reveal that both warm-up pulls ($\Omega$) and batch size ($\mathcal{B}$) exhibit a clear threshold behavior. When these parameters are set below 20, the algorithm demonstrates near-linear regret patterns. However, once values exceed this threshold, the regret behavior becomes desirably sublinear. 

Among the parameters of the sampling step (SGLD) and the averaging step (SA), we observe varying degrees of sensitivity:
\begin{itemize}[leftmargin=15pt]
\item The step size ($h$), coefficient ($c_1$), and decay exponent ($\alpha$) significantly influence algorithm performance, requiring careful tuning to achieve optimal regret minimization.
\item In contrast, coefficients $c_2$ and $c_3$ demonstrate minimal impact on overall performance, indicating that TS-SA exhibits robust behavior with respect to these parameters across their tested ranges.
\item The number of inner iterations ($N$) shows a consistently positive relationship with performance, although with diminishing returns at higher values. This suggests that while increasing $N$ improves results, there exists a practical upper limit beyond which additional iterations yield minimal benefits.
\end{itemize}

\begin{figure*}
  \centering
    \begin{subfigure}[b]{0.24\textwidth}
    \centering
    \includegraphics[width=\textwidth]{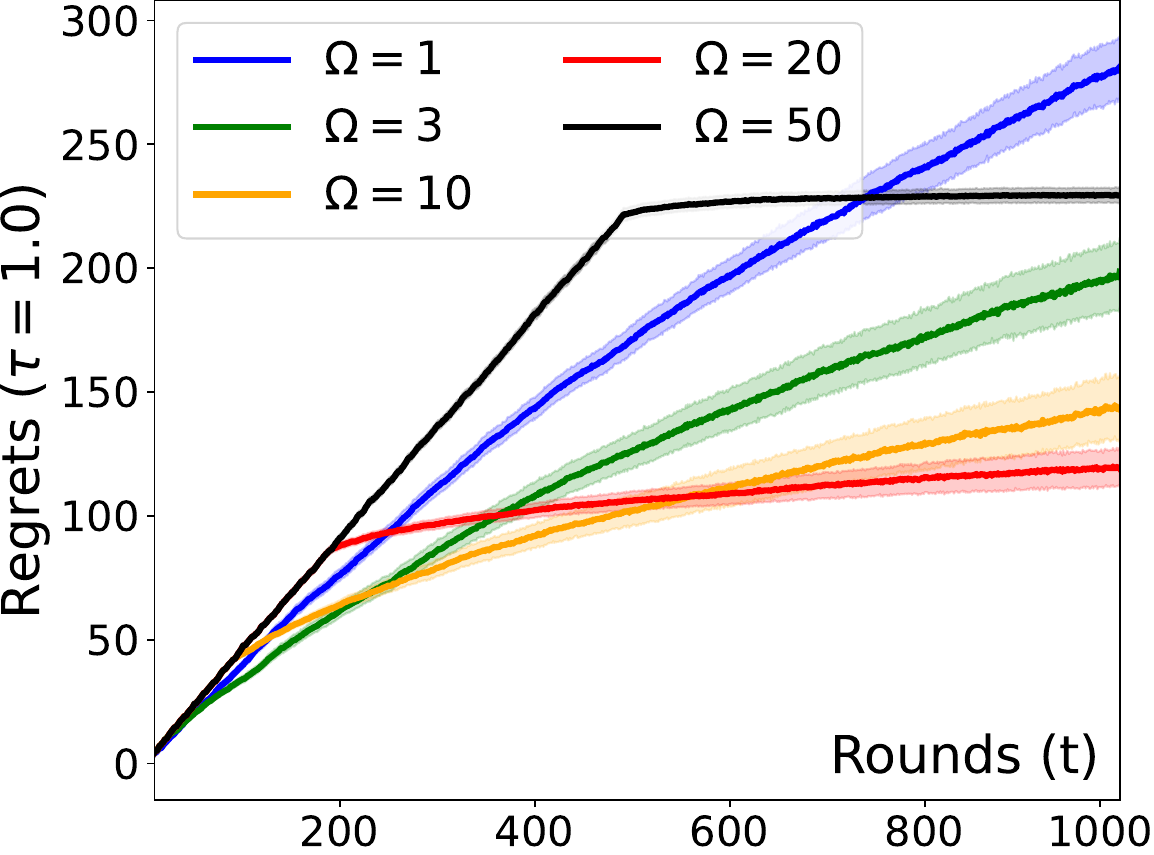}
    \caption{$\Omega$.}
    \label{fig:test:warm}
  \end{subfigure}
    \begin{subfigure}[b]{0.24\textwidth}
    \centering
    \includegraphics[width=\textwidth]{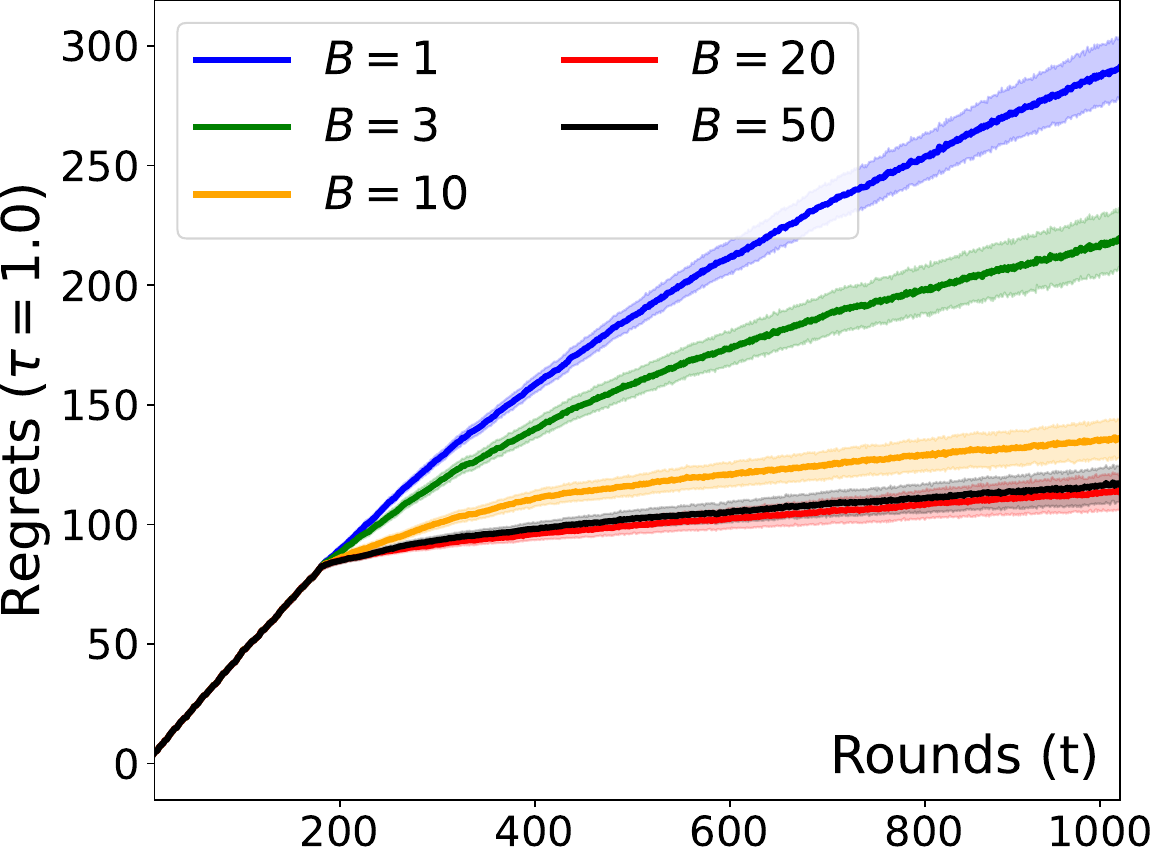}
    \caption{$\mathcal B$.}
    \label{fig:test:batch}
  \end{subfigure}
    \begin{subfigure}[b]{0.24\textwidth}
    \centering
    \includegraphics[width=\textwidth]{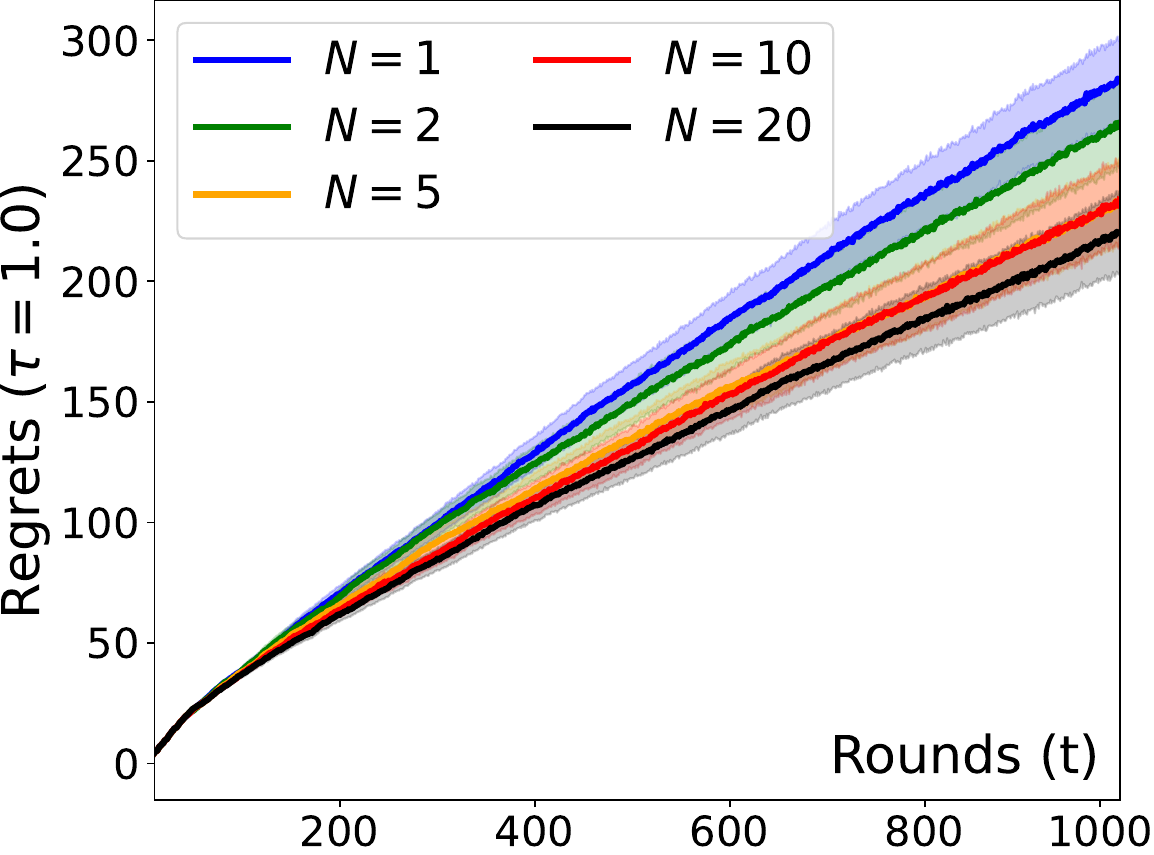}
    \caption{$N$.}
    \label{fig:test:niter}
  \end{subfigure}
    \begin{subfigure}[b]{0.24\textwidth}
    \centering
    \includegraphics[width=\textwidth]{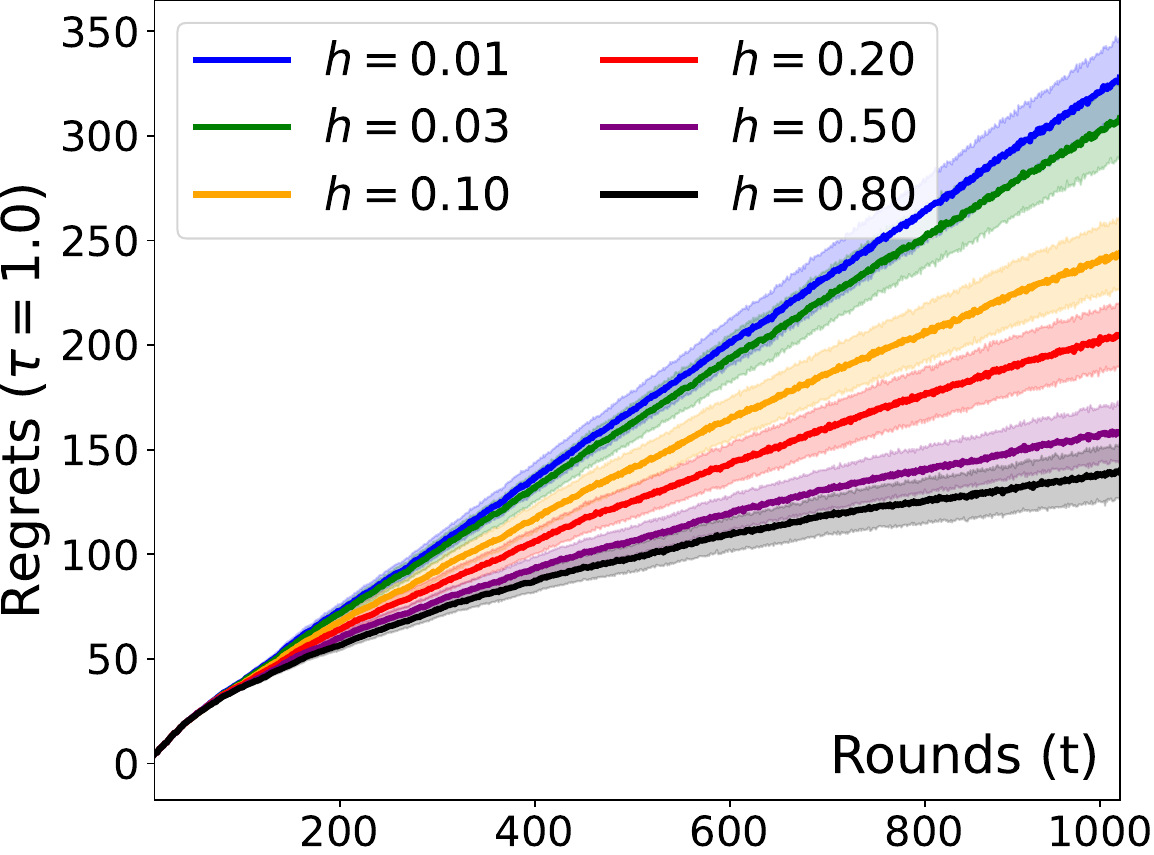}
    \caption{$h$.}
    \label{fig:test:step}
  \end{subfigure}

  \centering
    \begin{subfigure}[b]{0.24\textwidth}
    \centering
    \includegraphics[width=\textwidth]{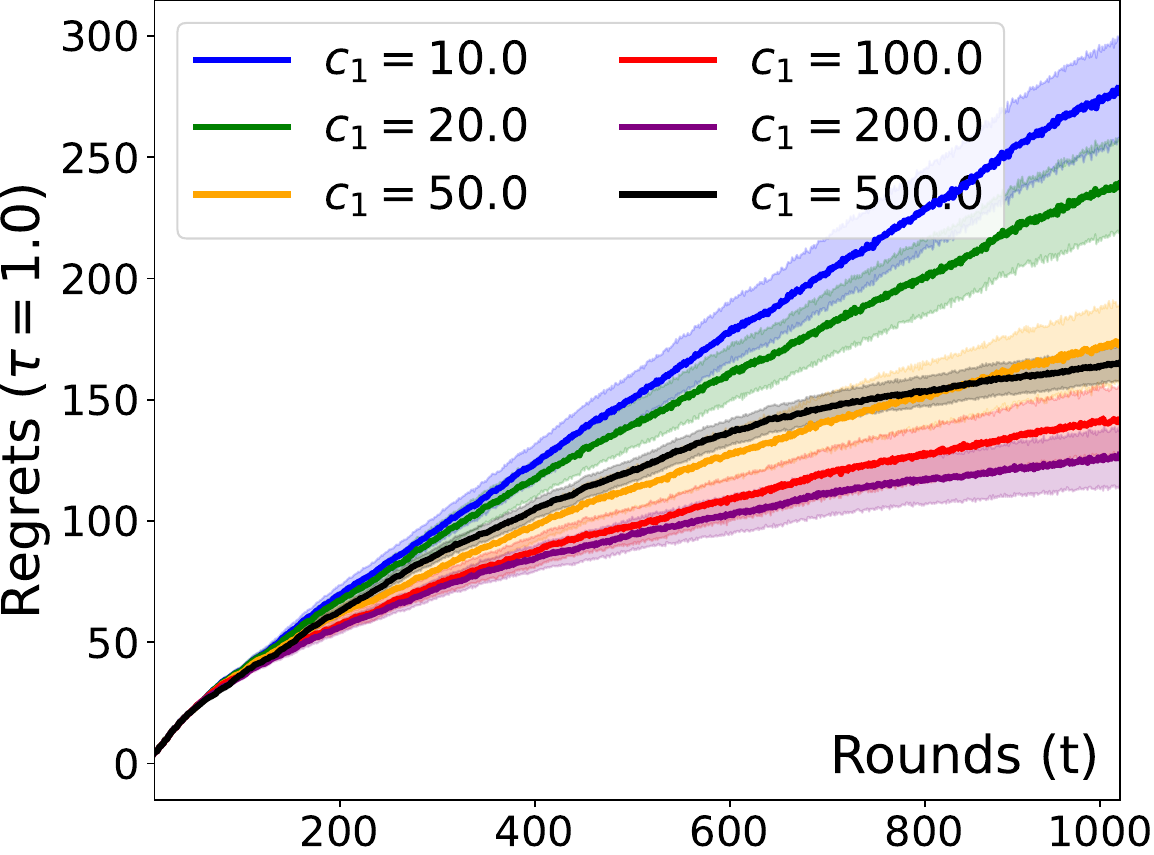}
    \caption{$c_1$.}
    \label{fig:test:c1}
  \end{subfigure}
    \begin{subfigure}[b]{0.24\textwidth}
    \centering
    \includegraphics[width=\textwidth]{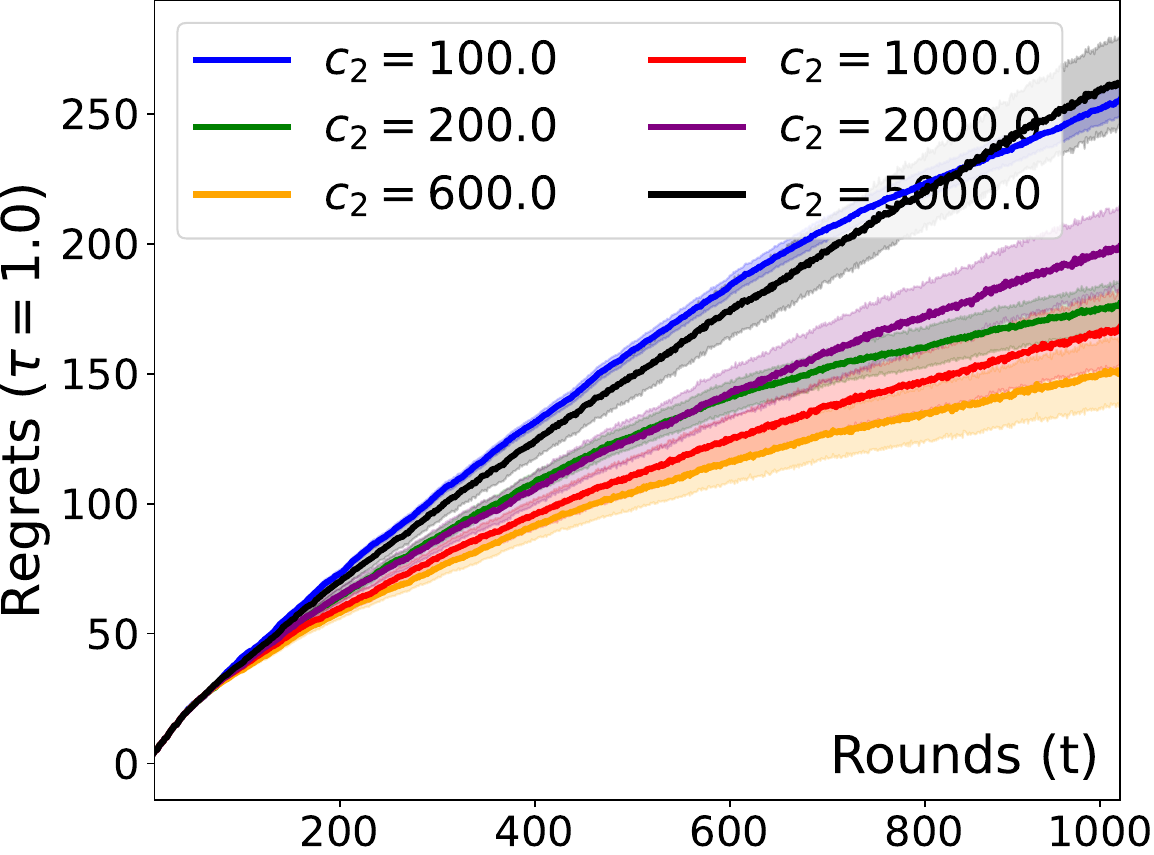}
    \caption{$c_2$.}
    \label{fig:test:c2}
  \end{subfigure}
    \begin{subfigure}[b]{0.24\textwidth}
    \centering
    \includegraphics[width=\textwidth]{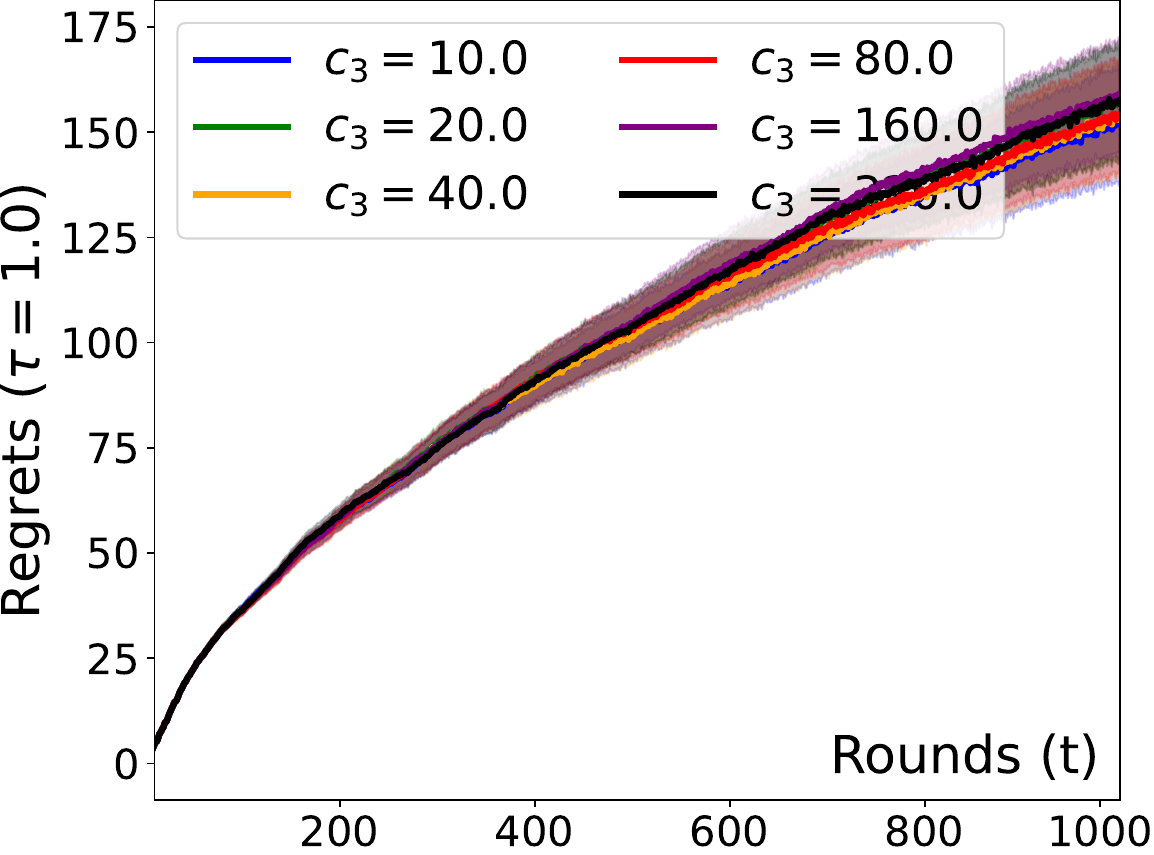}
    \caption{$c_3$.}
    \label{fig:test:c3}
  \end{subfigure}
    \begin{subfigure}[b]{0.24\textwidth}
    \centering
    \includegraphics[width=\textwidth]{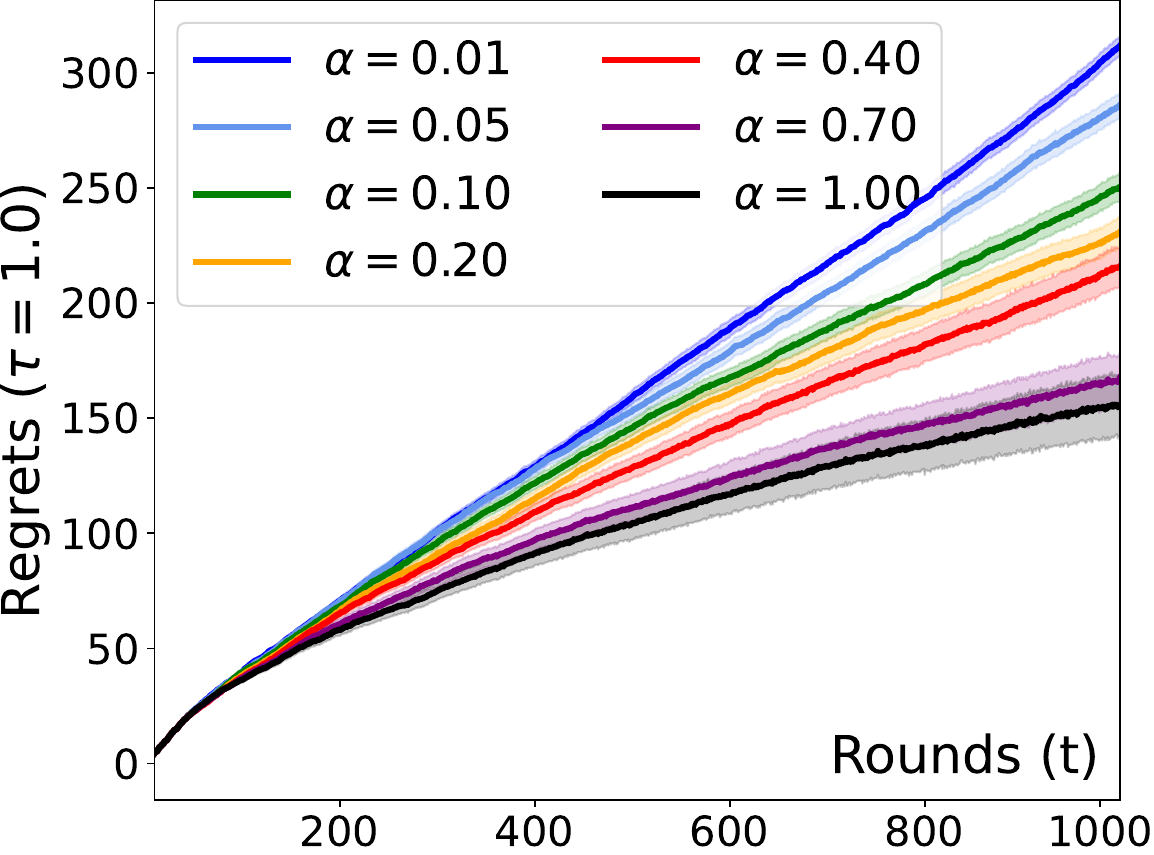}
    \caption{$\alpha$.}
    \label{fig:test:alpha}
  \end{subfigure}

  \caption{Ablation study of key hyperparameters in TS-SA ($K=10$, $\Delta=0.5$). The x-axis denotes the number of rounds, while the y-axis represents the cumulative regrets over 100 independent trials. Each subfigure plots the cumulative regret as a function of a single parameter while keeping others fixed. (a–b): Warm-up pulls ($\Omega$) and batch size ($\mathcal{B}$) show clear threshold behavior around 20, below which performance degrades significantly. (c): Increasing inner iterations ($N$) improves performance with diminishing returns. (d–h): Sampling-related parameters ($h$, $c_1$, $\alpha$) exhibit notable sensitivity and must be tuned carefully, while $c_2$ and $c_3$ show minor impact, which indicates robustness with respect to these choices.}\label{fig:ablation1}
\end{figure*}

\bibliography{mybib}

\begin{thebibliography}{54}
\expandafter\ifx\csname natexlab\endcsname\relax\def\natexlab#1{#1}\fi
\expandafter\ifx\csname url\endcsname\relax
  \def\url#1{\texttt{#1}}\fi
\expandafter\ifx\csname urlprefix\endcsname\relax\def\urlprefix{URL }\fi

\bibitem[{Agrawal and Goyal(2012)}]{agrawal2012analysis}
\textsc{Agrawal, S.} and \textsc{Goyal, N.} (2012).
\newblock {Analysis of Thompson Sampling for the Multi-Armed Bandit Problem}.
\newblock In \textit{Proc. of Conference on Learning Theory (COLT)}. {JMLR Workshop and Conference Proceedings}.

\bibitem[{Agrawal and Goyal(2017)}]{agrawal2017near}
\textsc{Agrawal, S.} and \textsc{Goyal, N.} (2017).
\newblock {Near-Optimal Regret Bounds for Thompson Sampling}.
\newblock \textit{Journal of the ACM} \textbf{64} 1--24.

\bibitem[{Akiba et~al.(2019)Akiba, Sano, Yanase, Ohta and Koyama}]{akiba2019optuna}
\textsc{Akiba, T.}, \textsc{Sano, S.}, \textsc{Yanase, T.}, \textsc{Ohta, T.} and \textsc{Koyama, M.} (2019).
\newblock {Optuna: A Next-Generation Hyperparameter Optimization Framework}.
\newblock In \textit{{ACM SIGKDD International Conference on Knowledge Discovery and Data Mining}}.

\bibitem[{Auer et~al.(2002)Auer, Cesa-Bianchi and Fischer}]{auer2002finite}
\textsc{Auer, P.}, \textsc{Cesa-Bianchi, N.} and \textsc{Fischer, P.} (2002).
\newblock {Finite-Time Analysis of the Multiarmed Bandit Problem}.
\newblock \textit{{Machine Learning}} \textbf{47} 235--256.

\bibitem[{Benveniste et~al.(2012)Benveniste, Métivier and Priouret}]{benveniste2012adaptive}
\textsc{Benveniste, A.}, \textsc{Métivier, M.} and \textsc{Priouret, P.} (2012).
\newblock \textit{{Adaptive Algorithms and Stochastic Approximations}}, vol.~22.
\newblock {Springer Science \& Business Media}.

\bibitem[{Bertsekas and Tsitsiklis(1996)}]{bertsekas1996neuro}
\textsc{Bertsekas, D.} and \textsc{Tsitsiklis, J.~N.} (1996).
\newblock \textit{{Neuro-Dynamic Programming}}.
\newblock {Athena Scientific}.

\bibitem[{Borkar and Borkar(2008)}]{borkar2008stochastic}
\textsc{Borkar, V.~S.} and \textsc{Borkar, V.~S.} (2008).
\newblock \textit{{Stochastic Approximation: A Dynamical Systems Viewpoint}}, vol.~9.
\newblock Springer.

\bibitem[{Borkar and Meyn(2000)}]{borkar2000ode}
\textsc{Borkar, V.~S.} and \textsc{Meyn, S.~P.} (2000).
\newblock {The ODE Method for Convergence of Stochastic Approximation and Reinforcement Learning}.
\newblock \textit{{SIAM Journal on Control and Optimization}} \textbf{38} 447--469.

\bibitem[{Chapelle and Li(2011)}]{chapelle2011empirical}
\textsc{Chapelle, O.} and \textsc{Li, L.} (2011).
\newblock {An Empirical Evaluation of Thompson Sampling}.
\newblock \textit{Advances in Neural Information Processing Systems (NeurIPS)} \textbf{24} 2249--2257.

\bibitem[{Clavier et~al.(2023)Clavier, Huix and Durmus}]{clavier2023vits}
\textsc{Clavier, P.}, \textsc{Huix, T.} and \textsc{Durmus, A.} (2023).
\newblock {VITS: Variational Inference Thompson Sampling for Contextual Bandits}.
\newblock \textit{Proc. of the International Conference on Machine Learning (ICML)} .

\bibitem[{Dalalyan(2017)}]{dalalyan2017theoretical}
\textsc{Dalalyan, A.~S.} (2017).
\newblock {Theoretical Guarantees for Approximate Sampling from Smooth and Log-Concave Densities}.
\newblock \textit{{Journal of the Royal Statistical Society Series B: Statistical Methodology}} \textbf{79} 651--676.

\bibitem[{Deng et~al.(2022)Deng, Lin and Liang}]{deng2022adaptively}
\textsc{Deng, W.}, \textsc{Lin, G.} and \textsc{Liang, F.} (2022).
\newblock An adaptively weighted stochastic gradient mcmc algorithm for monte carlo simulation and global optimization.
\newblock \textit{Statistics and Computing} \textbf{32} 58.

\bibitem[{Durmus et~al.(2024)Durmus, Moulines, Naumov and Samsonov}]{durmus2024finite}
\textsc{Durmus, A.}, \textsc{Moulines, E.}, \textsc{Naumov, A.} and \textsc{Samsonov, S.} (2024).
\newblock {Finite-Time High-Probability Bounds for Polyak--Ruppert Averaged Iterates of Linear Stochastic Approximation}.
\newblock \textit{{Mathematics of Operations Research}} .

\bibitem[{Garivier et~al.(2016)Garivier, Lattimore and Kaufmann}]{garivier2016explore}
\textsc{Garivier, A.}, \textsc{Lattimore, T.} and \textsc{Kaufmann, E.} (2016).
\newblock {On Explore-Then-Commit Strategies}.
\newblock \textit{Advances in Neural Information Processing Systems (NeurIPS)} \textbf{29}.

\bibitem[{Haddad and Chellaboina(2008)}]{haddad2008nonlinear}
\textsc{Haddad, W.~M.} and \textsc{Chellaboina, V.} (2008).
\newblock \textit{{Nonlinear Dynamical Systems and Control: A Lyapunov-Based Approach}}.
\newblock {Princeton University Press}.

\bibitem[{Harold et~al.(1997)Harold, Kushner and Yin}]{harold1997stochastic}
\textsc{Harold, J.}, \textsc{Kushner, G.} and \textsc{Yin, G.} (1997).
\newblock {Stochastic Approximation and Recursive Algorithm and Applications}.
\newblock \textit{{Application of Mathematics}} \textbf{35}.

\bibitem[{Hsu et~al.(2024)Hsu, Wang, Pajic and Xu}]{hsu2024randomized}
\textsc{Hsu, H.-L.}, \textsc{Wang, W.}, \textsc{Pajic, M.} and \textsc{Xu, P.} (2024).
\newblock Randomized exploration in cooperative multi-agent reinforcement learning.
\newblock \textit{Advances in Neural Information Processing Systems} \textbf{37} 74617--74689.

\bibitem[{Huix et~al.(2023)Huix, Zhang and Durmus}]{huix2023tight}
\textsc{Huix, T.}, \textsc{Zhang, M.} and \textsc{Durmus, A.} (2023).
\newblock {Tight Regret and Complexity Bounds for Thompson Sampling via Langevin Monte Carlo}.
\newblock In \textit{International Conference on Artificial Intelligence and Statistics}. PMLR.

\bibitem[{Ishfaq et~al.(2024{\natexlab{a}})Ishfaq, Lan, Xu, Mahmood, Precup, Anandkumar and Azizzadenesheli}]{ishfaq2023provable}
\textsc{Ishfaq, H.}, \textsc{Lan, Q.}, \textsc{Xu, P.}, \textsc{Mahmood, A.~R.}, \textsc{Precup, D.}, \textsc{Anandkumar, A.} and \textsc{Azizzadenesheli, K.} (2024{\natexlab{a}}).
\newblock {Provable and Practical: Efficient Exploration in Reinforcement Learning via Langevin Monte Carlo}.
\newblock In \textit{Proc. of the International Conference on Learning Representation (ICLR)}.

\bibitem[{Ishfaq et~al.(2024{\natexlab{b}})Ishfaq, Tan, Yang, Lan, Lu, Mahmood, Precup and Xu}]{ishfaq2024more}
\textsc{Ishfaq, H.}, \textsc{Tan, Y.}, \textsc{Yang, Y.}, \textsc{Lan, Q.}, \textsc{Lu, J.}, \textsc{Mahmood, A.~R.}, \textsc{Precup, D.} and \textsc{Xu, P.} (2024{\natexlab{b}}).
\newblock More efficient randomized exploration for reinforcement learning via approximate sampling.
\newblock \textit{{Reinforcement Learning Journal}} \textbf{3} 1211--1235.

\bibitem[{Janz et~al.(2024)Janz, Litvak and Szepesv{\'a}ri}]{janz2024ensemble}
\textsc{Janz, D.}, \textsc{Litvak, A.} and \textsc{Szepesv{\'a}ri, C.} (2024).
\newblock Ensemble sampling for linear bandits: small ensembles suffice.
\newblock \textit{Advances in Neural Information Processing Systems} \textbf{37} 23679--23704.

\bibitem[{Jin et~al.(2019)Jin, Netrapalli, Ge, Kakade and Jordan}]{jin2019short}
\textsc{Jin, C.}, \textsc{Netrapalli, P.}, \textsc{Ge, R.}, \textsc{Kakade, S.~M.} and \textsc{Jordan, M.~I.} (2019).
\newblock {A Short Note on Concentration Inequalities for Random Vectors with Subgaussian Norm}.
\newblock \textit{arXiv preprint arXiv:1902.03736} .

\bibitem[{Jin et~al.(2021)Jin, Xu, Shi, Xiao and Gu}]{jin2021mots}
\textsc{Jin, T.}, \textsc{Xu, P.}, \textsc{Shi, J.}, \textsc{Xiao, X.} and \textsc{Gu, Q.} (2021).
\newblock {MOTS: Minimax Optimal Thompson Sampling}.
\newblock In \textit{Proc. of the International Conference on Machine Learning (ICML)}. PMLR.

\bibitem[{Jin et~al.(2022)Jin, Xu, Xiao and Anandkumar}]{jin2022finite}
\textsc{Jin, T.}, \textsc{Xu, P.}, \textsc{Xiao, X.} and \textsc{Anandkumar, A.} (2022).
\newblock {Finite-Time Regret of Thompson Sampling Algorithms for Exponential Family Multi-armed Bandits}.
\newblock \textit{Advances in Neural Information Processing Systems (NeurIPS)} \textbf{35} 38475--38487.

\bibitem[{Jin et~al.(2023)Jin, Yang, Xiao and Xu}]{jin2023thompson}
\textsc{Jin, T.}, \textsc{Yang, X.}, \textsc{Xiao, X.} and \textsc{Xu, P.} (2023).
\newblock {Thompson Sampling with Less Exploration Is Fast and Optimal}.
\newblock In \textit{Proc. of the International Conference on Machine Learning (ICML)}. PMLR.

\bibitem[{Karbasi et~al.(2023)Karbasi, Kuang, Ma and Mitra}]{karbasi2023langevin}
\textsc{Karbasi, A.}, \textsc{Kuang, N.~L.}, \textsc{Ma, Y.-A.} and \textsc{Mitra, S.} (2023).
\newblock {Langevin Thompson Sampling with Logarithmic Communication: Bandits and Reinforcement Learning}.
\newblock \textit{Proc. of the International Conference on Machine Learning (ICML)} .

\bibitem[{Korda et~al.(2013)Korda, Kaufmann and Munos}]{korda2013thompson}
\textsc{Korda, N.}, \textsc{Kaufmann, E.} and \textsc{Munos, R.} (2013).
\newblock {Thompson Sampling for 1-Dimensional Exponential Family Bandits}.
\newblock \textit{Advances in Neural Information Processing Systems (NeurIPS)} \textbf{26}.

\bibitem[{Kushner and Clark(2012)}]{kushner2012stochastic}
\textsc{Kushner, H.~J.} and \textsc{Clark, D.~S.} (2012).
\newblock \textit{{Stochastic Approximation Methods for Constrained and Unconstrained Systems}}, vol.~26.
\newblock {Springer Science \& Business Media}.

\bibitem[{Kveton et~al.(2020)Kveton, Szepesvari, Ghavamzadeh and Boutilier}]{kveton2019randomized}
\textsc{Kveton, B.}, \textsc{Szepesvari, C.}, \textsc{Ghavamzadeh, M.} and \textsc{Boutilier, C.} (2020).
\newblock {Randomized Exploration in Generalized Linear Bandits}.
\newblock \textit{Proc. of the International Conference on Artificial Intelligence and Statistics (AISTATS)} .

\bibitem[{Lai(1987)}]{lai1987adaptive}
\textsc{Lai, T.~L.} (1987).
\newblock Adaptive treatment allocation and the multi-armed bandit problem.
\newblock \textit{{The Annals of Statistics}}  1091--1114.

\bibitem[{Lan(2020)}]{lan2020first}
\textsc{Lan, G.} (2020).
\newblock \textit{{First-Order and Stochastic Optimization Methods for Machine Learning}}, vol.~1.
\newblock Springer.

\bibitem[{Lattimore and Szepesv{\'a}ri(2020)}]{lattimore2020bandit}
\textsc{Lattimore, T.} and \textsc{Szepesv{\'a}ri, C.} (2020).
\newblock \textit{{Bandit Algorithms}}.
\newblock Cambridge University Press.

\bibitem[{Ledoux(2001)}]{ledoux2001concentration}
\textsc{Ledoux, M.} (2001).
\newblock \textit{{The Concentration of Measure Phenomenon}}.
\newblock 89, {American Mathematical Society}.

\bibitem[{Lee and Oh(2024)}]{lee2024improved}
\textsc{Lee, H.} and \textsc{Oh, M.-h.} (2024).
\newblock Improved regret of linear ensemble sampling.
\newblock \textit{Advances in Neural Information Processing Systems} \textbf{37} 92803--92831.

\bibitem[{Levin and Peres(2017)}]{levin2017markov}
\textsc{Levin, D.~A.} and \textsc{Peres, Y.} (2017).
\newblock \textit{{Markov Chains and Mixing Times}}, vol. 107.
\newblock {American Mathematical Society}.

\bibitem[{Liang et~al.(2007)Liang, Liu and Carroll}]{liang2007stochastic}
\textsc{Liang, F.}, \textsc{Liu, C.} and \textsc{Carroll, R.~J.} (2007).
\newblock {Stochastic Approximation in Monte Carlo Computation}.
\newblock \textit{{Journal of the American Statistical Association}} \textbf{102} 305--320.

\bibitem[{Lu and Van~Roy(2017)}]{lu2017ensemble}
\textsc{Lu, X.} and \textsc{Van~Roy, B.} (2017).
\newblock {Ensemble Sampling}.
\newblock \textit{Advances in Neural Information Processing Systems (NeurIPS)} \textbf{30}.

\bibitem[{Mazumdar et~al.(2020)Mazumdar, Pacchiano, Ma, Bartlett and Jordan}]{mazumdar2020thompson}
\textsc{Mazumdar, E.}, \textsc{Pacchiano, A.}, \textsc{Ma, Y.-a.}, \textsc{Bartlett, P.~L.} and \textsc{Jordan, M.~I.} (2020).
\newblock {On Thompson Sampling with Langevin Algorithms}.
\newblock \textit{Proc. of the International Conference on Machine Learning (ICML)} .

\bibitem[{Mou et~al.(2022)Mou, Ho, Wainwright, Bartlett and Jordan}]{mou2022diffusionprocessperspectiveposterior}
\textsc{Mou, W.}, \textsc{Ho, N.}, \textsc{Wainwright, M.~J.}, \textsc{Bartlett, P.} and \textsc{Jordan, M.~I.} (2022).
\newblock {A Diffusion Process Perspective on Posterior Contraction Rates for Parameters}.

\bibitem[{Phan et~al.(2019)Phan, Abbasi~Yadkori and Domke}]{phan2019thompson}
\textsc{Phan, M.}, \textsc{Abbasi~Yadkori, Y.} and \textsc{Domke, J.} (2019).
\newblock {Thompson Sampling and Approximate Inference}.
\newblock \textit{Advances in Neural Information Processing Systems (NeurIPS)} \textbf{32}.

\bibitem[{Qin et~al.(2022)Qin, Wen, Lu and Van~Roy}]{qin2022analysis}
\textsc{Qin, C.}, \textsc{Wen, Z.}, \textsc{Lu, X.} and \textsc{Van~Roy, B.} (2022).
\newblock An analysis of ensemble sampling.
\newblock \textit{Advances in Neural Information Processing Systems} \textbf{35} 21602--21614.

\bibitem[{Ren(2008)}]{ren2008burkholder}
\textsc{Ren, Y.-F.} (2008).
\newblock {On the Burkholder-Davis-Gundy Inequalities for Continuous Martingales}.
\newblock \textit{{Statistics \& Probability Letters}} \textbf{78} 3034--3039.

\bibitem[{Robbins and Monro(1951)}]{robbins1951stochastic}
\textsc{Robbins, H.} and \textsc{Monro, S.} (1951).
\newblock {A Stochastic Approximation Method}.
\newblock \textit{{The Annals of Mathematical Statistics}}  400--407.

\bibitem[{Russo(2019)}]{russo2019worst}
\textsc{Russo, D.} (2019).
\newblock {Worst-Case Regret Bounds for Exploration via Randomized Value Functions}.
\newblock \textit{Advances in Neural Information Processing Systems (NeurIPS)} \textbf{32}.

\bibitem[{Slivkins et~al.(2019)}]{slivkins2019introduction}
\textsc{Slivkins, A.} \textsc{et~al.} (2019).
\newblock {Introduction to Multi-Armed Bandits}.
\newblock \textit{{Foundations and Trends{\textregistered} in Machine Learning}} \textbf{12} 1--286.

\bibitem[{Spall(2005)}]{spall2005introduction}
\textsc{Spall, J.~C.} (2005).
\newblock \textit{{Introduction to Stochastic Search and Optimization: Estimation, Simulation, and Control}}.
\newblock {John Wiley \& Sons}.

\bibitem[{Srikant and Ying(2019)}]{srikant2019finite}
\textsc{Srikant, R.} and \textsc{Ying, L.} (2019).
\newblock {Finite-Time Error Bounds for Linear Stochastic Approximation and TD Learning}.
\newblock In \textit{Proc. of Conference on Learning Theory (COLT)}. PMLR.

\bibitem[{Thompson(1933)}]{thompson1933likelihood}
\textsc{Thompson, W.~R.} (1933).
\newblock {On the Likelihood That One Unknown Probability Exceeds Another in View of the Evidence of Two Samples}.
\newblock \textit{Biometrika} \textbf{25} 285--294.

\bibitem[{Tsitsiklis(1994)}]{tsitsiklis1994asynchronous}
\textsc{Tsitsiklis, J.~N.} (1994).
\newblock {Asynchronous Stochastic Approximation and Q-Learning}.
\newblock \textit{{Machine Learning}} \textbf{16} 185--202.

\bibitem[{Vollmer et~al.(2016)Vollmer, Zygalakis and Teh}]{vollmer2016exploration}
\textsc{Vollmer, S.~J.}, \textsc{Zygalakis, K.~C.} and \textsc{Teh, Y.~W.} (2016).
\newblock Exploration of the (non-) asymptotic bias and variance of stochastic gradient langevin dynamics.
\newblock \textit{Journal of Machine Learning Research} \textbf{17} 1--48.

\bibitem[{Wainwright(2019)}]{wainwright2019high}
\textsc{Wainwright, M.~J.} (2019).
\newblock \textit{{High-Dimensional Statistics: A Non-Asymptotic Viewpoint}}, vol.~48.
\newblock {Cambridge University Press}.

\bibitem[{Welling and Teh(2011)}]{welling2011bayesian}
\textsc{Welling, M.} and \textsc{Teh, Y.~W.} (2011).
\newblock Bayesian learning via stochastic gradient langevin dynamics.
\newblock In \textit{Proceedings of the 28th international conference on machine learning (ICML-11)}.

\bibitem[{Xu et~al.(2022)Xu, Zheng, Mazumdar, Azizzadenesheli and Anandkumar}]{xu2022langevin}
\textsc{Xu, P.}, \textsc{Zheng, H.}, \textsc{Mazumdar, E.~V.}, \textsc{Azizzadenesheli, K.} and \textsc{Anandkumar, A.} (2022).
\newblock {Langevin Monte Carlo for Contextual Bandits}.
\newblock In \textit{Proc. of the International Conference on Machine Learning (ICML)}. PMLR.

\bibitem[{Zheng et~al.(2024)Zheng, Deng, Moya and Lin}]{zheng2024accelerating}
\textsc{Zheng, H.}, \textsc{Deng, W.}, \textsc{Moya, C.} and \textsc{Lin, G.} (2024).
\newblock {Accelerating Approximate Thompson Sampling with Underdamped Langevin Monte Carlo}.
\newblock In \textit{International Conference on Artificial Intelligence and Statistics}.

\end{thebibliography}
\bibliographystyle{ims}

\end{document}